%% file: main.tex
\documentclass{article}

\usepackage{microtype}
\usepackage{graphicx}
\usepackage{subcaption}
\usepackage{booktabs} 

\usepackage{xurl}
\usepackage{hyperref}

\usepackage{booktabs}
\usepackage{tabularx}
\usepackage{threeparttable}

\usepackage[accepted]{icml2026}

\usepackage{amsmath}
\usepackage{amssymb}
\usepackage{mathtools}
\usepackage{amsthm}
\usepackage{microtype}
\usepackage{enumitem}

\usepackage[utf8]{inputenc}
\usepackage[T1]{fontenc}
\usepackage{xcolor}
\usepackage{bbm}

\usepackage[capitalize,noabbrev]{cleveref}

\input{src/macros}

\graphicspath{{src/pic/}}

\icmltitlerunning{A Rate-Distortion Theorem for Membership Testing}

\begin{document}

\twocolumn[
\icmltitle{Hallucination is a Consequence of Space-Optimality:\\ A Rate-Distortion Theorem for Membership Testing}



\begin{icmlauthorlist}
\icmlauthor{Anxin Guo}{nu}
\icmlauthor{Jingwei Li}{col}
\end{icmlauthorlist}

\icmlaffiliation{nu}{Computer Science Department, Northwestern University}
\icmlaffiliation{col}{Department of IEOR, Columbia University}

\icmlcorrespondingauthor{Anxin Guo}{anxinbguo@gmail.com}
\icmlcorrespondingauthor{Jingwei Li}{jl6639@columbia.edu}

\icmlkeywords{Machine Learning, ICML, Hallucination, Bloom Filter, LLM}

\vskip 0.3in
]

\printAffiliationsAndNotice{} 

\begin{abstract}
Large language models often hallucinate with high confidence on ``random facts'' that lack inferable patterns. 
We formalize the memorization of such facts as a membership testing problem, unifying the discrete error metrics of Bloom filters with the continuous log-loss of LLMs. 
By analyzing this problem in the regime where facts are sparse in the universe of plausible claims, we establish a rate-distortion theorem: the optimal memory efficiency is characterized by the minimum KL divergence between score distributions on facts and non-facts. 
This theoretical framework provides a distinctive explanation for hallucination under an idealized setting: even with optimal training, perfect data, and a simplified ``closed world'' setting, the information-theoretically optimal strategy under limited capacity is not to abstain or forget, but to assign high confidence to some non-facts, resulting in hallucination. 
We validate this theory empirically on both synthetic and real-world data, showing that hallucinations persist as a natural consequence of lossy compression.
The same theorem recovers and sharpens classical space lower bounds for Bloom-type filters, pinning down an additive constant left open for two-sided filters.
\end{abstract}

\input{src/introduction}
\input{src/preliminaries}

\input{src/rate_distortion_new}

\input{src/hallucination}
\section*{Acknowledgments}
We thank Aravindan Vijayaraghavan, Konstantin Makarychev, and Clifford Stein for helpful discussions. 
Anxin Guo is partially supported by the National Science Foundation through the NSF grants ECCS-2216970 and CCF-2154100 through Aravindan Vijayaraghavan. 
Jingwei Li is partially supported by NSF grant CCF-2218677 and ONR grant ONR-13533312 through Clifford Stein. 

\newpage
\bibliography{src/ref}
\bibliographystyle{icml2026}

\newpage
\appendix
\onecolumn
\input{src/appendix_info_budget}
\input{src/appendix_rateDistortion}
\input{src/appendix_LLM_soln}
\input{src/appendix_filter}
\input{src/appendix_training}
\end{document}

%% file: src/macros.tex
\definecolor{ForestGreen}{rgb}{0.1333,0.5451,0.1333}
\definecolor{DarkRed}{rgb}{0.8,0,0}
\definecolor{Red}{rgb}{1,0,0}

\theoremstyle{plain}
\newtheorem{theorem}{Theorem}[section]

\newtheorem{lemma}[theorem]{Lemma}
\newtheorem{corollary}[theorem]{Corollary}
\theoremstyle{definition}
\newtheorem{definition}[theorem]{Definition}
\newtheorem{assumption}[theorem]{Assumption}
\theoremstyle{remark}
\newtheorem{remark}[theorem]{Remark}
\crefname{theorem}{Theorem}{Theorems}
\Crefname{theorem}{Theorem}{Theorems}
\crefname{proposition}{Proposition}{Propositions}
\Crefname{proposition}{Proposition}{Propositions}
\crefname{lemma}{Lemma}{Lemmas}
\Crefname{lemma}{Lemma}{Lemmas}
\crefname{corollary}{Corollary}{Corollaries}
\Crefname{corollary}{Corollary}{Corollaries}
\crefname{definition}{Definition}{Definitions}
\Crefname{definition}{Definition}{Definitions}
\crefname{assumption}{Assumption}{Assumptions}
\Crefname{assumption}{Assumption}{Assumptions}
\crefname{remark}{Remark}{Remarks}
\Crefname{remark}{Remark}{Remarks}

\newif\ifcomment
\commentfalse

\newcommand{\eps}{\varepsilon}
\newcommand{\Bern}{\mathrm{Bern}}

\newcommand{\calD}{\mathcal{D}}

\newcommand{\calX}{\mathcal{X}}
\newcommand{\calC}{\mathcal{C}}
\newcommand{\calP}{\mathcal{P}}

\newcommand{\calM}{\mathcal{M}}

\newcommand{\calU}{\mathcal{U}}

\newcommand{\calY}{\mathcal{Y}}

\newcommand{\calL}{\mathcal{L}}

\newcommand{\calK}{\mathcal{K}}

\newcommand{\RR}{\mathbb{R}}
\newcommand{\PP}{\mathbb{P}}
\newcommand{\NN}{\mathbb{N}}

\newcommand{\FF}{\mathbb{F}}

\newcommand{\Unif}{\mathrm{Unif}}
\newcommand{\KL}{\mathrm{KL}}
\newcommand{\Init}{\textsf{Init}}
\newcommand{\Query}{\textsf{Query}}
\newcommand{\supp}{\mathrm{supp}}

\newcommand{\indicator}[1]{\mathbbm{1}\left\{ #1 \right\}}

\ifcomment
    \newcommand{\bob}[1]{{\color{blue}bob: #1}}
    \newcommand{\jw}[1]{{\color{violet}jw: #1}}
\else
    \newcommand{\bob}[1]{}
    \newcommand{\jw}[1]{}
\fi

\DeclareMathOperator*{\EE}{\mathbb{E}}

%% file: src/introduction.tex
\section{Introduction}\label{sec:introduction}
Despite their transformative impact, Large Language Models (LLMs) are hindered by \textit{hallucinations}---the generation of confident and plausible yet factually incorrect statements.
Recently, an influential line of work~\cite{KalaiNNSV-2025-WhyLLMsHallucinate, Mohsin25-On-the-Fundamental-Limits-of-LLMs-at-Scale, Xu25-HallucinationIsInevitableForLLMsWithTheOpenWorldAssumption, WuGS25-NoFreeLunch, Gumaan25TheoreticalFoundationsMitigationHallucination} provided theoretical explanations of hallucination from a statistical learning perspective,
by viewing an LLM as implementing a binary classifier over \textit{random facts}---facts that are unstructured and cannot be logically inferred unless encountered during training, such as phone numbers or biographical details.
By a no-free-lunch-style logic, since generalization is impossible on these facts, the model must make uninformed guesses on unseen random facts, leading to systematic inference errors.
In the classification setting, hallucination corresponds to false positives resulting from such guesses.

As a natural workaround, researchers have promoted \textit{abstention}~\cite{KalaiNNSV-2025-WhyLLMsHallucinate, WenYFXTHW25-KnowYourLimits}, where an LLM is expected to provide an ``I don't know'' response on unseen random factual queries.
Theoretically, we can further simplify the task by adopting a ``closed-world assumption,'' where all unseen potential facts are treated as non-facts at evaluation time~\footnote{Alternatively, one can assume all true facts are seen during training.}.
Under this setting, an ideal LLM can be both informative and non-hallucinating on factual queries, as long as it accurately distinguishes the finitely many known facts from all other (potentially true but unknown) non-facts.
This reduces hallucination mitigation to a sample-efficient memorization task.

However, empirical evidence suggests that LLMs struggle to perform this memorization task reliably.
Even when permitted to abstain, models continue to generate high-confidence hallucinations and also exhibit ``over-refusal'' on legitimate queries~\cite{ChengSL+24-AI-know-what-they-dont-know, Brahman0BDPRWDC24-TheArtOfSayingNo, ZhuJWMSBLWH25-RefusalAwareInstructionTuning}, reflecting a precision-recall trade-off.
This points to a different bottleneck beyond no-free-lunch: the finite capacity of model parameters forces lossy compression of the training data, making memorization of specific, unstructured facts non-trivial. 

This information-theoretic perspective connects to several recent works~\cite{PanWL25-UnderstandingLLMBehaviorsViaCompression,ShiWDGY25-HallucinationAsAComputationalBoundary,Mohsin25-On-the-Fundamental-Limits-of-LLMs-at-Scale,Kim25-HallucinationAsAnInevitableByproduct,Kim25-HallucinationInformedIntelligence} that attribute hallucination to distortion incurred when compressing an infinitely complex world of knowledge into a model with finite capacity. 
Despite these insights, two gaps remain. First, ``compression causes errors'' does not explain the \emph{shape} of the errors: while forgetting is natural under limited capacity, it is unclear why hallucination is particularly prevalent. Second, existing compression-based explanations are either high-level informal arguments or assume an infinite number of facts, whereas our closed-world assumption restricts facts to a finite set. 
This motivates a more rigorous study in a simplified setting:
\begin{quote}
    \emph{What is the theoretical explanation for high-confidence hallucinations in a closed world with finitely many random facts?}
\end{quote}

In this work, we argue that these hallucinations are driven by the asymmetry between facts and non-facts. 
Specifically, if facts are sparse and randomly distributed in a vast universe of plausible statements, then an LLM with limited memory capacity will tend to accept many non-facts as facts.

To formalize this intuition, we model factuality judgment as a \textit{membership testing} problem. 
We view each plausible statement as an element $i$ in a universe $\calU$, and the set of known facts as a \textit{key} set $\calK\subseteq\calU$. 
Given a query $i\in \calU$, the model outputs a confidence score $\hat{x}_i\in[0,1]$ indicating its belief that $i\in \calK$; under this view, hallucinations correspond to non-keys being assigned high confidence.
The score $\hat{x}_i$ can be evaluated under generic \textit{error metrics} that measure the discrepancy between $\hat{x}_i$ and the true label $\mathbbm{1}\{i\in \calK\}$.
The goal of a membership tester is to use a small \textit{memory budget} to store the keys and achieve small expected error on queries. 
This abstraction subsumes approximate membership data structures\footnote{We will call these data structures ``filters'' for convenience.} such as Bloom filters~\cite{Bloom-1970-Space}, for which the query output is limited to $\{0,1\}$.

\subsection{Our contributions}

\textbf{A rate-distortion theorem for membership testing.}
We consider the sparse limit where $\frac{|\calK|}{|\calU|} \to 0$. In this regime, we identify the exact memory-error trade-off in the form of a rate-distortion theorem. Specifically, the minimum memory budget per key is characterized by the minimum KL-divergence between key and non-key output distributions. 

\begin{theorem}[Informal, Theorems \ref{thm:rate-distortion-main-1}, \ref{thm:rate-distortion-main-2}]
    Let $\KL(P\|Q)$ denote the base-2 Kullback-Leibler divergence between distributions $P$ and $Q$.
    To store $n$ keys in the sparse regime and reach a certain error level under generic error metrics, it is necessary and sufficient for a membership tester to store
    \[
        n \cdot \KL( \mu_{K} \| \mu_N ) + o(n) \text{ bits of information, }
    \]
    where $\mu_K$ and $\mu_N$ are the distributions of scores $\hat x_i$ conditioned on $i\in \calK$ and $i\notin \calK$, respectively, that satisfy the error constraints and minimize the KL-divergence. 
    
    Moreover, score distributions of an optimal membership tester must converge to $\mu_K$ and $\mu_N$ in the sparse limit.
\end{theorem}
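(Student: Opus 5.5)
The plan has two halves: a converse (lower bound) by an information-counting argument, and an achievability (upper bound) by a random-coding construction. Convergence of the score distributions then comes from the near-tightness of the converse.

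\textbf{Converse.} Model the memory state $M$ as a function of the uniformly random key set $\calK$ of size $n$ in a universe of size $u$, with $n/u\to 0$. Since $M$ has at most $b$ bits, $b \ge I(\calK;M)$. I would write $I(\calK;M) = H(\calK) - H(\calK\mid M)$, with $H(\calK)=\log\binom{u}{n}\approx n\log(u/n)+n\log e$.

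To bound $H(\calK\mid M)$, fix the memory. The tester induces scores $\hat x_i$ for every $i\in\calU$. Conditioned on $M$, $\calK$ is a size-$n$ subset whose empirical score distribution is close to $\mu_K$; the remaining $u-n$ elements have empirical distribution close to $\mu_N$. A type-counting (method of types) bound then gives
\[
H(\calK\mid M)\lesssim \log\binom{u}{n} - n\,\KL(\hat\mu_K\|\hat\mu_N) + o(n),
\]
where $\hat\mu_K,\hat\mu_N$ are the averaged key and non-key score distributions. In words: the number of $n$-subsets of a population with score law $\hat\mu_N$ whose empirical law is $\hat\mu_K$ is about $\binom{u}{n}2^{-n\KL}$, because in the sparse limit the overall population law is essentially $\hat\mu_N$.

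More cleanly, I would use a chain-rule/data-processing argument. Write $I(\calK;M)\ge \sum_i I(\indicator{i\in\calK};\hat x_i)$ up to $o(n)$ corrections from weak dependence of the indicators. Each term equals $\frac nu\KL(\mu_K\|\bar\mu)+(1-\frac nu)\KL(\mu_N\|\bar\mu)$, where $\bar\mu$ is the mixture; summed over $u$ elements this is $n\KL(\mu_K\|\mu_N)-o(n)$ as $n/u\to0$. Since the pair $(\hat\mu_K,\hat\mu_N)$ meets the error constraint, this is at least $n$ times the minimum KL.

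Continuous scores need care. I would quantize $[0,1]$ finely and use lower semicontinuity of KL. Making the $o(n)$ terms uniform is the main technical obstacle: both the dependence between indicators under fixed $|\calK|=n$, and the $\frac{u}{n}\KL(\mu_N\|\bar\mu)$ term, which must be shown to vanish when $\mu_N$ and $\bar\mu$ differ by $O(n/u)$. The latter is a second-order expansion, provided $\mu_K\ll\mu_N$. When $\mu_K\not\ll\mu_N$, both sides are infinite, which is consistent with exact storage needing $\log\binom un$ bits.

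\textbf{Achievability.} Fix $(\mu_K,\mu_N)$ attaining the minimum, or near it, after discretizing to finitely many score values. I would construct the tester from shared randomness: for each $i\in\calU$ draw a random $\mu_N$-distributed score via a hash. Then form $L=2^{n(\KL+\delta)}$ independent such hash families and store the index of one family under which the keys' empirical score distribution is close to $\mu_K$. By the method of types, a single family succeeds with probability about $2^{-n\KL}$, so some family among the $L$ succeeds with high probability. Non-keys keep their $\mu_N$ scores. The memory is $n\KL+o(n)$ bits, and the error constraint is met up to continuity of the error metric, which holds for generic metrics assumed linear in the score distributions.

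\textbf{Convergence of scores.} Suppose a tester uses $n\min\KL+o(n)$ bits. The converse shows its induced pair $(\hat\mu_K,\hat\mu_N)$ has $\KL(\hat\mu_K\|\hat\mu_N)\le\min\KL+o(1)$ and is feasible. By compactness of distributions on $[0,1]$, lower semicontinuity of KL, and closedness of the constraint set, every limit point is a minimizer. This yields convergence to $(\mu_K,\mu_N)$, assuming the minimizer is unique (e.g., by strict convexity of KL in the constrained set); otherwise it yields convergence to the set of minimizers.
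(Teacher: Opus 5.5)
The converse as you plan it cannot be completed. There is no bound $B(\calM)/n\ge\KL(\mu_K(\calM)\|\mu_N(\calM))-o(1)$ with $o(1)$ uniform over testers, so the step ``summed over $u$ elements this is $n\KL(\mu_K\|\mu_N)-o(n)$'' fails.

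The chain rule does give exactly $B(\calM)/n\ge F_p(\mu_K,\mu_N)-\frac{\log(8n)}{2n}$, with $\log\binom{u}{n}\ge u\,h(p)-\tfrac12\log(8n)$ absorbing the dependence among the indicators, as in \cref{lem:output-distribution-lower-bound}. But $F_p(\mu_K,\mu_N)\le\KL(\mu_K\|\mu_N)$ always (put $Q=\mu_N$ in $I(X;\hat X)=\min_Q\EE_X\KL(P_{\hat X\mid X}\|Q)$), and also $F_p\le h(p)/p\approx\log(1/p)+\log e$. So the discrepancy between $F_p$ and $\KL$ is unbounded. Concretely, an exact dictionary has $\mu_K=\delta_1$, $\mu_N=\delta_0$ and $\KL=\infty$, yet uses only $\log\binom{u}{n}<\infty$ bits. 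This contradicts your remark that ``both sides are infinite.''

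You have also mislocated the difficulty. The term $\frac{1-p}{p}\KL(\mu_N\|\bar\mu)\ge 0$ is simply discarded in a lower bound, so whether it vanishes is irrelevant. What is actually lost is $\KL(\mu_K\|\bar\mu)$ versus $\KL(\mu_K\|\mu_N)$, and no expansion controls that uniformly. Separately, the $j$-th tester only meets $(\eps_{K,j},\eps_{N,j})$, so even a per-tester bound would not compare directly with the minimum at $(\eps_K,\eps_N)$.

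The missing idea is to lower-bound the KL of a \emph{limit} pair rather than the tester's own pair. Take a subsequence realizing $\liminf_j B(\calM_j)/n_j$ along which $(\mu_{K,j},\mu_{N,j})\to(\mu_K^*,\mu_N^*)$ weakly. Since $F_{p_j}(\mu_{K,j},\mu_{N,j})\ge\KL(\mu_{K,j}\|\bar\mu_j)$ and $\bar\mu_j=p_j\mu_{K,j}+(1-p_j)\mu_{N,j}\to\mu_N^*$, joint lower semicontinuity of KL gives $\liminf_j B(\calM_j)/n_j\ge\KL(\mu_K^*\|\mu_N^*)$. This is the joint LSC of $(p,\mu_K,\mu_N)\mapsto F_p$ at $p=0$ in \cref{lem:properties-of-Fp}. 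The Portmanteau theorem with lower semicontinuous $d^K,d^N$ then puts $(\mu_K^*,\mu_N^*)$ in $\calC_K(\eps_K)\times\calC_N(\eps_N)$, which completes the converse.

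Your convergence paragraph needs the same fix, because its premise is false. An asymptotically optimal tester can have $\KL(\mu_K(\calM_j)\|\mu_N(\calM_j))=\infty$. For example, under log-loss, additionally store a suitably vanishing fraction of the keys exactly at $o(n_j)$ extra bits and output $1$ on them; non-keys never get $1$ because $d^N(1)=\infty$. Run your compactness argument instead from $\limsup_j F_{p_j}(\mu_{K,j},\mu_{N,j})\le\min\KL$ together with the LSC of $F_p$ at $p=0$. Also, uniqueness of the minimizer is a hypothesis of \cref{thm:rate-distortion-main-2}, not a consequence of convexity; KL is not strictly jointly convex.

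Your achievability is valid and genuinely different from the paper's. The paper fixes $p$, reduces to at most five support points by Carath\'eodory, covers all size-$n$ key sets with a type-covering codebook of size $2^{n(R_p+\delta)}$ (\cref{lem:achievability}), and then sends $p\to0$ using continuity of $F_p$ at $p=0$. Your scheme uses $2^{n(\KL+\delta)}$ independent i.i.d.\ $\mu_N$ score tables and stores the index of a good one. It reaches $\KL$ directly with memory independent of $u$, and non-keys keep exactly $\mu_N$ because the selection depends only on key scores. The price is that it cannot reach $R_p$, which lies below $\min\KL$ at fixed $p>0$, so it does not give the achievability in \cref{thm:rate-distortion-main-4}.

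One repair is needed. The metrics are only lower semicontinuous and may be infinite; under log-loss $d^K(0)=\infty$ while $\mu_N^*$ has an atom at $0$. So the keys' type being close to $\mu_K$ does not control their error. You have two options:
\begin{enumerate}
\item Also forbid key mass outside $\supp(\mu_K)$, after a Carath\'eodory-type finite-support reduction; naive quantization can break the error bounds.
\item Define success directly as ``the keys' average $d^K$ is at most $\eps_K+\delta$.'' A change of measure to $\mu_K^{\otimes n}$ plus the law of large numbers shows this has probability at least $2^{-n(\KL+\delta/2)}(1-o(1))$ under $\mu_N^{\otimes n}$, with no discretization at all.
\end{enumerate}
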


KL divergence arises naturally from the asymmetry between keys and non-keys. If $X_1, \ldots, X_n \sim \mu_N$ are drawn i.i.d., then $n\cdot \KL(\mu_K \| \mu_N)$ roughly measures the negative log-likelihood of $X_1, \ldots, X_n$ being distributed as $\mu_K$. 
This value therefore quantifies how many extra bits of information are needed to force the key query output to be $\mu_K$, overcoming the default assumption that they follow the non-key distribution $\mu_N$.

\textbf{Hallucination as the optimal mode of error.}
Our theory provides two perspectives on why it is \textit{memory-efficient} for LLMs to hallucinate on random facts.
First, consider a model tasked with estimating the probability $\PP[i\in \calK]$ that a potential fact $i\in \calU$ is true, where the output $\hat x_i\in [0,1]$ is evaluated with logarithmic or cross-entropy loss, consistent with the maximum likelihood objective and previously proposed training/evaluation methods for factual knowledge~\cite{Anthropic22-LLM-know-what-they-know, ChengSL+24-AI-know-what-they-dont-know}. 

The unique loss-minimizing query output under this metric, given a fixed memory capacity, is to assign high confidence to \textit{all} facts, while assigning either \textit{zero} or \textit{fact-level confidence} to non-facts.
In other words, the model simultaneously recalls all facts and hallucinates on a fraction of non-facts.
We empirically verify this tendency in two complementary settings: small transformers trained from scratch on synthetic random strings (\cref{subsec:experiments}), and LoRA-tuned pretrained LLMs on both synthetic IDs and real-world ISBNs (\cref{subsec:experiments,subsec:lora-experiments}).
For a probability estimation task, hallucination, instead of systematic forgetting or uniform uncertainty, is the natural mode of error.

\textbf{Classifier via thresholding and two-sided filters.}
Our second perspective establishes that any LLM decision mechanism based on \textit{score thresholding}---whether the scores are derived from generative probabilities~\cite{KalaiNNSV-2025-WhyLLMsHallucinate} or the probability estimation output above---is subject to the fundamental memory-error trade-off of \textit{two-sided filters}, a generalization of Bloom-type filters~\cite{Bloom-1970-Space} allowing both false positives and false negatives.
Unlike the first perspective, this result applies to all LLM-based classification mechanisms and does not assume optimality.

Applying our result to filters, our analysis recovers and refines the various existing space lower bounds~\cite{CarterFGMW-1978-Membership, PaghR-2001-LossyDict, HurleyW-2007-OneSizeFitAll, LiWCJWZYA23-ChainedFilter}.
We also show that a hash-based two-sided filter achieves our lower bound up to $o(n)$ bits of space overhead.
As a corollary, eliminating hallucinations (false positives) on random facts is very costly without a simultaneous increase in forgetting or over-refusal (false negatives). 
Post-processing for factual accuracy only moves us \textit{along} the memory-error frontier, not beyond it. 

\textbf{Further discussions on the memory capacity.}
Although modern models have billions of parameters, the effective memory budget for storing a particular family of random facts can be far smaller.
Appendix~\ref{appendix:random-facts} provides two explanations: (i) modern networks are encouraged to minimize pure memorization (through various regularization and MDL/PAC-Bayes viewpoints), and (ii) structured knowledge (e.g., language and reasoning) and random facts compete for a finite memory budget, where the \textit{former takes precedence} during learning due to its greater impact on the training objective. 
Our view resonates with the ``memorization is necessary'' argument by Feldman et al.~\cite{Feldman-20-ShortTaleLongTail, FeldmanZ20-LongTailviaInfluence, BrownBFST21-memorizationOfIrrelevant}: while (implicit) regularization reduces the overfitting to noise, its tendency to limit memorization can be detrimental to the model's performance on long-tail, high-entropy facts.  

Crucially, this tendency is amplified by the rate-distortion frontier's shape: the marginal memory cost to eliminate the final few errors is prohibitively high, and training objectives optimizing for aggregate loss rather than perfect precision naturally tolerate the latter. 

Conversely, our analysis supports the effectiveness of additional fine-tuning on unstructured random facts, which encourages the model to allocate more memory budget.
Our framework also justifies why incorporating external information, such as using RAG~\cite{LewisPP+2020-RAG}, is effective in mitigating hallucinations: memory budget is no longer a limiting factor when non-parametric memory is present.

\subsection{Related work}\label{subsec:related-works}
Hallucination ~\cite{JILF+23-Survey-Hallucination-NLP, AlansariL25-LLM-hallucination-comprehensive-survey, HuangYM+25-Survey-Hallucination} is often defined as the generation of content that is fluent and plausible but factually inaccurate, nonsensical, or unfaithful to source material.
A large body of empirical works has identified causal factors throughout the entire LLM development pipeline, attributing them to \textit{data-centric causes} (noisy web corpora)~\cite{Jessedodgeetal-CaseStudyonCCCC,EBender-dangersofStochasticParrots,MichalPetal-ReviewofChallengeswithMassive}, \textit{model-centric causes} (objectives/architectural limits of next-token prediction) ~\cite{bachmann2025pitfallsnexttokenprediction,HuangFMFFGYZZWW25,WelleckKRDCW20}, and \textit{inference-centric causes} (inference-time choices such as stochastic decoding) ~\cite{HoltzmanBDFC20-Curious-neural-text-degeneration,AMetal-WhenNotToTrust,lee2023factualityenhancedlanguagemodels,LiHFLEHZL23}. 
Our perspective is orthogonal to these studies, as we focus on the \textit{information} perspective independent of these practical considerations.

On the theoretical side, a group of papers~\cite{KalaiV-2024-CalibratedLanguageModelsMustHallucinate, KalaiNNSV-2025-WhyLLMsHallucinate, Mohsin25-On-the-Fundamental-Limits-of-LLMs-at-Scale, Xu25-HallucinationIsInevitableForLLMsWithTheOpenWorldAssumption, WuGS25-NoFreeLunch, Gumaan25TheoreticalFoundationsMitigationHallucination} consider the \textit{classification} perspective, as explained previously.
Notably, \citet{KalaiV-2024-CalibratedLanguageModelsMustHallucinate, KalaiNNSV-2025-WhyLLMsHallucinate} establish that for calibrated models, the generative hallucination rate is inherently lower-bounded by the error rate of an induced classifier. This calibration assumption forces the model to assign probability mass to unseen potential facts, resulting in a regime that is fundamentally different from our closed-world setting.
Meanwhile, several other works~\cite{XUJK24-Hallucination-inevitable, ShiWDGY25-HallucinationAsAComputationalBoundary, BanerjeeAS24-LLMsWillAlwaysHallucinate,SuzukiHTW-2025-HallucinationsAreInevitableButStatisticallyNegligible, KalavasisMV25-On-the-limits-of-language-generation, CharikarP25-Exploring-facets-of-language-generation-in-the-limit} consider LLMs as computable functions and discuss how incomputability gives rise to hallucinations.
Some other works~\cite{ChlonKC25-PredictableCompressionFailures, PanWL25-UnderstandingLLMBehaviorsViaCompression,ShiWDGY25-HallucinationAsAComputationalBoundary,Mohsin25-On-the-Fundamental-Limits-of-LLMs-at-Scale,Kim25-HallucinationAsAnInevitableByproduct,Kim25-HallucinationInformedIntelligence} consider LLMs as \textit{lossy compressors} of knowledge and discuss how the distortion during compression causes hallucinations.
Additionally, \cite{Karpowicz25FundamentalImpossibility} proves a voting-theory-style impossibility result for an LLM to eliminate hallucination while maintaining other desirable properties.

Beyond LLMs, our theory also relates to space lower bounds for static filters\footnote{Static filters only store a fixed set $\calK$. Filters that allow insertion or deletion have an additional space cost. See e.g. \cite{LovettP-2010-LowerBoundDynamic,KuszmaulW-24-spaceLowerBoundsDynamic, KuszmaulLLZ-25-fingerprint}}. 
\citet{CarterFGMW-1978-Membership} gave the first space lower bound for one-sided filters in the sparse limit, which was recently extended by \citet{LiWCJWZYA23-ChainedFilter} to a more general form for nonzero $|\calK|/|\calU|$. 
\citet{PaghR-2001-LossyDict} lower bounded space usage of two-sided filters which allow false negatives, but left an unspecified gap of $\Theta(1)$ per key. 
\citet{HurleyW-2007-OneSizeFitAll} applied rate-distortion theory to the case of fixed $|\calU|/|\calK|$, and gave a mutual-information style space lower bound. 
These bounds can be recovered as special cases of our main result.

%% file: src/preliminaries.tex
\section{Preliminaries}\label{sec:preliminaries}

We first define membership testers and error metrics.
\begin{definition}[Membership tester]
    Given universe $\calU = [u]$, key set size $n$, and two error metric functions, a \textbf{membership tester} $\calM$ is a tuple of two algorithms with shared randomness:
    \begin{itemize}[itemsep=3pt, topsep=0pt, partopsep=0pt, parsep=0pt]
        \item $\textsf{Init}^\calM$, where the tester takes input $\calK$ and outputs $W$, a memory state (e.g. model parameters for LLMs or data structure contents for filters). 
        \item $\textsf{Query}^\calM$, where the tester takes input $i\in \calU$ and the memory state $W$, and returns a confidence score $\hat x_i\in [0, 1]$, indicating its estimate of $\PP[i\in \calK]$. 
    \end{itemize}
    
    Let $B(\calM) = I(W;\calK)$ be the \textit{memory budget} (or \textit{memory cost}) of $\calM$, quantifying how many bits of information about $\calK$ are stored. See Appendix~\ref{appendix:random-facts} for further discussion of this quantity. From a data structure perspective, this lower-bounds the bits of space that $\calM$ uses: $I(W;\calK) \leq H(W)$.
    
\end{definition}
\begin{remark}[Permutation-invariance]\label{rmk:permutation-invariance}
    To study the memory-error trade-off, it suffices to focus on \textit{permutation-invariant} membership testers. That is, for any permutation $\pi: \calU\to \calU$, any key set $\calK$ of size $n$, and any $i\in \calU$, the distribution of query outputs is unaffected by the permutation:
    \[ \textsf{Query}^\calM (i, \textsf{Init}^\calM(\calK)) \; \stackrel{d}{=} \; \textsf{Query}^\calM \big(\pi(i), \textsf{Init}^\calM(\pi(\calK))\big). \]

    This assumption is standard and without loss of generality: given a membership tester $\calM'$, we can use $\calM'$ to define a permutation-invariant tester $\calM$ by picking a uniformly random permutation $\sigma$ on $\calU$ and defining:
    \[ \begin{cases}
        \textsf{Init}^\calM(\calK) = \textsf{Init}^{\calM'}(\sigma(\calK)), \\
        \textsf{Query}^\calM(i, W) = \textsf{Query}^{\calM'}(\sigma(i), W).
    \end{cases} \] 
    On a uniformly random key set $\calK$, the expected error rates (defined below) of $\calM$ are identical to those of $\calM'$. Moreover, the distribution of $\textsf{Query}^\calM (i, \textsf{Init}^\calM(\calK))$ depends only on whether $i$ is a key or non-key, not on the specific choice of $i$ or $\calK$.
    Throughout the paper, we will assume all membership testers are permutation-invariant. 
    While practical LLMs are not permutation-invariant, this assumption is a standard tool for analysis, and our lower bounds still apply.
\end{remark}

\begin{definition}[Query output and error constraints]
    Let $d^K,d^N: [0, 1] \to [0, \infty]$ be \textbf{error metrics} for keys $\calK$ and non-keys $\calU\setminus \calK$. On query output $\hat x\in [0,1]$, the error on a single key (resp. non-key) is $d^K(\hat x)$ (resp. $d^N(\hat x)$). 

    For permutation-invariant membership tester $\calM$, we denote its \textbf{query output distributions} on keys and non-keys by $\mu_K(\calM)$ and $\mu_N(\calM)$, respectively. We say $\calM$ achieves error levels $\eps_K, \eps_N$ if it satisfies the following constraints:
    \[ \EE_{\hat X\sim \mu_K(\calM)} [d^K(\hat X)] \leq \eps_K, \, \text{ and } \, \EE_{\hat X\sim \mu_N(\calM)} [d^N(\hat X)] \leq \eps_N. \]
\end{definition}

\begin{definition}[$\mu_K$ and $\mu_N$]\label{def:score-distributions}
    We hereby formalize the query output distributions $\mu_K(\calM), \mu_N(\calM) \in \calP([0,1])$. Let $\calK\subseteq \calU$ be a uniformly random subset of size $n$, and, independently of the randomness of $\calM$, let $I\sim \Unif(\calK)$ and $I'\sim \Unif(\calU\setminus\calK)$ be a uniformly random key and non-key, respectively. Then $\mu_K(\calM)$ and $\mu_N(\calM)$ are the laws of the corresponding query scores:
    \[
        \Query^\calM\big(I, \Init^\calM(\calK)\big) \,\sim\, \mu_K(\calM)\]\[
        \Query^\calM\big(I', \Init^\calM(\calK)\big) \,\sim\, \mu_N(\calM),
    \]
    where the randomness is over $\calK$, the queried index $I$ (resp.\ $I'$), and the randomness of $\calM$. By permutation-invariance (Remark~\ref{rmk:permutation-invariance}), these distributions are well-defined: they depend only on whether the queried index is a key or a non-key, and not on the specific choice of $I$, $I'$, or $\calK$.
\end{definition}

For instance, under the false-negative/false-positive metrics $d^K(\hat x) = 1 - \hat x$ and $d^N(\hat x) = \hat x$, a standard (one-sided) filter with false-positive rate $\eps$---one that always returns $1$ on keys and returns $1$ on each non-key independently with probability $\eps$---has score distributions
\[
    \mu_K(\calM) = \delta_1
    \qquad\text{and}\qquad
    \mu_N(\calM) = (1-\eps)\,\delta_0 + \eps\,\delta_1,
\]
where $\delta_x$ denotes the Dirac point mass at $x$.

Intuitively, the membership tester would try to output high scores for keys and low scores for non-keys; the corresponding error metrics $d^K$ and $d^N$ would then be decreasing and increasing functions, respectively. 

As examples of error metrics, for two-sided filters, $d^K$ and $d^N$ would characterize the FNR and FPR of the filter, corresponding to $d^K(\hat{x}) = 1 - \hat x$ and $d^N(\hat x) = \hat x$. 

For factuality estimation on LLMs (\cref{subsec:prob-est-LLM}), $d^K$ and $d^N$ characterize the log-loss on a key (fact) and a non-key (non-fact), i.e., $d^K(\hat x) = -\ln \hat x$, and $d^N(\hat x) = -\ln (1 - \hat x)$.

\begin{assumption}[Assumptions on error metrics]\label{assump:error_metrics}
    Our main theorems apply to any error metrics $d^K,d^N$ that satisfy the following assumptions.
    
    We assume that $d^K,d^N$ are nonnegative and lower semi-continuous on $[0,1]$, and perfect scores have zero error: $d^K(1) = d^N(0) = 0$. 
    Additionally, we assume there exists some $c \in [0,1]$ achieving finite error under both metrics:
    \[ d^K(c) < \infty,\quad\text{ and } \quad d^N(c) < \infty. \]
\end{assumption}

All logarithms in this paper are base-2 unless specified as $\ln$. 
We use $u$ to denote the universe size $|\calU|$, and use $n$ to denote the key size $|\calK|$. 
We use $\calP([0, 1])$ to denote all Borel probability measures on $[0,1]$.

%% file: src/rate_distortion_new.tex
\section{A Rate-Distortion Theorem for Membership Testers}\label{sec:rate-distortion-new}
In this section, we state and prove our main theorems. Throughout this section, we assume a pair of fixed error metrics $d^K,d^N$ that satisfy Assumption~\ref{assump:error_metrics}. 

For any pair of error rates $(\eps_K,\eps_N)$, let $\calC_K(\eps_K)$ and $\calC_N(\eps_N)$ be the feasible regions for query output distributions on keys and non-keys, respectively:
    \begin{align*}
        \calC_K(\eps_K) & = \{\mu_K\in \calP([0,1]): \EE_{\hat X\sim \mu_K}[d^K(\hat X)]\leq \eps_{K}\}, \\
        \calC_N(\eps_N) & = \{ \mu_N\in \calP([0,1]): \EE_{\hat X\sim \mu_N}[d^N(\hat X)]\leq \eps_{N} \}.
    \end{align*} 

    Our first theorem characterizes the memory--error trade-off for membership testers. 

\begin{theorem}\label{thm:rate-distortion-main-1}
    Fix error metrics $d^K,d^N$ and error rates $\eps_K,\eps_N \geq 0$. Let $\{n_j\},\{u_j\}$ be sequences of natural numbers such that $n_j \to \infty$ and $n_j/u_j \to 0$. For each $j$, let $\calM_j$ be a membership tester for universe $[u_j]$ and key size $n_j$ that achieves error rates $(\eps_{K, j},\eps_{N, j})$ under error metrics $d^K, d^N$. Suppose the error rates satisfy:
    \[ \limsup_{j\to\infty} \eps_{K, j} \leq \eps_K, \quad \limsup_{j\to\infty} \eps_{N, j} \leq \eps_N. \]
    
    Then, the asymptotic per-key memory budget of $\calM_j$ is at least:
    \[ \liminf_{j\to\infty} \frac{B(\calM_j)}{n_j} \geq \min_{\mu_K\in \calC_K(\eps_K), \mu_N \in \calC_N(\eps_N)} \KL(\mu_K \| \mu_N).  \]
    Moreover, there exist sequences $\{u_j\}$, $\{n_j\}$, and $\{\calM_j\}$ as described above that achieve the memory lower bound. 
\end{theorem}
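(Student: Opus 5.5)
The plan is to prove the lower bound by single-letterizing the mutual information $I(W;\calK)$ into a per-coordinate quantity. A weak-compactness argument on $\calP([0,1])$ then passes to the sparse limit. The achievability part is a random-coding (Sanov-type) construction with a shared random codebook.

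\textbf{Lower bound.} Fix $j$, write $u=u_j$, $n=n_j$, $p=n/u$, and let $\calK$ be uniform of size $n$. Let $X=(\hat x_1,\dots,\hat x_u)$ be the vector of query outputs, and let $Z_i=\indicator{i\in\calK}$. Since $X$ is generated from $W$ together with randomness independent of $\calK$, data processing gives $B(\calM_j)\ge I(\calK;X)$. Here the shared randomness is treated as part of the state, or conditioned on, which does not change $I$ because it is independent of $\calK$. Then
\[ I(\calK;X)=H(\calK)-H(Z_1,\dots,Z_u\mid X)\ \ge\ \log\binom{u}{n}-\sum_{i} H(Z_i\mid X_i). \]
By permutation invariance each pair $(Z_i,X_i)$ has the same law: $Z\sim\Bern(p)$, with $X\mid Z=1\sim\mu_{K,j}$ and $X\mid Z=0\sim\mu_{N,j}$. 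Stirling gives $\log\binom un\ge u\,h(p)-O(\log n)$. Hence
\[ B(\calM_j)\ \ge\ u\,I(Z;X_1)-O(\log n)\ \ge\ n\,\KL\big(\mu_{K,j}\,\big\|\,p\mu_{K,j}+(1-p)\mu_{N,j}\big)-O(\log n), \]
where the last step drops the nonnegative non-key term of $I(Z;X_1)=p\KL(\mu_K\|m)+(1-p)\KL(\mu_N\|m)$.

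Now pass to a subsequence attaining the $\liminf$. By Prokhorov on the compact space $[0,1]$, extract a further subsequence with $\mu_{K,j}\Rightarrow\mu_K^*$ and $\mu_{N,j}\Rightarrow\mu_N^*$. Since $p_j\to0$, the mixtures also converge weakly to $\mu_N^*$. Joint lower semicontinuity of $\KL$ under weak convergence (Donsker--Varadhan) gives $\liminf B/n\ge\KL(\mu_K^*\|\mu_N^*)$. Because $d^K,d^N$ are nonnegative and lower semicontinuous, $\EE_{\mu^*}[d]\le\liminf\EE_{\mu_j}[d]$. Together with the $\limsup$ hypotheses on $\eps_{K,j},\eps_{N,j}$, this places $\mu_K^*\in\calC_K(\eps_K)$ and $\mu_N^*\in\calC_N(\eps_N)$. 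The same compactness and semicontinuity show that the sets $\calC_K,\calC_N$ are weakly closed and that the minimum is attained, so writing $\min$ is justified.

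\textbf{Achievability.} Take a minimizing pair $(\mu_K,\mu_N)$ with finite $\KL$; otherwise there is nothing to prove. The shared randomness is a codebook of $2^{nR}$ functions $f_s:\calU\to[0,1]$, $R>\KL(\mu_K\|\mu_N)$, each with i.i.d.\ $\mu_N$ values. $\Init$ stores the first index $s$ for which the empirical distribution of $\{f_s(i)\}_{i\in\calK}$ is close to $\mu_K$, and $\Query(i,s)=f_s(i)$. Selection of $s$ looks only at key coordinates, so non-key outputs remain exactly i.i.d.\ $\mu_N$. After symmetrizing as in Remark~\ref{rmk:permutation-invariance}, key outputs have marginal close to $\mu_K$. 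By Sanov's lower bound, a random $s$ succeeds with probability $2^{-n(\KL+o(1))}$, so failure has vanishing probability. On failure we fall back to outputting the point $c$ of Assumption~\ref{assump:error_metrics}, which has finite error. The memory is then $nR+o(n)$ bits.

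\textbf{Main obstacle.} The delicate step is reconciling approximation with the error constraints. Weak closeness of the key empirical measure only bounds $\EE[d^K]$ from below, because $d^K$ is merely lower semicontinuous. I would first reduce to finitely supported pairs. Partition $[0,1]$ into finitely many cells and quantize; data processing ensures $\KL$ does not increase. Then mix slightly toward $\delta_c$, using convexity of the constraints, to regain slack. The codebook matching is then done on exact finite-alphabet types, where Sanov/method-of-types estimates are sharp and the expected errors are controlled exactly. The open issue is whether such quantization can be chosen so that both constraints are preserved up to $o(1)$ simultaneously, given that $d^K$ and $d^N$ prefer different representative points in a cell. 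I expect this to need a careful diagonal argument over vanishing slack and refining partitions.
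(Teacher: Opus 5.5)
Your lower bound is the paper's argument: the chain rule with $H(Z_i\mid X)\le H(Z_i\mid X_i)$, the bound $\log\binom{u}{n}\ge u\,h(p)-\tfrac12\log(8n)$, weak compactness, and joint lower semicontinuity of $\KL$ after dropping the non-key term. It is fine, except for your shared-randomness parenthetical. A one-time pad $W=\mathrm{enc}(\calK)\oplus S$ has $I(W;\calK)=0$, so data processing really bounds $I(W;\calK\mid S)$, not $I(W;\calK)$. The paper's proof also relies on this reading of $B$ implicitly.

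The genuine gap is achievability, which you leave unfinished. Three things go wrong as written:
\begin{itemize}
\item Selecting $s$ by weak closeness of the key empirical measure to $\mu_K$ gives no upper bound on $\EE[d^K]$.
\item Cellwise quantization runs into exactly the conflict you name.
\item Mixing toward $\delta_c$ cannot restore slack when $d^K(c)>\eps_K$; it only keeps errors finite.
\end{itemize}
The paper never quantizes. For fixed $p>0$ it applies Carath\'eodory's theorem to the vector $(q_{\hat x},\,q_{\hat x}d^K(\hat x),\,(1-q_{\hat x})d^N(\hat x),\,h(q_{\hat x}))$, where $q_{\hat x}$ is the posterior probability that $\hat x$ came from a key. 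This gives an optimal pair for $R_p$ with at most five atoms and \emph{identical} error levels and rate. On that finite alphabet the error functionals are linear in the joint type, so conditional type covering achieves $R_p+\delta$ per key at errors $(\eps_K+\delta,\eps_N+\delta)$. Letting $p\to0$, with $R_p\le F_p(\bar\mu_K,\bar\mu_N)\to\KL(\bar\mu_K\|\bar\mu_N)$, finishes the proof.

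Your own construction is genuinely different: a direct $p=0$ scheme with a $\mu_N$-i.i.d.\ codebook. It can be completed without any finite-support step, because only the key error is at risk; non-key outputs are exactly i.i.d.\ $\mu_N$. Select $s$ on the statistic you actually need, $A_n=\{\tfrac1n\sum_{i\in\calK}d^K(f_s(i))\le\eps_K+\delta\}$. Write $Y_1,\dots,Y_n$ for the key coordinates of a codeword, $r=d\mu_K/d\mu_N$, and $B_n=\{\sum_i\log r(Y_i)\le n(\KL(\mu_K\|\mu_N)+\delta)\}$. A change of measure gives
\[
  \mu_N^{\otimes n}(A_n)\;\ge\;2^{-n(\KL(\mu_K\|\mu_N)+\delta)}\,\mu_K^{\otimes n}(A_n\cap B_n).
\]
The weak law of large numbers under $\mu_K$ then gives $\mu_K^{\otimes n}(A_n\cap B_n)\to1$. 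It applies because $d^K\ge0$ has $\mu_K$-mean at most $\eps_K$, and $\log r$ is $\mu_K$-integrable when $\KL<\infty$ (its negative part has $\mu_K$-expectation at most $1/(e\ln 2)$). Hence a codebook of $2^{nR}$ codewords with $R>\KL(\mu_K\|\mu_N)+\delta$ fails with vanishing probability, and your fallback to $c$ costs only $o(1)$ error.

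This route is uniform in $u$ and skips $R_p$ and the limit $p\to0$. In exchange, the paper's route yields achievability of $R_p$ at each fixed $p$, which it needs for its first-order-in-$p$ theorem.
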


Our second result states that, when the KL term has unique minimizers, these minimizers define the query output distributions of optimal membership tester families. 
\begin{theorem}\label{thm:rate-distortion-main-2}
    Suppose $(\mu_K^*,\mu_N^*)\in \calC_K(\eps_K)\times\calC_N(\eps_N)$ is the unique minimizer of $\KL(\mu_K\|\mu_N)$. In the setting of \cref{thm:rate-distortion-main-1}, if $\{\calM_j\}$ is asymptotically optimal, in the sense that
    \[ \limsup_{j\to\infty} \frac{B(\calM_j)}{n_j} = \KL(\mu_K^* \| \mu_N^*), \]
    then we must have $\mu_{K}(\calM_j) \to \mu_K^*$ and $\mu_{N}(\calM_j) \to \mu_N^*$ in Wasserstein-1 distance. 
\end{theorem}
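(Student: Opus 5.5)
The plan is to argue by compactness. Theorem~\ref{thm:rate-distortion-main-1} is assumed, but we need a slightly stronger form of its lower-bound argument, which we isolate first:
\[
\liminf_{j}\frac{B(\calM_j)}{n_j}\;\ge\;\liminf_j \KL\big(\mu_K(\calM_j)\,\|\,\mu_N(\calM_j)\big).
\]
We expect this to follow from the same chain-rule/data-processing argument used for Theorem~\ref{thm:rate-distortion-main-1}. Write $I(W;\calK)\ge \sum$ of per-key contributions. In the sparse limit, the scores of the $n$ keys behave like nearly independent draws from $\mu_K(\calM_j)$, while a ``default'' reference draw is distributed as $\mu_N(\calM_j)$. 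This costs at least $n_j\KL(\mu_K(\calM_j)\|\mu_N(\calM_j))-o(n_j)$ bits. If the paper's proof only establishes the inequality against the minimum over the constraint sets, we will reopen it and keep the actual pair $(\mu_K(\calM_j),\mu_N(\calM_j))$ in place of the minimizer. We expect the lower bound to pass through this pair anyway, so this should be routine.

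Given this inequality, we argue by contradiction. Suppose $\mu_K(\calM_j)\not\to\mu_K^*$ in $W_1$. Note that $W_1$ metrizes weak convergence on the compact set $[0,1]$. Then there is a subsequence and some $\delta>0$ with
\[
W_1\big(\mu_K(\calM_j),\mu_K^*\big)+W_1\big(\mu_N(\calM_j),\mu_N^*\big)\ge\delta .
\]
By Prokhorov's theorem on the compact space $\calP([0,1])$, we pass to a further subsequence along which $\mu_K(\calM_j)\to\nu_K$ and $\mu_N(\calM_j)\to\nu_N$ weakly, with $(\nu_K,\nu_N)\neq(\mu_K^*,\mu_N^*)$.

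Next we check feasibility of the limit. Since $d^K$ is nonnegative and lower semicontinuous, the portmanteau/Fatou inequality gives
\[
\EE_{\nu_K}[d^K]\le\liminf_j \EE_{\mu_K(\calM_j)}[d^K]\le\limsup_j\eps_{K,j}\le\eps_K,
\]
and similarly for $d^N$. Hence $(\nu_K,\nu_N)\in\calC_K(\eps_K)\times\calC_N(\eps_N)$.

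KL divergence is jointly lower semicontinuous under weak convergence, which follows from the Donsker--Varadhan variational formula as a supremum of continuous functionals. Hence
\[
\KL(\nu_K\|\nu_N)\le\liminf_j\KL\big(\mu_K(\calM_j)\|\mu_N(\calM_j)\big)\le\liminf_j \frac{B(\calM_j)}{n_j}\le \KL(\mu_K^*\|\mu_N^*).
\]
So $(\nu_K,\nu_N)$ is a feasible minimizer distinct from $(\mu_K^*,\mu_N^*)$. This contradicts uniqueness, and the conclusion holds for the full sequence.

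The main obstacle is the first step: making sure the memory lower bound is stated in terms of the tester's own score distributions, uniformly enough to survive taking the liminf. The $o(n_j)$ error term must not depend on the particular distributions. If the paper's argument quantizes $[0,1]$ into finitely many bins, we would apply it for a fixed quantization and get the inequality for the quantized KL. We would then note that the quantized KL of the weak limits (with bins chosen to avoid atoms of $\nu_K,\nu_N$) converges, and finally let the quantization refine. This uses the fact that KL is the supremum over finite partitions.
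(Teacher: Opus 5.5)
\textbf{The gap is in your first step.} Your compactness--uniqueness skeleton is exactly the paper's: a weakly convergent subsequence, feasibility of the limit via the portmanteau inequality for lower semi-continuous $d^K,d^N$, a contradiction with uniqueness, and $W_1$ metrizing weak convergence on $[0,1]$. But the inequality you isolate first, $\liminf_j B(\calM_j)/n_j\ge\liminf_j\KL(\mu_K(\calM_j)\|\mu_N(\calM_j))$, is false, not merely delicate to make uniform.

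The available memory bound (Lemma~\ref{lem:output-distribution-lower-bound}) is stated in terms of $F_{p_j}$. With $\mu_p=p\mu_K+(1-p)\mu_N$ one has exactly
$\KL(\mu_K\|\mu_N)=F_p(\mu_K,\mu_N)+\frac{1}{p}\KL(\mu_p\|\mu_N)$.
So the reference measure that memory actually controls is the mixture $\mu_p$, not $\mu_N$. The correction term can be $+\infty$ for every $j$.

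\textbf{A counterexample.} Work in the log-loss setting of Theorem~\ref{thm:prob-est-LLM}, where the minimizer is unique. Take an asymptotically optimal sequence with $\log u_j=o(n_j)$; one is available from the achievability part of Theorem~\ref{thm:rate-distortion-main-1} by choosing $n_j$ large relative to $\log(1/p_j)$. Let $\calM_j'$ additionally store one uniformly random key verbatim and answer $1$ on it.
\begin{itemize}
\item This costs at most $\log u_j=o(n_j)$ extra bits and keeps permutation invariance.
\item It can only lower the key error, since $d^K(1)=0$, and leaves the non-key error unchanged.
\item Hence $\calM_j'$ is still asymptotically optimal.
\end{itemize}
However, $d^N(1)=\infty$ forces $\mu_N(\calM_j')(\{1\})=0<1/n_j\le\mu_K(\calM_j')(\{1\})$. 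Therefore $\KL(\mu_K(\calM_j')\|\mu_N(\calM_j'))=\infty$ for all $j$, while $B(\calM_j')/n_j$ stays bounded.

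Your quantization fallback fails for the same reason. The partition $\{[0,1),\{1\}\}$ has its only boundary point $1$ outside the atoms of the limit measures. Yet its quantized KL along $\calM_j'$ is infinite for every $j$. No fixed-partition version of your inequality can hold, because the quantized pairs obey the same identity above.

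\textbf{The fix.} Never pass through the KL of the finite-$j$ pair; go straight to the limit via the mixture. Since $F_p(\mu_K,\mu_N)\ge\KL(\mu_K\|\mu_p)$, and $\mu_K\ll\mu_p$ always, key-only atoms like the one above cost nothing here. Along your convergent subsequence, $\mu_{p_j}=p_j\mu_K(\calM_j)+(1-p_j)\mu_N(\calM_j)\to\nu_N$ weakly because $p_j\to0$. Joint lower semi-continuity of KL then gives $\liminf_j F_{p_j}(\mu_K(\calM_j),\mu_N(\calM_j))\ge\KL(\nu_K\|\nu_N)$; this is part~1 of Lemma~\ref{lem:properties-of-Fp}.

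Combine this with Lemma~\ref{lem:output-distribution-lower-bound}, whose error term $\frac{\log(8n)}{2n}$ is already distribution-free. Along the subsequence this yields $\KL(\nu_K\|\nu_N)\le\liminf_j B(\calM_j)/n_j\le\KL(\mu_K^*\|\mu_N^*)$, with no uniformity issue. The rest of your argument then goes through verbatim, and with this substitution it is precisely the paper's proof.
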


Finally, if $\calM$ is restricted to output values in $\supp(\mu_N^*)$ rather than $[0,1]$, then we can quantify how the memory lower bound converges to $\KL(\mu_K^*\|\mu_N^*)$ as $p\to 0$. 

\begin{theorem}\label{thm:rate-distortion-main-4}
    Let $\chi^2$ be the chi-squared divergence. In the setting of \cref{thm:rate-distortion-main-2}, suppose $\mu_N^*$ is supported on a finite set $\calX$ and $\chi^2(\mu_K^* \| \mu_N^*) < \infty$. For any membership tester $\calM$ for key size $n$ and universe $[u]$ with query outputs restricted to $\calX$, if we fix $p=\frac{n}{u}$ and let $n,u\to \infty$, then
    \[ \frac{B(\calM)}{n} \geq \KL(\mu_K^*\|\mu_N^*) - \frac{\chi^2(\mu_K^* \| \mu_N^*)}{2\ln 2}\cdot p + o(p), \]
    and this bound is achievable. 
\end{theorem}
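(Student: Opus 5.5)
We work at fixed density $p=n/u$ and first reduce to a single-letter problem for the exact finite-$p$ tradeoff, then expand that problem in $p$.

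\textbf{Step 1: lower bound at fixed $p$.} Let $\calK$ be uniform of size $n$, and let $Y=(\hat X_i)_{i\in\calU}$ be the vector of query outputs, which takes values in $\calX^u$. Since $Y$ is a function of $W$ and the shared randomness, which is independent of $\calK$, data processing gives $B(\calM)\ge I(Y;\calK)=H(\calK)-H(\calK\mid Y)$. We expand $H(\calK)=\log\binom{u}{n}=u\,h(p)-O(\log u)$ and use subadditivity:
\[
H(\calK\mid Y)\le \sum_i H(\indicator{i\in\calK}\mid \hat X_i).
\]
By permutation invariance, each pair $(\indicator{i\in\calK},\hat X_i)$ has the law of a mixture: label $1$ with probability $p$ and output distributed as $\mu_K(\calM)$, label $0$ otherwise with output distributed as $\mu_N(\calM)$. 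This yields
\[
\frac{B}{n}\ge \frac{1}{p}\, I_p(\mu_K,\mu_N)-o(1),
\]
where $I_p$ is the mutual information between the label and $\hat X$. It therefore suffices to study $F(p)=\min_{\calC_K\times\calC_N,\ \supp\subseteq\calX}\frac1p I_p(\mu_K,\mu_N)$.

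\textbf{Step 2: expansion of $I_p/p$.} Write $\bar\mu=p\mu_K+(1-p)\mu_N$. Then
\[
I_p=p\,\KL(\mu_K\|\bar\mu)+(1-p)\,\KL(\mu_N\|\bar\mu).
\]
Set $r=d\mu_K/d\mu_N$ on the finite set $\calX$; the mass of $\mu_K$ outside $\supp\mu_N$ must be handled separately, and finiteness of $\chi^2$ at the optimum ensures it vanishes. Using $\bar\mu=\mu_N(1+p(r-1))$, a Taylor expansion in $p$ gives:
\begin{itemize}
\item $\KL(\mu_K\|\bar\mu)=\KL(\mu_K\|\mu_N)-\frac{p}{\ln 2}\EE_{\mu_K}[r-1]+O(p^2)$, where $\EE_{\mu_K}[r-1]=\chi^2(\mu_K\|\mu_N)$;
\item $\KL(\mu_N\|\bar\mu)=\frac{p^2}{2\ln2}\chi^2+O(p^3)$.
\end{itemize}
Combining,
\[
\frac{I_p}{p}=\KL-\frac{p}{\ln2}\chi^2+\frac{p}{2\ln2}\chi^2+O(p^2)=\KL(\mu_K\|\mu_N)-\frac{\chi^2}{2\ln2}p+O(p^2).
\]
The terms are uniform over the compact feasible set provided $r$ stays bounded, which holds on a neighborhood of the minimizer because $\calX$ is finite.

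\textbf{Step 3: minimizing.} Evaluating at $(\mu_K^*,\mu_N^*)$ gives $F(p)\le \KL^*-\frac{\chi^{2*}}{2\ln 2}p+O(p^2)$. For the lower bound we need $\min(\KL-c\chi^2 p)=\KL^*-c\chi^{2*}p+o(p)$. By Theorem~\ref{thm:rate-distortion-main-2} and compactness, minimizers $(\mu_K^p,\mu_N^p)$ of $F(p)$ converge to the unique minimizer as $p\to0$. Continuity of $\chi^2$ on the finite support then gives $\chi^2(\mu^p)\to\chi^{2*}$. Moreover $\KL(\mu^p)\ge\KL^*$, since the feasible set is unchanged. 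Hence
\[
F(p)\ge \KL^*-c\,\chi^2(\mu^p)p-O(p^2)=\KL^*-c\chi^{2*}p-o(p).
\]
This is the main obstacle: controlling the uniformity of the expansion and the near-boundary behavior where $r$ is unbounded. Here we use that outputs lie in $\calX\supseteq\supp\mu_N^*$, and treat separately points with tiny $\mu_N$-mass via lower semicontinuity. A possible gap is the case where $\mu_K^p$ places mass where $\mu_N^p$ is small; such configurations make $\KL(\cdot\|\cdot)$ large, so they are excluded for small $p$.

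\textbf{Step 4: achievability.} Use a hash-based construction. Sample i.i.d.\ outputs from $\mu_N^*$ for non-keys, and for keys store, via a retrieval data structure (a static function), a $\mu_K^*$-typical realization. Concretely, assign each element a hashed value and store enough information so that the joint empirical law matches the optimal channel $P(\hat X\mid \text{label})$ at density $p$. The cost is $\log\binom{u}{n}-\log(\text{compatible sets})\approx I_p+o(n)$ bits, by the same covering argument as the sufficiency part of Theorem~\ref{thm:rate-distortion-main-1}, applied at fixed $p$ rather than in the limit.
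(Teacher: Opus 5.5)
Your argument is correct, and its skeleton is the paper's. Step~1 is Lemma~\ref{lem:output-distribution-lower-bound}. As in the paper's Step~1, uniqueness of the minimizer plus joint lower semicontinuity of $F_p$ force the $F_p$-minimizers $\mu^p$ to converge to $(\mu_K^*,\mu_N^*)$. So for small $p$ they lie where $\mu_N(x)\ge\delta/2$ on $\calX$ (with $\delta=\min_{x\in\calX}\mu_N^*(x)>0$), and there the second-order expansion in $p$ is uniform.

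The routes split at the last step. The paper writes $-R_p$ as the value function of a parametric maximization over this compact region. It checks that $\partial_pF_p=-\KL(\mu_N\|\mu_p)/p^2$ extends continuously to $p=0$ with value $-\chi^2(\mu_K\|\mu_N)/(2\ln 2)$, then reads off the right derivative of $R_p$ at $0$ from the Milgrom--Segal envelope theorem and the unique argmax. You instead squeeze $R_p$ by hand:
\begin{itemize}
\item the expansion at $(\mu_K^*,\mu_N^*)$ gives the upper bound;
\item the expansion at $\mu^p$, with $\KL(\mu^p)\ge\KL(\mu_K^*\|\mu_N^*)$ by feasibility and $\chi^2(\mu^p)\to\chi^2(\mu_K^*\|\mu_N^*)$ by continuity, gives the lower bound.
\end{itemize}
This is precisely the elementary proof of the envelope formula at a unique optimizer. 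It is self-contained and makes visible where uniqueness is used. The cited theorem gives more: absolute continuity of $R_p$ and a right-derivative formula at every small $p$.

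Your direct expansion of $I_p/p$ is right, though the paper's derivative identity is more economical, since it needs only the expansion of $\KL(\mu_N\|\mu_p)$. In fact the lower side needs no uniform remainder at all. From $a-\ln(1+a)\le a^2/(2(1-s)^2)$ for $a\ge -s$, one gets $\KL(\mu_N\|\mu_s)\le \frac{s^2}{2(1-s)^2\ln 2}\chi^2(\mu_K\|\mu_N)$ for every pair. Integrating $\partial_sF_s$ then gives $F_p\ge \KL(\mu_K\|\mu_N)-\frac{p}{2(1-p)\ln 2}\chi^2(\mu_K\|\mu_N)$ globally. So the hypothesis on $\calX$ enters your lower bound only through continuity of $\chi^2$ at the optimum.

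\textbf{Step~3: drop the hedge about $\calX\supsetneq\supp\mu_N^*$.} The paper reads the hypothesis as $\calX=\supp\mu_N^*$ (its proof sets $\delta=\min_{x\in\calX}\mu_N^*(x)>0$). Then $\mu^p\to(\mu_K^*,\mu_N^*)$ alone gives $r\le 2/\delta$ near the optimum, and nothing at the boundary needs separate treatment. The heuristic you offer for the general case is false. A point with $\mu_K(x)=\eta^{1/3}$, $\mu_N(x)=\eta$ contributes $\eta^{1/3}\log\eta^{-2/3}\to0$ to the KL divergence but roughly $\eta^{-1/3}\to\infty$ to $\chi^2$. So with extra points in $\calX$, $\chi^2$ is discontinuous at the optimum, and you would need a separate argument that the minimizers $\mu^p$ avoid such directions.

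\textbf{Step~4: achievability comes from type covering, not the retrieval sketch.} The retrieval sketch does not by itself yield $\mu_N^*$-distributed outputs on non-keys at cost $I_p$. Use the type-covering argument in the proof of Lemma~\ref{lem:achievability}, run at fixed $p$ directly with $(\mu_K^*,\mu_N^*)$ on the finite alphabet $\calX$. Do not use an unrestricted minimizer of $R_p$, whose support need not lie in $\calX$. The cost is $F_p(\mu_K^*,\mu_N^*)+o(1)$ bits per key, which matches the bound up to $o(p)$.

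\textbf{A caveat shared with the paper's Lemma~\ref{lem:output-distribution-lower-bound}.} Independence of the shared randomness $R$ from $\calK$ does not make $\calK\to W\to Y$ a Markov chain. For example, a one-time pad $W=X^u\oplus R$ has $I(W;\calK)=0$. So the data-processing step really lower-bounds $I(W;\calK\mid R)$, which is still at most $H(W)$.
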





In the subsections below, we will present the main lemmas that lead to the above theorems. 
Most formal proofs are deferred to \cref{appdx:rate-distortion-proofs}. 

\subsection{Non-asymptotic lower bound on memory}
Fix $u,n\in \NN$. We first lower bound the necessary memory budget for achieving output distributions $\mu_K,\mu_N$ on key and non-key queries. 
Consider a random variable $X\sim \Bern(p)$, and $\hat X$ defined by the conditional distributions:
\[ \hat X \mid(X=1)\sim \mu_K,\; \text{ and }\; \hat X \mid (X=0)\sim \mu_N. \]

In the lemma below, we will use the function 
\[ F_p(\mu_K,\mu_N) = \frac{1}{p}I(X;\hat X) \]
as a proxy for the per-key memory cost of a membership tester with query output distributions $\mu_K$ and $\mu_N$.

\begin{lemma}\label{lem:output-distribution-lower-bound}
    Let $\calM$ be a membership tester for key size $n$ in universe $[u]$ with $u>n$. Let $p = \frac{n}{u}$ and let $F_p$ be as defined above. Then, the memory cost of $\calM$ is at least:
    \[ \frac{B(\calM)}{n}\geq F_p \big(\mu_K(\calM), \mu_N(\calM) \big) - \frac{\log (8n)}{2n}. \]
\end{lemma}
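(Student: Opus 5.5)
We lower bound $I(W;\calK)$ by comparing $\calK$ with an i.i.d.\ Bernoulli membership vector. Encode $\calK$ by its indicator vector $X^u=(X_1,\dots,X_u)\in\{0,1\}^u$, which is uniform over weight-$n$ vectors. Draw a uniformly random permutation $\sigma$ of $[u]$, independent of everything, and set $\hat X_i=\Query^\calM(\sigma(i),W)$ for $i\in[u]$; since $\sigma$ is independent of $(W,\calK)$, processing $W$ with extra independent randomness gives, by data processing,
\[
I(W;\calK)\ \ge\ I(W;X^u)\ \ge\ I(X^u;\hat X^u),
\]
where on the right we work with the relabeled indicator $X'_i=X_{\sigma(i)}$, still uniform over weight-$n$ vectors (a relabeling that is a bijection given $\sigma$). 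For cleanliness we condition on $\sigma$ and average; by permutation invariance (Remark~\ref{rmk:permutation-invariance}) each marginal pair $(X_i,\hat X_i)$ has law $\Pr[X_i=1]=p$, $\hat X_i\mid X_i=1\sim\mu_K(\calM)$, $\hat X_i\mid X_i=0\sim\mu_N(\calM)$. So every coordinate pair has exactly the joint law of $(X,\hat X)$ in the definition of $F_p$, and $I(X_i;\hat X_i)=pF_p$.

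We then single-letterize the way a rate-distortion converse does:
\[
I(X^u;\hat X^u)=H(X^u)-H(X^u\mid \hat X^u)\ \ge\ H(X^u)-\sum_{i=1}^u H(X_i\mid \hat X_i),
\]
by the chain rule and since conditioning reduces entropy. Since $\sum_i H(X_i)=u\,h(p)$, this gives
\[
I(X^u;\hat X^u)\ \ge\ \sum_i I(X_i;\hat X_i)-\bigl(u\,h(p)-H(X^u)\bigr)= n F_p-\Bigl(u\,h(p)-\log\tbinom{u}{n}\Bigr).
\]
For continuous $\hat X$ we use the general definition of mutual information, and the entropies of $X$ are all discrete, so the steps remain valid; alternatively we quantize $\hat X$ finely and take the supremum.

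The main obstacle is the deficiency term $u\,h(p)-\log\binom{u}{n}$, which must be at most $\tfrac12\log(8n)$. For this we apply the standard Stirling-type bound $\binom{u}{n}\ge \frac{2^{u h(p)}}{\sqrt{8 n(1-p)}}$ (e.g.\ from Robbins' form of Stirling's formula, valid for $1\le n<u$), which gives
\[
u\,h(p)-\log\tbinom{u}{n}\ \le\ \tfrac12\log\bigl(8n(1-p)\bigr)\ \le\ \tfrac12\log(8n).
\]
Dividing by $n$ gives $B(\calM)/n\ge F_p(\mu_K(\calM),\mu_N(\calM))-\frac{\log(8n)}{2n}$, as claimed. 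If verifying the constant $8$ is delicate, the fallback is the explicit Robbins bounds $\sqrt{2\pi m}(m/e)^m e^{1/(12m+1)}\le m!\le \sqrt{2\pi m}(m/e)^m e^{1/(12m)}$, which yield $\binom{u}{n}\ge 2^{uh(p)}\sqrt{u/(2\pi n(u-n))}\,e^{-1/6}$; this is at least the stated bound since $2\pi e^{1/3}<8$.
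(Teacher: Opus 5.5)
Your proposal is correct and is essentially the paper's proof. It uses the same three steps: data processing down to $I(X^u;\hat X^u)$, single-letterization by the chain rule and conditioning, and the Cover--Thomas bound $\binom{u}{n}\ge 2^{u h(p)}/\sqrt{8n(1-p)}$, which is the same as the paper's $\sqrt{u/(8n(u-n))}\,2^{uH(p)}$. The one difference is that you use permutation invariance to give every coordinate the law of $(X,\hat X)$ directly, instead of the paper's concavity-plus-Jensen step; your random $\sigma$ is therefore superfluous, and left unconditioned it would just reproduce that Jensen step.

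Your fallback, however, fails numerically. Since $2\pi e^{1/3}\approx 8.77>8$, and the constant $8$ is attained with equality at $u=2$, $n=1$, the crude Robbins estimate cannot give the constant $8$. You really do need the cited lemma, whose proof handles small $n$ and $u-n$ separately.
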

\begin{proof}
See \cref{appdx-subsec:output-distribution-lower-bound}. 
\end{proof}

We now state several technical properties of $F_p$. 
\begin{lemma}\label{lem:properties-of-Fp}
    For $p>0$, let $F_p(\mu_K,\mu_N) = I(X;\hat X)/p$, where $X\sim \Bern(p)$, $\hat X\mid(X=1)\sim \mu_K$, and $\hat X\mid (X=0)\sim \mu_N$.
    We also define $F_0(\mu_K, \mu_N) = \KL(\mu_K \| \mu_N)$.
    Then, for $p\in [0,1)$ and $(\mu_K, \mu_N) \in \calP([0,1])^2$:
    \begin{enumerate}[itemsep=1pt, topsep=1pt, partopsep=0pt, parsep=0pt]
        \item The function $(p,\mu_K, \mu_N) \mapsto F_p(\mu_K, \mu_N)$ is jointly lower semi-continuous.
        \item $F_p(\mu_K, \mu_N)$ is continuous in $p$. 
        \item $F_p$ is differentiable in $p$ with $\frac{\partial}{\partial p} F_p(\mu_K,\mu_N) = -\frac{\KL(\mu_N \| p\mu_K + (1-p)\mu_N)}{p^2}$ whenever $p>0$. 
    \end{enumerate}
\end{lemma}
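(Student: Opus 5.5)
The starting point is an explicit formula for $F_p$. Write $\nu_p = p\mu_K + (1-p)\mu_N$ for the law of $\hat X$. The standard decomposition of mutual information as an average KL divergence gives
\[ I(X;\hat X) = p\,\KL(\mu_K\|\nu_p) + (1-p)\,\KL(\mu_N\|\nu_p), \]
and hence
\[ F_p = \KL(\mu_K\|\nu_p) + \tfrac{1-p}{p}\KL(\mu_N\|\nu_p). \]
All three claims will be read off from this representation, using the Radon--Nikodym density $f = d\mu_K/d\nu_p$, which is well defined because $\mu_K \ll \nu_p$ whenever $p>0$.

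\textbf{Item 3 (derivative).} For fixed $p>0$, set $r = d\mu_K/d\mu$ and $s = d\mu_N/d\mu$ with respect to a common dominating measure $\mu$. Then
\[ pF_p = \int \big[p\,r\log r + (1-p)\,s\log s - (pr+(1-p)s)\log(pr+(1-p)s)\big]\,d\mu . \]
Differentiating under the integral in $p$, the chain-rule terms from $\log$ integrate to $\int (r-s)\,d\mu/\ln 2 = 0$. This yields
\[ \tfrac{d}{dp}(pF_p) = \KL(\mu_K\|\nu_p) - \KL(\mu_N\|\nu_p). \]
Consequently
\[ \partial_p F_p = \frac{p\,\tfrac{d}{dp}(pF_p) - pF_p}{p^2}, \]
and substituting the formula for $pF_p = I$ collapses the numerator to $-\KL(\mu_N\|\nu_p)$, as claimed. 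The interchange of derivative and integral is justified on compact subintervals of $(0,1)$ by dominated convergence. The integrand's $p$-derivative is $(r-s)\log\frac{r}{pr+(1-p)s}$ up to constants, and it is bounded by $|r-s|\cdot\log(1/p)$-type quantities, because $r/(pr+(1-p)s)\le 1/p$ and $s/(pr+(1-p)s)\le 1/(1-p)$. Note that $I\le h(p)$, so everything is finite.

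\textbf{Item 2 (continuity in $p$).} On $(0,1)$ continuity is immediate from differentiability. At $p=0$ we need $F_p\to\KL(\mu_K\|\mu_N)$, and I expect this to be the main obstacle, since $\KL(\mu_K\|\mu_N)$ may be infinite. I would use the data-processing/convexity monotonicity of the first term: $\KL(\mu_K\|\nu_p)$ increases as $p\downarrow0$, because $\nu_p$ moves toward $\mu_N$ along a segment and KL is convex in its second argument with value $0$ at $p=1$. This term converges to $\KL(\mu_K\|\mu_N)$ by monotone convergence applied to $\int f\log f\,d\nu_p$, written as $\int r\log\frac{r}{pr+(1-p)s}$, where the integrand increases to $r\log(r/s)$ on $\{r>0\}$.

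For the second term, $\tfrac{1-p}{p}\KL(\mu_N\|\nu_p)$, I would show it tends to $0$ when $\KL(\mu_K\|\mu_N)<\infty$. Write $\mu_N = (\nu_p - p\mu_K)/(1-p)$ and use $\KL(\mu_N\|\nu_p)\le \log\EE_{\mu_N}[d\mu_N/d\nu_p]$, i.e.\ a $\chi^2$-type bound. A more robust route is to bound $-\log(1-p+p\,r/s) \le$ a quantity whose $\mu_N$-integral is $o(p)$, by dominated convergence on the pointwise expansion $\tfrac1p\big[(1-p)\log\frac{1}{1-p+p\,r/s}\cdot\ldots\big]$. If $\KL(\mu_K\|\mu_N)=\infty$, only the first term matters, since the second is nonnegative, so $F_p\ge\KL(\mu_K\|\nu_p)\to\infty$. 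Actually it is cleaner to use $F_p = \tfrac1p I$, with $I$ expressed as $p\,\KL(\mu_K\|\mu_N) - \KL(\nu_p\|\mu_N)$ when finite. Then $\KL(\nu_p\|\mu_N)\le p\,\KL(\mu_K\|\mu_N)$ by convexity, and it is $o(p)$ because $\KL(\nu_p\|\mu_N)/p$ equals the chord slope of a convex function vanishing at $0$ with derivative $0$ there (the derivative $\int (r-s)\log(\cdot)$ vanishes at $p=0$).

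\textbf{Item 1 (joint lower semicontinuity).} For $p>0$, use the Donsker--Varadhan/Gibbs variational representation of mutual information: $I(X;\hat X)$ is the supremum over bounded continuous $g$ of
\[ p\,\EE_{\mu_K}g(\cdot,1)+(1-p)\,\EE_{\mu_N}g(\cdot,0) - \log\big(p\,\EE_{\nu_p}\ldots\big). \]
Equivalently, $I = \KL(P_{X\hat X}\|P_X\otimes P_{\hat X})$, and $\KL$ is jointly weakly l.s.c. The joint law and the product law both depend weakly-continuously on $(p,\mu_K,\mu_N)$, and dividing by $p>0$ preserves l.s.c. At points with $p=0$, take $p_j\to0$, $\mu_{K,j}\to\mu_K$, $\mu_{N,j}\to\mu_N$. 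The monotonicity from item 2 gives $F_{p_j}(\mu_{K,j},\mu_{N,j}) \ge \KL(\mu_{K,j}\|\nu_{p_j,j})$. Here $\nu_{p_j,j}\to\mu_N$ weakly, so joint l.s.c.\ of KL yields $\liminf \ge \KL(\mu_K\|\mu_N)=F_0$. The $p=0$ row itself is just l.s.c.\ of KL.
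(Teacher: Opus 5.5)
Your proof is correct. For Items 1 and 3 it follows the paper's argument:
\begin{itemize}
\item Item 1 uses the same decomposition $F_p=\KL(\mu_K\|\nu_p)+\frac{1-p}{p}\KL(\mu_N\|\nu_p)$. Joint l.s.c.\ of KL handles $p>0$, and at $p=0$ you drop the nonnegative second term.
\item Item 3 differentiates under the integral, the chain-rule terms cancel, and the quotient rule finishes.
\end{itemize}

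The genuine difference is Item 2 at $p=0$. The paper keeps the same decomposition. It squeezes $\KL(\mu_K\|\nu_p)$ between l.s.c.\ and the convexity bound $(1-p)\KL(\mu_K\|\mu_N)$. It then kills $\frac{1-p}{p}\KL(\mu_N\|\nu_p)$ by L'H\^opital, citing the external fact that $\frac{d}{dp}\KL(\mu_N\|\nu_p)$ vanishes at $p=0$ when $\mu_K\ll\mu_N$.

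Your final route instead changes the reference measure to $\mu_N$. This gives $F_p=\KL(\mu_K\|\mu_N)-\KL(\nu_p\|\mu_N)/p$, so only one limit is needed, and that limit can be made fully self-contained. Take $\psi(x)=x\log x$ and $t=r/s$, using $r=0$ a.e.\ on $\{s=0\}$ by finiteness of the KL. Then $\KL(\nu_p\|\mu_N)/p=\int s\,\psi(1+p(t-1))/p\,d\mu$. The integrand has these properties:
\begin{itemize}
\item it is monotone in $p$ by convexity of $\psi$;
\item it is bounded above by its $p=1$ value $s\,\psi(t)$, whose integral is $\KL(\mu_K\|\mu_N)<\infty$;
\item it is bounded below by $\psi'(1)(r-s)$.
\end{itemize}
Monotone convergence then gives the limit $\psi'(1)\int(r-s)\,d\mu=0$. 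This is the precise content of your ``chord slope with derivative $0$'' remark. The paper's version keeps one decomposition for all three items but leans on a citation; yours avoids it.

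There are two small slips, neither affecting the conclusion.
\begin{itemize}
\item The integrand $r\log\frac{r}{pr+(1-p)s}$ is not pointwise increasing as $p\downarrow 0$; it decreases on $\{r<s\}$. So ``monotone convergence'' is the wrong justification for $\KL(\mu_K\|\nu_p)\to\KL(\mu_K\|\mu_N)$. Your convexity observation plus l.s.c.\ gives it instead. In any case you only need it when $\KL(\mu_K\|\mu_N)=\infty$, where l.s.c.\ alone suffices.
\item In Item 3, the $p$-derivative of your integrand is $r\log\frac{r}{m}-s\log\frac{s}{m}-\frac{r-s}{\ln 2}$ with $m=pr+(1-p)s$, not $(r-s)\log\frac{r}{m}$. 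Domination needs, besides $r/m\le 1/p$, the lower bound $r\log\frac{r}{m}\ge -\frac{m}{e\ln 2}$, which is integrable.
\end{itemize}
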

\begin{proof}
    See \cref{appdx-subsec:properties-of-Fp}. 
\end{proof}

\subsection{Memory-error tradeoff for custom error metrics}
\label{subsec:space-error-tradeoff}
Now we define an analog of the rate-distortion function for membership testing. Fixing $d^K$ and $d^N$, for each $p\in (0,1)$, let $R_p(\eps_K, \eps_N)$ be the minimum memory cost per key for given error constraints:
\[ R_p(\eps_K, \eps_N) = \min_{\mu_K\in \calC_K(\eps_K), \mu_N\in \calC_N(\eps_N)} F_p(\mu_K, \mu_N), \]
where the minimum can be attained since, with respect to the weak-* topology, $\calC_K(\eps_K)$ and $\calC_N(\eps_N)$ are closed subsets of $\calP([0,1])$, and $F_p$ is lower semi-continuous in $(\mu_K,\mu_N)$. 

By taking minimum over all $(\mu_K, \mu_N)$ in the feasible region and applying Lemma \ref{lem:output-distribution-lower-bound}, we immediately have the following memory lower bound for given error constraints:
\begin{corollary}\label{cor:space-error-lower-bound}
    Fix any pair of error metrics $d^K$ and $d^N$ and error rates $\eps_K$ and $\eps_N$. Suppose $\calM$ is a membership tester satisfying the error constraints for universe $[u]$ and key sizes $n$ with $p=\frac{n}{u}$. Then, we have:
    \[ \frac{B(\calM)}{n} \geq R_p(\eps_K, \eps_N) - \frac{\log(8n)}{2n}. \]
\end{corollary}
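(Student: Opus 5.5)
This follows directly from \cref{lem:output-distribution-lower-bound} together with the definition of $R_p$; the only work is checking that the minimum defining $R_p$ is taken over a set that contains the actual output distributions of $\calM$.

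\textbf{Step 1 (feasibility).} Let $\calM$ satisfy the error constraints. By permutation-invariance (\cref{rmk:permutation-invariance}), the output distributions $\mu_K(\calM)$ and $\mu_N(\calM)$ of \cref{def:score-distributions} are well defined. The constraints $\EE_{\hat X\sim\mu_K(\calM)}[d^K(\hat X)]\le\eps_K$ and $\EE_{\hat X\sim\mu_N(\calM)}[d^N(\hat X)]\le\eps_N$ are, by definition, the statements
\[
\mu_K(\calM)\in\calC_K(\eps_K), \qquad \mu_N(\calM)\in\calC_N(\eps_N).
\]

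\textbf{Step 2 (comparison with the minimum).} Since $R_p(\eps_K,\eps_N)$ is the minimum of $F_p$ over $\calC_K(\eps_K)\times\calC_N(\eps_N)$, Step 1 gives
\[
F_p\big(\mu_K(\calM),\mu_N(\calM)\big)\ \ge\ R_p(\eps_K,\eps_N).
\]
Attainment of the minimum is not needed here; the infimum would suffice. In any case, attainment was already justified before the corollary via closedness of $\calC_K(\eps_K)$ and $\calC_N(\eps_N)$ in the weak-* topology and lower semicontinuity of $F_p$ (\cref{lem:properties-of-Fp}).

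\textbf{Step 3 (conclusion).} \Cref{lem:output-distribution-lower-bound} applies because $u>n$, i.e.\ $p=n/u\in(0,1)$, which is the range on which $R_p$ is defined. It gives
\[
\frac{B(\calM)}{n}\ \ge\ F_p\big(\mu_K(\calM),\mu_N(\calM)\big)-\frac{\log(8n)}{2n}.
\]
Chaining this with the inequality from Step 2 yields the claimed bound $\frac{B(\calM)}{n}\ge R_p(\eps_K,\eps_N)-\frac{\log(8n)}{2n}$.

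There is no real obstacle in this argument. The substantive difficulty lies entirely in \cref{lem:output-distribution-lower-bound}, which is assumed here.
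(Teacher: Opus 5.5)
Your proposal is correct and follows the paper's own one-line argument: the tester's output distributions lie in $\calC_K(\eps_K)\times\calC_N(\eps_N)$, so $F_p(\mu_K(\calM),\mu_N(\calM))\ge R_p(\eps_K,\eps_N)$, and chaining this with \cref{lem:output-distribution-lower-bound} gives the bound. Your remark that the infimum alone suffices, without attainment, is also accurate.
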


We show that this lower bound is achievable for fixed $p$. 
\begin{lemma}\label{lem:achievability}
    Fix any $p=\frac{n}{u}\in (0,1)$ and $\eps_K, \eps_N > 0$. Then, for all $\delta>0$, there is a sufficiently large $n$ and $u$ such that there exists a membership tester $\calM$ for universe $[u]$ and key sizes $n$ which achieves error rates $(\eps_K + \delta, \eps_N + \delta)$, and:
    \[ \frac{B(\calM)}{n} \leq R_p(\eps_K, \eps_N) + \delta. \]
\end{lemma}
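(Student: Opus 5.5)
We will prove achievability with a random-coding construction in the style of Shannon's rate-distortion theorem, run over the whole universe. Fix $\delta>0$ and take a minimizer $(\mu_K^*,\mu_N^*)$ of $F_p$ over $\calC_K(\eps_K)\times\calC_N(\eps_N)$.

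\textbf{Reduction to finitely supported, well-behaved distributions.} First we replace $(\mu_K^*,\mu_N^*)$ by $(\tilde\mu_K,\tilde\mu_N)$ that are supported on a common finite set $\calX\subseteq[0,1]$, lie in $\calC_K(\eps_K+\delta/2)\times\calC_N(\eps_N+\delta/2)$, and satisfy $F_p(\tilde\mu_K,\tilde\mu_N)\le R_p(\eps_K,\eps_N)+\delta/2$. Here the slack $\eps_K,\eps_N>0$ is what makes the step work. We mix a small weight $\eta$ of the point $c$ from Assumption~\ref{assump:error_metrics} into both measures, so that errors stay finite and the supports become comparable. Then we quantize: each point goes to a nearby grid point chosen so that $d^K$ and $d^N$ do not increase by much. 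Since $I(X;\hat X)$ is a data-processing quantity, quantizing $\hat X$ cannot increase $F_p$. The error side needs more care. We use lower semicontinuity, approximating $d^{K}, d^N$ from below by continuous functions and truncating, to control the expected errors after the perturbation. Convexity of mutual information in the channel handles the mixing step. \textbf{This approximation step is where I expect the main technical obstacle.} Lower semicontinuous metrics that may equal $+\infty$ do not behave well under quantization. If the direct argument fails, I would first try restricting to quantizations that map each point toward the value $1$ (for keys) or $0$ (for non-keys), depending on monotonicity, and otherwise fall back on the mixture with $c$.

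\textbf{Coding scheme.} Let $Q=p\tilde\mu_K+(1-p)\tilde\mu_N$ on $\calX$. Using shared randomness, draw $2^{nR}$ codewords $w\in\calX^u$ i.i.d.\ from $Q^{\otimes u}$, with $R=F_p(\tilde\mu_K,\tilde\mu_N)+\delta/4$. Then $uI(X;\hat X)=nF_p$, so the exponent matches $u\cdot I(X;\hat X)$ plus slack. $\Init$ receives $\calK$ and forms its indicator vector $x\in\{0,1\}^u$. The key set is a uniform $n$-subset, so $x$ has exactly the type $\Bern(p)$. $\Init$ then stores the index of a codeword $w$ that is jointly typical with $x$ under the joint law of $(X,\hat X)$. $\Query(i,W)$ returns $w_i$. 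By the covering lemma for a fixed-type source, such a codeword exists with probability $1-o(1)$ as $u\to\infty$ with $p$ fixed. On failure, the encoder stores a fallback that outputs $c$ everywhere, which has finite error. We then apply the permutation symmetrization of Remark~\ref{rmk:permutation-invariance}. Joint typicality gives the empirical distribution of $w$ on keys $\approx\tilde\mu_K$ and on non-keys $\approx\tilde\mu_N$. Since $\calX$ is finite and $d^K,d^N$ are finite on $\calX$, these empirical laws transfer to expected errors within $\delta/2$. Hence the error rates are at most $(\eps_K+\delta,\eps_N+\delta)$.

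\textbf{Memory bound.} The state $W$ is an index of $nR$ bits, plus one flag bit. Therefore $B(\calM)=I(W;\calK)\le H(W)\le nR+1$. Dividing by $n$ and taking $n$ large gives $B(\calM)/n\le R_p(\eps_K,\eps_N)+\delta$.

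The only delicate limit is the covering lemma. Because $p$ is fixed, the block length $u=n/p$ grows, so the standard typicality bounds apply with a finite alphabet $\{0,1\}\times\calX$. The main difficulty therefore remains the first step, the reduction to finitely supported distributions.
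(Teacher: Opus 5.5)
Your overall architecture matches the paper's. You reduce to a finitely supported (near-)optimal pair, then run a type-covering argument for the fixed-type source $x^u$ with exactly $n$ ones, paying $\log$ of the codebook size. Drawing codewords i.i.d.\ from $Q^{\otimes u}$ with a fallback, instead of exhibiting one codebook that covers all $\binom{u}{n}$ sources, is a harmless variation. The gap is the step you flag yourself, the reduction to finite support, and the mechanisms you sketch for it do not work as stated.

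\begin{itemize}
\item The quantizer must be a single deterministic map of $\hat x$, since otherwise data processing no longer controls $F_p$. Lower semicontinuity gives no upper control of $d^K,d^N$ at nearby points. The map must also keep both metrics controlled at once, including where one metric is large or infinite but the other measure puts substantial mass; keys typically sit where $d^N$ is large.
\item Your fallback moves points toward $1$ for keys and toward $0$ for non-keys. That makes the map depend on $X$, so $I(X;\hat X)$ may increase. It also relies on a monotonicity of $d^K,d^N$ that is not assumed.
\item Your claim that $d^K,d^N$ are finite on $\calX$ can be impossible to arrange. Take $d^K(x)=-\ln x$, $d^N(0)=0$ and $d^N\equiv 1$ on $(0,1]$; both are lower semi-continuous and $c=1/2$ is admissible. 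Every $\tilde\mu_N$ with error below $1$ then has an atom at $0$, where $d^K=\infty$. Your error-transfer step therefore needs the zero-mass clause of strong conditional typicality, so that no key is ever assigned a value of $\tilde\mu_K$-mass zero. The paper gets this from Csisz\'ar--K\"orner's Definition 2.9.
\end{itemize}

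The missing idea is that no approximation is needed: the paper proves an exact finite-support reduction via Carath\'eodory's theorem. At a minimizer, let $q_{\hat x}$ be the posterior probability of $X=1$ given $\hat X=\hat x$. Set $f(\hat x)=\big(q_{\hat x},\,q_{\hat x}d^K(\hat x),\,(1-q_{\hat x})d^N(\hat x),\,h(q_{\hat x})\big)\in\RR^4$. Its mean $\big(p,\,p\eps_K,\,(1-p)\eps_N,\,h(p)-pF_p\big)$ is a convex combination $\sum_{i\le 5}\alpha_i f(\hat x_i)$. Then $\mu_K'=\frac1p\sum_i\alpha_iq_{\hat x_i}\delta_{\hat x_i}$ and $\mu_N'=\frac1{1-p}\sum_i\alpha_i(1-q_{\hat x_i})\delta_{\hat x_i}$ have exactly the same error levels and the same $F_p$.

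If you prefer to keep your quantization route, quantize in the error plane rather than in $[0,1]$. Fix a $\theta$-grid $a_0<a_1<\dots$ up to a level $L$. Send each $\hat x$ with $d^K(\hat x)\in[a_i,a_{i+1})$ to a minimizer of $d^N$ over the compact set $\{d^K\le a_{i+1}\}$. This raises $d^K$ by at most $\theta$ and does not raise $d^N$. Treat $\{d^K>L,\ d^N\le L\}$ symmetrically. Send the remaining points, whose masses are at most $\eps_K/L$ and $\eps_N/L$ under the two measures, to $c$. This map is deterministic, so $F_p$ cannot increase, and the mixing with $c$ becomes unnecessary.
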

\begin{proof}
    See \cref{appendix:proof-lem-achievability}. 
\end{proof}

\paragraph{Proof roadmap.}
\cref{thm:rate-distortion-main-1} follows by combining the non-asymptotic lower bound with the achievability result above and then sending $p=n/u\to0$.
The lower bound uses compactness of $\calP([0,1])$ and lower semi-continuity of $F_p$ to pass to a limiting pair $(\mu_K,\mu_N)$, while achievability follows from Lemma \ref{lem:achievability} and continuity of $F_p$ at $p=0$.
The complete proofs of \cref{thm:rate-distortion-main-1,thm:rate-distortion-main-2,thm:rate-distortion-main-4} appear in \cref{appdx:proof-thm-rate-distortion-main-1,appdx:proof-thm-rate-distortion-main-2,appdx:proof-thm-rate-distortion-main-4}.

%% file: src/hallucination.tex
\section{Hallucination on Random Facts}\label{sec:application-LLM}
Fix a universe $\mathcal U$ of $u$ unstructured \emph{potential facts}; each $i\in\mathcal U$ is a plausible natural-language claim (e.g., detailed biographies as in \citet{Allen-ZhuL24-KnowledgeStorageAndExtraction}).
The set of known facts is the key set $\mathcal K\subseteq \mathcal U$.
In the \emph{random-facts} regime that motivates our study, $\mathcal K$ is a subset of size $n$ drawn uniformly at random from $\mathcal U$.
This regime intentionally isolates non-generalizable factual knowledge, forcing the LLM to behave like a membership tester for $\calK$.

Consider an LLM that is trained (among other tasks) to distinguish facts in $\calK$ from all other non-facts.
Viewing the LLM as a membership tester for $\calK$, \textsf{Init} corresponds to training on labeled data, and \textsf{Query} corresponds to the inference-time response (or internal evaluation) to a plausible claim $i\in\calU$.
We study two natural regimes of factuality judgment:
\begin{enumerate}
  \item \textbf{Probability estimation.} The LLM generates a confidence score $\hat x_i\in[0,1]$ as an estimate of $\PP[i\in\calK]$.
  \item \textbf{Binary decision.} The LLM (possibly after thresholding or other post-processing) induces a binary decision $\hat x_i\in\{0,1\}$ indicating whether $i$ is accepted as a fact.
\end{enumerate}

In both cases, \cref{thm:rate-distortion-main-1} reduces the minimum per-fact memory budget allocated to random facts, measured by $I(W;\calK)/n$, to a convex optimization problem under the corresponding error constraints.
We refer to \cref{appendix:random-facts} for a detailed discussion of the memory budget in LLMs.

\subsection{Probability estimation variant}\label{subsec:prob-est-LLM}
In this variant, a query $i$ produces a fractional confidence $\hat x\in[0,1]$.
Let $\mu_K$ and $\mu_N$ denote the distribution of $\hat x$ when $i$ is a fact ($i\in\calK$) or a non-fact ($i\in\calU\setminus\calK$), respectively.
We measure average factuality error by log-loss constraints:
\[
  \EE_{\hat x\sim \mu_K}[-\ln \hat x]\le \eps_K,
  \qquad
  \EE_{\hat x\sim \mu_N}[-\ln(1-\hat x)]\le \eps_N,
\]
where a weighted sum of the two errors corresponds to binary cross-entropy loss.
This evaluation metric is natural for the practical need to \textit{quantify the confidence} $\PP[True]$ of a factual output~\cite{WenYFXTHW25-KnowYourLimits}, and is consistent with proposed training/evaluation methods for studying whether LLMs ``know what they know''\cite{Anthropic22-LLM-know-what-they-know, ChengSL+24-AI-know-what-they-dont-know}.

Our main conclusion is that, under log-loss, the \emph{unique} optimal solution is intrinsically \emph{asymmetric}: it drives all facts to a single high-confidence point, while forcing a nonzero fraction of non-facts to share the same point.

\begin{theorem}\label{thm:prob-est-LLM}
  In the non-trivial regime where $\eps_K>0$, $\eps_N>0$, and $e^{-\eps_K} + e^{-\eps_N} > 1$, the unique minimizers $(\mu_K^*, \mu_N^*)$ of
  \begin{align*}
    & \min_{\mu_K, \mu_N} \KL(\mu_K \,\|\, \mu_N) \\
    & \text{subject to } \EE_{\hat x\sim \mu_K}[-\ln \hat x] \le \eps_K,\;
    \EE_{\hat x\sim \mu_N}[-\ln (1-\hat x)] \le \eps_N
  \end{align*}
  are given by
  \[
    \mu_K^* = \delta_{x^*},
    \qquad
    \mu_N^* = (1-q^*) \delta_0 + q^* \delta_{x^*},
  \]
  where $x^* = e^{-\eps_K}$ and $q^* = \frac{\eps_N}{-\ln(1-x^*)}$.
\end{theorem}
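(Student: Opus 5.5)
The plan is to lower-bound $\KL(\mu_K\|\mu_N)$ for an arbitrary feasible pair in two Jensen steps, and then to read off uniqueness from the equality cases. Write $f(x) = -\ln(1-x)$ for the non-key loss. Note $f(0)=0$, $f$ is increasing and $f(1)=\infty$.

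\textbf{Step 1 (reduce to a problem in $\mu_K$ alone).} Take a feasible pair $(\mu_K,\mu_N)$ with $\KL(\mu_K\|\mu_N)<\infty$; otherwise there is nothing to prove. Then $\mu_K \ll \mu_N$. Decompose $\mu_N = g\,\mu_K + \nu$ with $\nu \perp \mu_K$ and $g = d\mu_N/d\mu_K$ on the part of $\mu_N$ absolutely continuous w.r.t.\ $\mu_K$, so that $\KL(\mu_K\|\mu_N) = \EE_{\mu_K}[-\log g]$. Since $\mu_K(\{1\})=0$ is not forced, note first that $\EE_{\mu_N}[f]\le\eps_N<\infty$ gives $\mu_N(\{1\})=0$, and hence $\mu_K(\{1\})=0$; so $f<\infty$ $\mu_K$-a.e. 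Also $\mu_K(\{0\})=0$ because $\eps_K<\infty$, so $f>0$ $\mu_K$-a.e. Now split
\[
\EE_{\mu_K}[-\log g] = \EE_{\mu_K}[-\log (g f)] + \EE_{\mu_K}[\log f] \;\ge\; -\log \EE_{\mu_K}[g f] + \EE_{\mu_K}[\log f],
\]
using Jensen on the concave $\log$. Since $\EE_{\mu_K}[gf] \le \EE_{\mu_N}[f] \le \eps_N$ (the singular part $\nu$ only adds nonnegative loss), this gives
\[
\KL(\mu_K\|\mu_N)\ \ge\ \EE_{\mu_K}[\log f(\hat X)] - \log \eps_N .
\]
This bound is the Lagrangian intuition made rigorous. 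For fixed $\mu_K$, the best $\mu_N$ puts its free mass at $0$ and has density proportional to $1/f$ on $\supp(\mu_K)$.

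\textbf{Step 2 (optimize over $\mu_K$).} Substitute $Y = -\ln \hat X \ge 0$, so the key constraint reads $\EE[Y]\le\eps_K$. Set $h(y) = \log\!\big(-\ln(1-e^{-y})\big)$, so that $\log f(\hat X) = h(Y)$. The function $h$ is decreasing. If $h$ is also convex on $(0,\infty)$, Jensen gives
\[
\EE[h(Y)] \ge h(\EE Y) \ge h(\eps_K) = \log\!\big(-\ln(1-x^*)\big),
\]
and hence $\KL \ge \log\frac{-\ln(1-x^*)}{\eps_N} = \log(1/q^*)$. The candidate pair attains this bound: $\mu_K^*=\delta_{x^*}$ is feasible with equality, and $\mu_N^*$ has $\KL(\delta_{x^*}\|\mu_N^*)=\log(1/q^*)$ and non-key loss $q^* f(x^*)=\eps_N$. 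The regime condition $e^{-\eps_K}+e^{-\eps_N}>1$ is exactly $1-x^*<e^{-\eps_N}$, i.e.\ $q^*<1$, so $\mu_N^*$ is a genuine probability measure. The conditions $\eps_K,\eps_N>0$ rule out degenerate endpoints.

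\textbf{Main obstacle: strict convexity of $h$.} I expect this to be the main technical step. Write $h = \log\circ\, \phi$ with $\phi(y) = -\ln(1-e^{-y})$, which is positive, decreasing and convex. Log-convexity of $\phi$ amounts to $\phi\phi''\ge(\phi')^2$. Substituting $t=e^{-y}\in(0,1)$, we have $\phi' = -t/(1-t)$ and $\phi'' = t/(1-t)^2$. The inequality therefore reduces to $-\ln(1-t)\ge t$, which holds strictly for $t\in(0,1)$. So $h$ is strictly convex. I would double-check this one-variable inequality and its strictness carefully.

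\textbf{Step 3 (uniqueness).} I would trace the equality cases.
\begin{itemize}
\item Strict convexity of $h$ together with $h$ being strictly decreasing forces $Y\equiv\eps_K$ a.s., i.e.\ $\mu_K=\delta_{x^*}$.
\item Equality in the first Jensen step forces $gf$ to be constant $\mu_K$-a.e., which is automatic for a point mass.
\item Equality in $\EE_{\mu_K}[gf]\le\EE_{\mu_N}[f]\le\eps_N$ forces three things: $\EE_{\nu}[f]=0$, so $\nu$ is concentrated where $f=0$, i.e.\ at $0$; the non-key constraint is tight, giving $g\,f(x^*)=\eps_N$, i.e.\ $\mu_N(\{x^*\}) = q^*$; and hence $\mu_N=\mu_N^*$.
\end{itemize}
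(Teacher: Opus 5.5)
Your proof is correct, and it takes a genuinely different route from the paper's.

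The paper proceeds by Lagrangian duality:
\begin{itemize}
\item it eliminates $\mu_K$ for fixed $\mu_N$ via Donsker--Varadhan, giving $d\mu_K/d\mu_N \propto x^{\lambda_K}$;
\item it applies a G\^ateaux-derivative KKT support condition to the remaining convex functional of $\mu_N$ and proves $\lambda_K>1$;
\item it analyzes the shape of $g(x)=-x^{\lambda_K}/C^*-\lambda_N\ln(1-x)$ to conclude $\supp(\mu_N^*)=\{0,x^*\}$;
\item it reads off $x^*$ and $q^*$ from the tight constraints and exhibits explicit multipliers.
\end{itemize}

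You eliminate in the opposite order. Your Step 1 is the inner minimization over $\mu_N$ for fixed $\mu_K$; the bound $\KL(\mu_K\|\mu_N)\ge\EE_{\mu_K}[\log f]-\log\eps_N$ is exactly its value when the non-key constraint binds. The outer problem then becomes a one-dimensional Jensen argument, closed by the strict log-convexity of $y\mapsto-\ln(1-e^{-y})$, which reduces to $-\ln(1-t)>t$. I checked that derivative computation and it is right. I also checked the measure-theoretic bookkeeping (Lebesgue decomposition, $g>0$ $\mu_K$-a.e., no $\infty-\infty$ in the splitting) and the equality cases, and they hold.

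What your route buys is rigor at low cost. You never need existence of multipliers or strong duality. You never need G\^ateaux differentiability of functionals with unbounded integrands like $-\ln(1-x)$. You never need to exclude a singleton-supported $\mu_N^*$ a priori. The paper treats all three of these points informally. In your proof, uniqueness drops out of the equality cases in Jensen rather than from a shape analysis. Your argument also makes the hallucination channel transparent: for any key distribution, the cheapest non-key distribution must put all its positive-loss mass on $\supp(\mu_K)$.

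The paper's KKT template is the more systematic tool for other metric pairs, and it yields the multipliers $\lambda_K,\lambda_N$ as a by-product. Your proof instead relies on the specific convexity of $\log f$ viewed as a function of the key loss, and that convexity is exactly what forces the optimal key distribution to be a single atom.
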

\cref{fig:logloss-empirical} illustrates the shape of the optimal non-fact distribution in blue.
The proof appears in \cref{appdx:LLM-soln}.
To solve this optimization problem over probability distributions, we use variational calculus and verify the KKT conditions.

\noindent\textbf{Hallucination channel and its cost.}
The optimal strategy for a given memory budget is to output a single high-confidence value $x^*$ on all facts, while assigning \emph{the same} high-confidence value $x^*$ to a $q^*$ fraction of non-facts, forming a ``hallucination channel.''
These non-facts are indistinguishable from facts by any downstream procedure that observes only $\hat x$.
Moreover, the per-key memory lower bound simplifies to
\[
  \KL(\mu_K^* \| \mu_N^*) \; = \; \log\frac{1}{q^*}.
\]
Hence, the hallucination probability $q^*$ is \textit{solely determined by the memory capacity} dedicated to storing random facts, regardless of how we trade off the two types of errors.

\noindent\textbf{Resistance to thresholding.}
The non-fact loss $-\ln(1-\hat x)$ heavily penalizes large $\hat x$, so one might expect $\mu_N^*$ to spread its mass over smaller or intermediate confidences, enabling hallucination removal by thresholding (perhaps at the cost of forgetting some facts).
\cref{thm:prob-est-LLM} rules this out: on the Pareto frontier, non-facts must place an atom at the same $x^*$ used for facts.
Thus, a threshold that accepts any fact must also accept these hallucinations, and any threshold that rejects any hallucination must also reject all facts.


\begin{figure*}[t]
  \centering
  \includegraphics[width=0.7\linewidth]{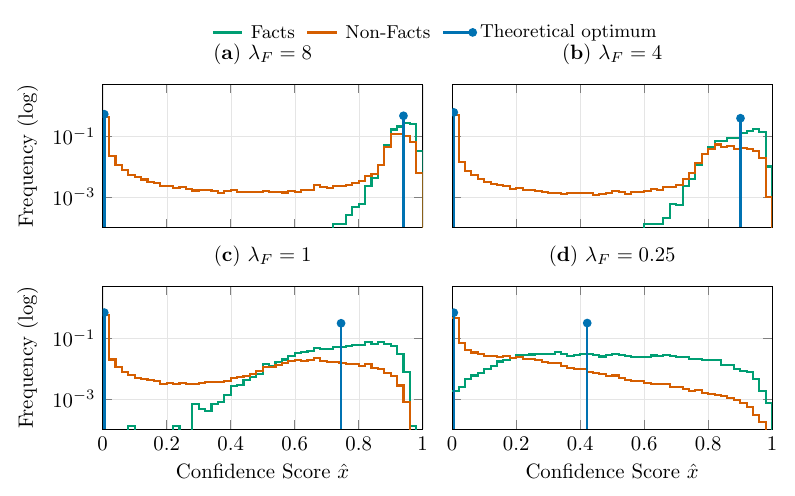}
  \caption{Output distributions on facts vs.\ non-facts ($y$-axis is in log-scale) across different choice of $\lambda_F$ for the model with 15145 parameters (1 per fact). The blue stems indicate the predicted memory-optimal atoms for non-facts under the same empirical loss values.  }
  \label{fig:logloss-empirical}
\end{figure*}

\subsection{Binary decision variant}\label{subsec:binary-decision-LLM}
We now consider \textit{any binary decision rule} on the potential fact universe induced by the LLM.
This covers thresholding the probability estimation in \cref{subsec:prob-est-LLM}, as well as any classifier arising from generative hallucination probabilities (e.g., in \citet{KalaiNNSV-2025-WhyLLMsHallucinate}).
Formally, any such pipeline induces a (randomized) decision $\hat x_i\in\{0,1\}$ for each $i\in\calU,$ indicating either acceptance or rejection.
We place error constraints on false negative rates (FNR) and false positive rates (FPR):
\[
  \EE_{\hat x\sim \mu_K}[1-\hat x] \le \eps_K,
  \qquad
  \EE_{\hat x\sim \mu_N}[\hat x] \le \eps_N,
\]

\begin{remark}
  The FNR/FPR error constraints naturally extend to all real-valued scores $\hat x \in [0,1]$. However, \cref{lem:binary-reduction} shows that for these error constraints, restricting attention to binary outputs $\{0,1\}$ incurs no loss of optimality in the KL-minimization governing the memory bound.
\end{remark}

\begin{theorem}\label{thm:binary-decision-LLM}
  In the non-trivial regime where $\eps_K, \eps_N \ge 0$ and $\eps_K+\eps_N<1$, the minimizers $(\mu_K^*, \mu_N^*)$ of
  \begin{align*}
    & \min_{\mu_K, \mu_N} \KL(\mu_K \,\|\, \mu_N) \\
    & \text{subject to } \EE_{\hat x\sim \mu_K}[1 - \hat x] \le \eps_K,\;
    \EE_{\hat x\sim \mu_N}[\hat x] \le \eps_N
  \end{align*}
  are given by
  \[
    \mu_K^* = \Bern(1-\eps_K),
    \qquad
    \mu_N^* = \Bern(\eps_N).
  \]
\end{theorem}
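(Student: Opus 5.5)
The plan has two steps: reduce to binary outputs, then solve a two-parameter problem. The reduction uses the data processing inequality for KL. Given any feasible pair $(\mu_K,\mu_N)$ on $[0,1]$, consider the Markov kernel that sends a score $\hat x\in[0,1]$ to a random bit $B\sim\Bern(\hat x)$. Its pushforwards are
\[
\mu_K\mapsto \Bern(m_K), \qquad \mu_N\mapsto \Bern(m_N),
\]
where $m_K=\EE_{\mu_K}[\hat x]$ and $m_N=\EE_{\mu_N}[\hat x]$. Data processing gives
\[
\KL(\Bern(m_K)\|\Bern(m_N))\le \KL(\mu_K\|\mu_N).
\]
The constraints depend only on the means, since both metrics are linear in $\hat x$. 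So the pushed-forward pair is still feasible, with $1-m_K\le\eps_K$ and $m_N\le\eps_N$. This is presumably the content of \cref{lem:binary-reduction}, and I would invoke it or reprove it this way. The minimum value therefore equals
\[
\min\{\, \mathrm{kl}(a\|b) : a\ge 1-\eps_K,\ b\le\eps_N \,\},
\]
where $\mathrm{kl}$ denotes the binary KL divergence.

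Next I solve the two-parameter problem. In the regime $\eps_K+\eps_N<1$ we have $1-\eps_K>\eps_N$, so every feasible pair satisfies $a\ge 1-\eps_K>\eps_N\ge b$. The function $\mathrm{kl}(a\|b)$ is jointly convex and vanishes exactly at $a=b$. For $a>b$ it is strictly increasing in $a$ and strictly decreasing in $b$. Both claims follow from
\[
\partial_a\,\mathrm{kl}=\log\frac{a(1-b)}{b(1-a)}>0, \qquad \partial_b\,\mathrm{kl}=\frac{b-a}{b(1-b)\ln 2}<0 .
\]
Hence the unique minimizer of the reduced problem is $a=1-\eps_K$, $b=\eps_N$. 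This gives $\mu_K^*=\Bern(1-\eps_K)$ and $\mu_N^*=\Bern(\eps_N)$ as minimizers. Boundary cases need a short check: when $\eps_N=0$, the divergence is finite only if $a=1$, which requires $\eps_K=0$. In every case the claimed pair attains the bound.

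The main subtlety is deciding whether the statement requires that all minimizers be these Bernoullis, since the phrase "the minimizers are given by" might assert uniqueness among all measures on $[0,1]$. If uniqueness is intended, I would characterize equality in the data processing inequality. Equality requires the likelihood ratio $d\mu_K/d\mu_N$ to be a function of the bit $B$ alone, almost surely. I would then show this forces $\mu_K$ and $\mu_N$ to be supported on $\{0,1\}$ whenever $0<\eps_N$ and $\eps_K<1$. Otherwise I would settle for the statement as written, namely that these distributions are minimizers, and that any minimizer has the same means and the same minimal value. I expect that the equality-case analysis is the only delicate step; the rest is direct.
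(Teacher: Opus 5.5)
Your proposal is correct and follows the paper's proof. You use the same kernel $T(1\mid t)=t$ and the same data-processing reduction as \cref{lem:binary-reduction}, applied directly to $\KL(\mu_K\|\mu_N)$ rather than to $F_p$, which is all this theorem needs; you then use the same monotonicity of the binary KL in $a$ and $b$. Your equality-case argument for uniqueness over all of $\calP([0,1])$ goes a bit beyond what the paper proves, since the paper only shows uniqueness among Bernoulli pairs. One slip in your boundary remark: when $\eps_N=0$, $\KL(\Bern(a)\|\delta_0)$ is finite only for $a=0$, not $a=1$. Since $a\ge 1-\eps_K>0$, every feasible pair then has infinite divergence, and the claimed pair is a minimizer only in this trivial, non-unique sense.
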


We defer the full proof to \cref{appdx:filter}.
Intuitively, once we reduce the KL-minimization to binary outputs, the resulting two-parameter convex program naturally has unique minimizers when the two constraints are tight.

\paragraph{First-order approximation.}
Since $\hat x\in \{0,1\}$, we can use \cref{thm:rate-distortion-main-4} and obtain a more fine-grained memory bound for small $p$.
Specifically, as $p\to 0$, the difference between the memory bound and KL term is dominated by
\[\frac{p}{2\ln 2}\cdot \chi^2(\Bern(1-\eps_K)\|\Bern(\eps_N)). \]

\paragraph{No non-trivial ``hallucination-free'' regime.}
A corollary of our results is that one cannot eliminate hallucinations in a large universe without incurring a large cost.
Driving hallucinations to zero corresponds to $\eps_N\to 0$, but then
\[
  \KL(\Bern(1-\eps_K)\,\|\,\delta_0)=\infty
  \quad\text{for every }\eps_K<1.
\]
Thus, zero FPR is only compatible with the trivial regime that rejects everything ($\eps_K=1$), or with an unbounded memory budget.
Interestingly, this also proves that there is no ``reverse Bloom filter'' which tolerates false negatives but not false positives.
As the universe grows to infinity, it is infinitely expensive to correctly identify every non-fact.

\paragraph{Thresholding only moves you along the frontier.}
This memory-error frontier formalizes the trade-off between hallucination and over-refusal observed in practice~\cite{ChengSL+24-AI-know-what-they-dont-know, Brahman0BDPRWDC24-TheArtOfSayingNo}.
Theorem~\ref{thm:binary-decision-LLM} is agnostic to how $\hat x$ is produced.
Whether the decision is derived from a calibrated probability, a raw logit, a log-likelihood ratio, or a multi-stage verification chain that outputs a scalar score, the final thresholded mapping is exactly a two-sided filter with some $(\eps_K,\eps_N)$.
Consequently, no choice of threshold can beat the memory-error frontier: a conservative approach to factual answers may reduce $\eps_N$, but then either the memory budget or $\eps_K$ must increase (more forgetting / abstention).


\paragraph{Recovering filter space lower bounds.}
By limiting output to $\{0,1\}$, our framework recovers previous space lower bounds for filters as special cases.
Setting $\eps_K=0$, our bound reduces to the classical result in \citet{CarterFGMW-1978-Membership}:
\[
  \KL(\delta_1\,\|\,\mathrm{Bern}(\eps_N))=\log(1/\eps_N)\ \text{bits per key}.
\]
Moreover, if $p$ is bounded away from $0$, our rate-distortion function $R_p(\eps_K,\eps_N)$ corresponds to the fine-grained bound in \citet{LiWCJWZYA23-ChainedFilter} for finite universes.

In the two-sided regime, we settle the gap in \citet{PaghR-2001-LossyDict}, quantifying $\log(1/\eps_N)+\Theta(1)$ as the exact KL divergence value.
Finally, our framework generalizes the results in \citet{HurleyW-2007-OneSizeFitAll} by moving from a single Hamming-distortion constraint on a Bernoulli source to separate FNR/FPR constraints on fixed composition sources.

To complete the argument for filters, Appendix \ref{subsec:hash-based-filter} gives an explicit construction of a hash-based two-sided filter that achieves the space lower bound up to $o(1)$ bits per key.

\begin{figure}[t]
  \centering
  \begin{subfigure}[b]{0.85\linewidth}
    \centering
    \includegraphics[width=\linewidth]{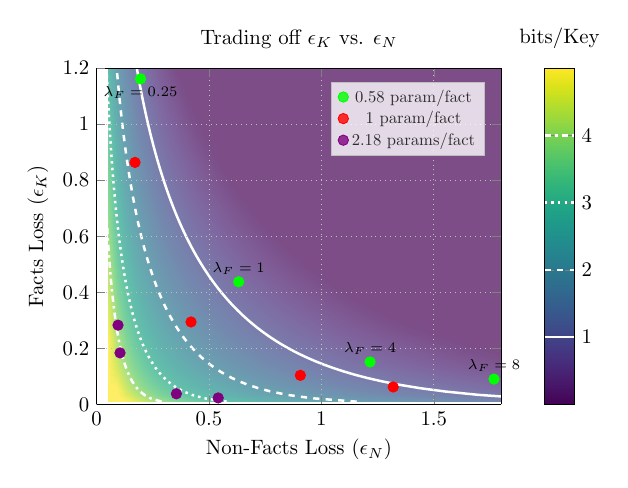}
    \caption{Memory-error frontier vs. different weights.}
    \label{fig:heatmap-full}
  \end{subfigure}
  
  \vspace{0.4cm}
  
  \begin{subfigure}[b]{0.85\linewidth}
    \centering
    \includegraphics[width=\linewidth]{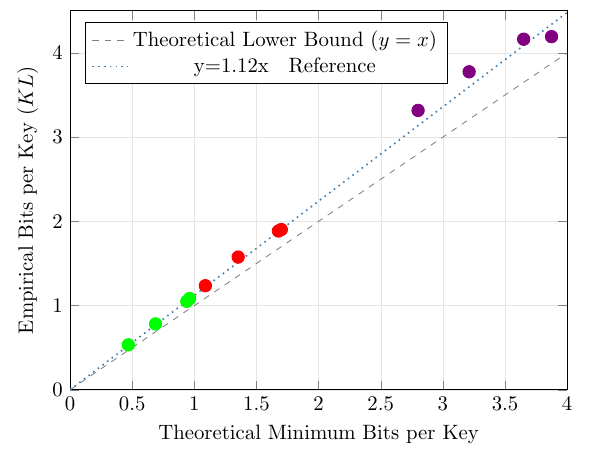}
    \caption{KL divergence between empirical distributions vs. minimum information for given loss.}
    \label{fig:empirical-vs-minimum-full}
  \end{subfigure}

  \caption{Effect of different weight $\lambda_F$ on fact. Each color represents a different model size. The amount of information per key decreases with $\lambda_F$ for each model size.}
  \label{fig:combined_heatmap_full}
\end{figure}




\subsection{Experiments}\label{subsec:experiments}
We show that the predicted distributions in \cref{subsec:prob-est-LLM} align with output behaviors in two complementary settings: small transformers trained from scratch on synthetic random strings, and LoRA fine-tuning of a pretrained LLM on synthetic IDs and real-world ISBNs.

\subsubsection{From-Scratch Transformers}
\textbf{Setup.}
Let $\calU$ be the set of all length-$15$ strings over the $26$ English letters ($|\calU|=26^{15}$).
We draw a key set $\calK\subseteq\calU$ by sampling $n=15145$ strings uniformly at random without replacement.
We train a $2$-layer Transformer with varying parameter count (8767, 15145, and 33085) and analyze the distributions of output scores $\hat x_i$ when $i$ is a key/non-key.
We train with a \emph{weighted} binary cross-entropy loss
\begin{align*}
  \mathcal{L} & = \frac{\lambda_F}{\lambda_F + 1} \EE_{i\sim \mathrm{Unif}(\calK)}[-\ln \hat x(i)]\\
  & + \frac{1}{\lambda_F+1} \EE_{i\sim \mathrm{Unif}(\calU\setminus\calK)}[-\ln (1-\hat x(i))].
\end{align*}
This loss induces the average log-losses $(\eps_K,\eps_N)$ on facts and non-facts, consistent with the notation in \cref{subsec:prob-est-LLM}.
Further details of model and training, as well as extra figures can be found in Appendix \ref{subsec:experiments-setup-details}.
We also stress-test the sparse-regime assumption by varying the key-to-universe ratio $p$; the same qualitative picture persists, as detailed in \cref{subsec:p-sweep}.

\textbf{Validation of the hallucination channel.}
\cref{fig:logloss-empirical} plots the empirical distributions of $\hat x(i)$ on facts and non-facts under different $\lambda_F$ for the model with 15145 parameters.
Across all settings, the non-fact distribution is \emph{not} only concentrated near $0$; instead, it exhibits a visible high-confidence tail overlapping the fact mass, which is especially clear when the fact mass is concentrated.
This qualitatively matches the hallucination channel predicted by \cref{thm:prob-est-LLM}: a non-negligible portion of non-facts must be mapped into the same high-confidence region as facts.
Interestingly, a small $\lambda_F$ pushes both fact and non-fact distributions to spread out. We attribute this to the expressivity of the model architecture and properties of the sigmoid function.

\textbf{Quantitative match to the predicted atoms.}
Beyond the qualitative overlap, the empirical modes in \cref{fig:logloss-empirical} are comparable to the atoms $(x^\star,q^\star)$ implied by the theoretical optimizer when we plug in the observed losses $(\eps_K, \eps_N)$.
We further compute the KL divergence between the binned empirical distributions and find that the (discretized) learned distributions (with 50 bins) incur only an $\approx 12\%$ overhead in KL divergence, relative to the information-theoretic lower bound for the given error rates $(\eps_K, \eps_N)$ (\cref{fig:empirical-vs-minimum-full}).
We also note that our estimation of effective memory is close to 2 bits per parameter, consistent with the findings in \citet{AllenZhuL25-Physics33} when memorizing random data. 

\textbf{Effect of reweighting facts vs.\ non-facts.}
Increasing $\lambda_F$ pushes the optimizer to reduce $\eps_K$ (higher recall).
Consequently, the model sharpens a high-confidence region for facts, and inevitably drags a larger fraction of non-facts into the same region, causing a sharp rise in hallucination.
\cref{fig:heatmap-full} reports $\eps_K$, $\eps_N$, and the corresponding per-key memory lower bound across the weight sweep.
The heatmap shows a clear diminishing return: past a point, further improving $\eps_K$ requires sacrificing $\eps_N$ disproportionately, while the implied information requirement per key \textit{decreases}---aggressive ``always-recall'' pressure moves the system into an especially hallucination-prone position.
This suggests that it is advisable to emphasize rejecting non-facts during training -- the theoretical hallucination probability in \cref{thm:prob-est-LLM} is only a function of effective memory budget, but the budget itself depends on the training process.

\subsubsection{LoRA fine-tuning of a pretrained LLM}

We LoRA fine-tune~\citep{HuSWALWWC22-LoRA} Qwen3.5-2B~\citep{qwen35blog} on $|\calK| = 10000$ facts from two contrasting domains: structure-free $13$-digit decimal IDs ($|\calU| = 10^{13}$) and real-world ISBN-13 strings reservoir-sampled from the Open Library editions dump.
ISBN non-facts are constructed to match real publisher-prefix statistics while remaining disjoint from $\calK$ (full setup in \cref{subsec:lora-experiments}).
We sweep $\lambda_F \in \{0.5,1,2,4\}$ across three configurations---Synthetic at LoRA rank $4$, ISBN at rank $4$, and ISBN at rank $2$---yielding twelve runs.

\textbf{Frontier persists with pretraining prior.}
\cref{fig:lora-min-vs-kl} reports the empirical KL for the twelve runs and shows that they cluster along a single line of slope $\approx 1.18$, attaining KL within $14$--$22\%$ of the lower bound implied by \cref{thm:prob-est-LLM} at their observed $(\eps_K, \eps_N)$.
Notably, the prior-bearing ISBN runs and the prior-free Synthetic-ID runs land on the \emph{same} overhead band.
The main visible effect of pretraining is instead optimization: compared with training from scratch, the fine-tuned model starts much closer to a useful solution and enters the stable-loss regime much faster, but this speedup does not measurably shift the final rate--distortion frontier.
This is consistent with \cref{thm:rate-distortion-main-1}: the memory cost is governed by the information content of $\calK$ relative to $\calU$, not by surface regularities a pretrained model already encodes.

\textbf{Effective memory is much smaller than trainable parameter count.}
In absolute terms, the empirical KL stored per trainable LoRA parameter is $\approx 0.17$ bits/param at rank $4$ and $\approx 0.27$ bits/param at rank $2$.
Both are an order of magnitude below the $\approx 2$ bits/param attained by our from-scratch models~\citep{AllenZhuL25-Physics33}.
This is unsurprising---LoRA can only express a low-rank update on top of frozen base weights, so it mostly makes local adjustments around the pretrained predictor rather than globally reshaping the model to fit a new random fact table---but it is a direct empirical illustration of the distinction made in \cref{appendix:random-facts}: the memory lower bound in \cref{thm:rate-distortion-main-1} is the information capacity the model actually allocates to $\calK$, not its nominal trainable parameter count.
Full setup details and representative diagnostics appear in \cref{subsec:lora-experiments}.
 
\begin{figure}[t]
    \centering
    \includegraphics[trim=10 5 10 10, width=0.85\linewidth]{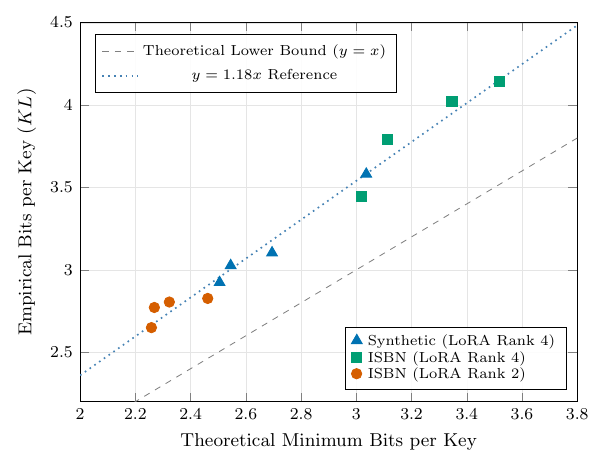}
    \caption{Empirical KL divergence vs.\ information-theoretic minimum bits per key for twelve LoRA fine-tuning runs of Qwen3.5-2B (Synthetic-IDs at rank $4$; ISBN at ranks $4$ and $2$; $\lambda_F\in\{0.5,1,2,4\}$).
    The runs cluster along the dotted reference line $y=1.18\,x$, indicating an average $\approx 18\%$ overhead relative to the lower bound from \cref{thm:prob-est-LLM}.
    Representative confidence distributions and training dynamics appear in \cref{subsec:lora-experiments}.}
    \label{fig:lora-min-vs-kl}
\end{figure}

%% file: src/appendix_info_budget.tex
\section{Detailed Discussion of the Information Budget for Random Facts}\label{appendix:random-facts}
We provide a self-contained argument in support of our modeling choice:
namely, that the effective memory budget available for storing random facts is orders of magnitude smaller than the raw parameter count.

\subsection{Setup: random facts, training data, and learned weights}
\label{app:setup}

We model LLM training via a Markov chain relating several random variables. Let $\calK$ denote a set of random facts (e.g., a specific list of biographies) drawn from a large universe of potential facts. Let the tuple $(\Theta, \calK)$ denote a ``world state'' that parameterizes the training data generation process. Let $Z$ denote the training data, and let $W$ denote the learned weights obtained by applying a randomized training algorithm to $Z$.
This induces the Markov chain:
\begin{equation}
  (\Theta, \calK) \to Z \to W.
\end{equation}

Our goal is to upper-bound $I(W;\calK)$, the mutual information between the learned weights $W$ and the random facts $\calK$.
This mutual information, which indicates how much the model memorizes $\calK$, corresponds to the ``bits of memory'' discussed in the main body.

We present two complementary arguments. First, regularization techniques and learning-theoretic pressures limit pure memorization. Second, much of the model's effective capacity is ``crowded out'' by structured knowledge, leaving little budget for unstructured random facts.

\subsection{Regularization limits memorization}
By the data processing inequality, we have $I(W;\calK, \Theta) \leq I(W;Z)$; moreover, standard compression- and generalization-based perspectives suggest that $I(W;Z)$ can be far smaller than the raw parameter count.

Some of the most established theories of generalization in deep learning include the minimum description length (MDL) principle~\cite{Rissanen-78-MDL1,Grunwald-07-MDL3,GrunwaldR-19-MDL4}, PAC-Bayes~\cite{McAllester-99-PACBayes1,McAllester-03-PACBayes2,Catoni-07-PACBayes3,Alquier-21-PACBayes4}, and mutual-information generalization bounds~\cite{XuR-17-MIgeneralization1, BuZV-20-MIgeneralization3, SteinkeZ-20-MIgeneralization4}. 
These ideas are interconnected~\cite{BlumL-03-generalizationConnection1, AchilleS-18-generalizationConnection2, HellstromDGR-25-generalizationConnection3, GrunwaldSZ-21-generalizationConnection4}, and they all roughly establish that \textit{limited memorization of training data} is a key factor in generalization.

Their bounds all relate to the quantity $I(W;Z)$, the mutual information between the weights $W$ and the training data $Z$. 
\begin{itemize}
  \item Mutual-information bounds directly connect the expected generalization error to variants of $I(W;Z)$. 
  \item PAC-Bayes bounds advocate controlling $\KL(Q_Z\|P)$, where $P$ is some data-independent prior over $W$ such as a standard Gaussian, and $Q_Z$ is the distribution of $W$ after training on data $Z$. This relates to mutual information via $I(W;Z) \leq \EE_Z[\KL(Q_Z\|P)]$, where equality is obtained iff $P$ is the marginal distribution of $W$ induced by the learning algorithm and $Z$. 
  \item MDL principle is based on minimizing the description length of $W$ (while keeping the training error small), which is often equivalent to minimizing $\KL(Q_Z\|P)$, since the latter is the extra bits needed to describe $W$ beyond the prior $P$. 
\end{itemize}

These theories posit that generalization is closely tied to controlling memorization, as quantified by $I(W;Z)$. In practice, $I(W;Z)$ is shaped by the implicit bias of common regularization mechanisms (e.g., mini-batch SGD, weight decay, dropout, and early stopping) and by the preference for flat minima.
If this line of reasoning is correct, then we expect training to reduce $I(W;Z)$ whenever doing so does not significantly increase the training loss.

The above theories have also given rise to a series of non-vacuous generalization bounds for deep networks~\cite{DziugaiteR17-nonvacuous1, DziugaiteR18-nonvacuous2, Perez-OrtizRSS21-nonvacuous3} and language models~\cite{LotfiFKRGW24-nonvacuous4}. 
These empirical bounds are derived via quantizing and compressing the model parameters $W$, and then applying MDL/PAC-Bayes bounds. 
If these compressed networks maintain the same performance on the training data $Z$, then these studies provide a strong argument that $I(W;Z)$ is indeed small.

It is also worth noting that a line of work~\cite{Feldman-20-ShortTaleLongTail, FeldmanZ20-LongTailviaInfluence, BrownBFST21-memorizationOfIrrelevant, FeldmanKL25-Tradeoff-memorization}
argues that memorization of samples (and even noise) is \textit{necessary} for generalization in certain cases. 
Our framework complements this perspective: limiting memorization via (implicit or explicit) regularization, while intended to filter out noise, also inevitably excludes incompressible long-tail random facts. 
Under an insufficient memory budget, our theory predicts that the model is forced to hallucinate as the information-theoretically optimal response.  

\subsection{Structured vs.\ random facts: a formal tradeoff}
\label{app:structured_random}

We now formalize the intuition that ``structured knowledge'' and ``random facts'' compete for the same limited information budget in $W$, and that structured knowledge often gets prioritized. 

\paragraph{Decomposition.}
Suppose that the world is parameterized by random variables $(\Theta,\calK)$, where $\Theta$ represents structured components (e.g., linguistic syntax, logical rules, universal regularities) and $\calK$ represents random facts. We assume that $\calK$ is independent of $\Theta$ (so $I(\calK;\Theta)=0$).

This decomposition aligns with the \textbf{Syntax--Knowledge generative framework} recently proposed by \cite{PanWL25-UnderstandingLLMBehaviorsViaCompression}, who model language generation as a hierarchical process combining a parametric \emph{syntax model} with a non-parametric, long-tail \emph{knowledge model}. 
Similar distinctions have also been explored in prior literature to explain learning dynamics, such as the disentanglement of syntax and semantics \cite{GongLTLT-25-twoStage} and the separation of grammatical structure from content in generative models \cite{DyerKBS-16-RNNGrammar, KusnerPH-17-GrammarVAE}.

\paragraph{Budget Splitting Argument.}
Information theory dictates that these two components compete for the model's finite capacity. If we assume $\calK$ is independent of $\Theta$, then a standard decomposition of multivariate information~\cite{WilliamsBeer-10-MultivariateInformation} gives:
\begin{align*}
  I(W;\calK,\Theta) &= I(W;\Theta) + I(W;\calK\mid \Theta)\\
  & = I(W;\Theta) + \underbrace{H(\calK\mid \Theta)}_{= H(\calK)} - H(\calK\mid W,\Theta)\\
  & = I(W;\Theta) + H(\calK) - H(\calK\mid W) + H(\calK\mid W) - H(\calK\mid W,\Theta)\\
  & = I(W;\Theta) + I(W; \calK) + \underbrace{I(\calK; \Theta\mid W)}_{\geq 0}.
\end{align*}

If the total effective capacity $I(\Theta, \calK;W)$ is bounded, this equality implies a tradeoff:
\begin{equation*}
  \underbrace{I(\calK;W)}_{\text{Capacity for random facts}} \leq \underbrace{I(\Theta, \calK;W)}_{\text{Total Budget}} - \underbrace{I(\Theta;W)}_{\text{Capacity used for structure}}.
\end{equation*}
Consequently, any mechanism that increases $I(\Theta;W)$---i.e., learning more structure---necessarily reduces the remaining budget available for encoding the random residual $\calK$.

\paragraph{Prioritized Learning Dynamics.}
Crucially, this is not just a static tradeoff but a dynamic one where $\Theta$ takes precedence. \cite{PanWL25-UnderstandingLLMBehaviorsViaCompression} demonstrate theoretically (via Kolmogorov Structure Functions) and empirically that LLMs exhibit frequency-dependent learning: they prioritize compressing pervasive syntactic structures ($\Theta$) before acquiring rarer factual knowledge ($\calK$). This observation is consistent with the two-stage training dynamics of Transformers~\cite{GongLTLT-25-twoStage}, as well as the empirical difficulty for trained models to recall high-entropy facts compared to structured knowledge~\cite{HuangYZCL-25-EntropyMemorizationLaw}. 
\citet{ArpitJBKBKMFCBL17-Closer-look-memorization} also concluded that regularized SGD for deep networks would prioritize learning low-entropy ``simpler'' samples over high-entropy noise. 

Because $\Theta$ corresponds to high-frequency patterns that offer the greatest training loss reduction, real-world models ``spend'' their information budget on $\Theta$ first. Random facts $\calK$, which often appear as incompressible noise to a capacity-constrained model \cite{PanWL25-UnderstandingLLMBehaviorsViaCompression}, are relegated to the residual information budget. This justifies our focus on bounding $I(\calK;W)$: it is the marginal capacity left for the long tail of world knowledge after the model has learned the essential structures of the world.

\citet{PanWL25-UnderstandingLLMBehaviorsViaCompression}, whose syntax--knowledge framework we drew on
above, measure this budget directly. Generating 400{,}000 synthetic profiles
with 50 templates per attribute type, they find a sharp frequency threshold:
their largest model ($\sim$253M parameters) reaches only $\sim$40\% accuracy on
entities that appear just four times in training, while a 7.2M model
hallucinates almost all of them. Below this threshold the model does not fall
silent; it fabricates, emitting grammatically clean but factually wrong
profiles. That is exactly the capacity-limited error mode our framework
predicts, and their mutual-information-based predictions match these
measurements closely. The same pattern underwrites our regime-relevance claim:
for the long-tailed, random-looking facts on which hallucination is most common,
such as specific legal cases or bibliographic entries, the effective budget
$I(W;K)$ is indeed very small. \citet{HuangYZCL-25-EntropyMemorizationLaw} and
\citet{ArpitJBKBKMFCBL17-Closer-look-memorization}, cited above, reach the same conclusion through
memorization measures not built on mutual information, finding that rare and
random examples are learned last.

%% file: src/appendix_rateDistortion.tex
\section{Proofs for Section \ref{sec:rate-distortion-new}}\label{appdx:rate-distortion-proofs}
\subsection{Proof of Lemma \ref{lem:output-distribution-lower-bound}}\label{appdx-subsec:output-distribution-lower-bound}
For completeness, we first prove the lemma below.
\begin{lemma}\label{lem:concavity-conditional-entropy}
    Let $X$ be a discrete random variable with $H(X) < \infty$ and let $Y$ be any random variable. Then, the conditional entropy $H(X\mid Y)$ is concave in the joint distribution of $(X,Y)$.
\end{lemma}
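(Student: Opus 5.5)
We will prove \cref{lem:concavity-conditional-entropy} by writing $H(X\mid Y)$ as a minimum of functions that are linear in the joint law of $(X,Y)$. A pointwise infimum of affine functions is concave.

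Write the joint law as the marginal $P_Y$ together with a regular conditional kernel $P_{X\mid Y=y}$; this exists because $X$ is discrete. Then
\[ H(X\mid Y) = \EE_{Y}\Big[\sum_x P_{X\mid Y}(x\mid Y)\log\frac{1}{P_{X\mid Y}(x\mid Y)}\Big]. \]
For any measurable kernel $Q(\cdot\mid y)$ on the support of $X$, Gibbs' inequality gives
\[ \EE_{(X,Y)}\big[-\log Q(X\mid Y)\big] = H(X\mid Y) + \EE_Y\big[\KL(P_{X\mid Y}\|Q(\cdot\mid Y))\big] \ge H(X\mid Y), \]
with equality at $Q = P_{X\mid Y}$. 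The left-hand side is $\int -\log Q(x\mid y)\,dP_{XY}(x,y)$, which is affine in $P_{XY}$ for fixed $Q$, though it may take the value $+\infty$. Taking the infimum over $Q$ expresses $H(X\mid Y)$ as an infimum of affine functions.

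Concavity then follows. Let $P = \lambda P_1 + (1-\lambda)P_2$ and let $Q$ be the conditional kernel of $P$. We have
\[ H_P(X\mid Y) = \lambda\,\EE_{P_1}[-\log Q] + (1-\lambda)\,\EE_{P_2}[-\log Q] \ge \lambda H_{P_1}(X\mid Y) + (1-\lambda) H_{P_2}(X\mid Y). \]
A cleaner way to pick $Q$ is via the Radon--Nikodym derivatives $dP_{i,Y}/dP_Y$, which makes the mixture's kernel explicit as a weighted average of the two kernels.

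The main obstacle is measure-theoretic bookkeeping. First, we must ensure every term is finite or that the inequalities remain valid with $+\infty$. Here the hypothesis $H(X)<\infty$ is used: since $H(X\mid Y)\le H(X)<\infty$ and the marginal entropies of the mixtures are controlled, no $\infty-\infty$ expressions arise. Second, we must justify the Gibbs inequality conditionally for a general, non-discrete $Y$. This uses the nonnegativity of KL pointwise in $y$ and then Tonelli's theorem, since $-\log Q \ge 0$.

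An alternative route, if we want to avoid kernels entirely, is to introduce a mixing variable $J\sim\Bern(\lambda)$ that selects between $P_1$ and $P_2$. Then
\[ H(X\mid Y) \ge H(X\mid Y,J) = \lambda H_{P_1}(X\mid Y) + (1-\lambda) H_{P_2}(X\mid Y), \]
because conditioning reduces entropy. The validity of that inequality requires $H(X\mid Y)<\infty$, which is again guaranteed by $H(X)<\infty$. We would likely present this shorter argument, noting the finiteness requirement.
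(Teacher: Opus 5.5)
Your argument is correct, but it takes a different route from the paper.

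The paper fixes a common dominating measure $\nu$ for the two $Y$-marginals and writes $H(X\mid Y)=\sum_x\int p(x,y)\log\frac{p(y)}{p(x,y)}\,d\nu(y)$. It notes that $f(u,v)=u\log(v/u)$ is the perspective of $-t\log t$, hence jointly concave. Since both $p(x,y)$ and $p(y)$ are linear in the mixture, the inequality holds pointwise and survives summing and integrating.

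Your main route instead identifies $H(X\mid Y)$ with the minimal expected log-loss $\inf_Q \EE_P[-\log Q(X\mid Y)]$, which is an infimum of functionals affine in $P$. This is more conceptual, and it matches the paper's log-loss viewpoint. The cost is that you need a regular conditional kernel and a conditional Gibbs inequality. You justify both adequately: $X$ is countable, Tonelli applies because $-\log Q\ge 0$, and (implicit in your Radon--Nikodym remark) $P_i\ll P$, so the mixture's kernel is defined $P_{i,Y}$-a.e.

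Your mixing-variable route is, once unpacked, the paper's computation. Proving $H(X\mid Y)\ge H(X\mid Y,J)$ for general $Y$ amounts to concavity of Shannon entropy at each $y$, with weights $\PP[J=1\mid Y=y]=\lambda\,p_1(y)/p_\lambda(y)$ where $p_\lambda$ is the mixture density. That is exactly the perspective inequality the paper invokes.

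One small correction: the conditioning inequality does not require $H(X\mid Y)<\infty$, since it is trivial when $H(X\mid Y)=\infty$. More generally, every integrand in all three arguments is nonnegative, so each of them gives concavity in $[0,\infty]$ without invoking $H(X)<\infty$ at all.
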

\begin{proof}
    Let $P_0$ and $P_1$ be two joint distributions on $\calX \times \calY$, and let $P_\lambda = \lambda P_1 + (1-\lambda) P_0$ for $\lambda \in [0,1]$.
    We choose a reference measure $\nu$ on $\calY$ (e.g., the sum of marginals of $P_0$ and $P_1$) such that the joint distributions can be described by densities $p_i(x,y)$ with respect to the product of the counting measure on $\calX$ and $\nu$.
    Let $p_i(y) = \sum_{x \in \calX} p_i(x,y)$ denote the marginal density of $Y$.

    The conditional entropy can be written as:
    \[
        H(X|Y) = \sum_{x \in \calX} \int p(x,y) \log \frac{p(y)}{p(x,y)} \, d\nu(y).
    \]
    Consider the function $f(u, v) = u \log(v/u)$ for $u, v \ge 0$.
    This function is the perspective transform of the concave function $g(t) = -t \log t$, given by $f(u, v) = v g(u/v)$. Since the perspective of a concave function is jointly concave, $f(u, v)$ is jointly concave in $(u, v)$.

    For the mixture $P_\lambda$, the densities are linear combinations:
    \[
        p_\lambda(x,y) = \lambda p_1(x,y) + (1-\lambda) p_0(x,y), \quad
        p_\lambda(y) = \lambda p_1(y) + (1-\lambda) p_0(y).
    \]
    By the joint concavity of $f$:
    \[
        f(p_\lambda(x,y), p_\lambda(y)) \ge \lambda f(p_1(x,y), p_1(y)) + (1-\lambda) f(p_0(x,y), p_0(y)).
    \]
    Summing over $x$ and integrating over $y$ preserves this inequality, yielding:
    \[
        H_{P_\lambda}(X|Y) \ge \lambda H_{P_1}(X|Y) + (1-\lambda) H_{P_0}(X|Y).
    \]
    This proves the concavity.
\end{proof}

Now we restate and prove Lemma~\ref{lem:output-distribution-lower-bound}.
\begin{lemma}
    Let $\calM$ be a membership tester for key size $n$ in universe $[u]$ with $u>n$. Let $p = \frac{n}{u}$ and let $F_p$ be as defined above. Then, the memory cost of $\calM$ is at least:
    \[ \frac{B(\calM)}{n}\geq F_p \big(\mu_K(\calM), \mu_N(\calM) \big) - \frac{\log (8n)}{2n}. \]
\end{lemma}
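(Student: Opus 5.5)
We encode $\calK$ as its indicator vector $X=(X_1,\dots,X_u)\in\{0,1\}^u$, uniform over vectors of weight $n$, and let $\hat X_i = \Query^\calM(i,W)$ where $W=\Init^\calM(\calK)$. Since each $\hat X_i$ is a (randomized) function of $W$ and the query's internal randomness, the data processing inequality gives $B(\calM)=I(W;\calK)\ge I(X;\hat X_1,\dots,\hat X_u)$. We run the queries with independent fresh randomness (or condition on the shared randomness, which is independent of $\calK$ and only increases the information). The rest of the argument lower bounds this quantity by $\sum_i I(X_i;\hat X_i)$ minus a correction that accounts for $X$ not being i.i.d.

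\textbf{Step 1 (chain rule).} Write
\[ I(X;\hat X^u)=H(X)-H(X\mid \hat X^u)\ \ge\ H(X)-\sum_{i=1}^u H(X_i\mid \hat X_i), \]
using subadditivity of entropy and the fact that conditioning reduces entropy: $H(X\mid\hat X^u)\le \sum_i H(X_i\mid \hat X^u)\le \sum_i H(X_i\mid\hat X_i)$.

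\textbf{Step 2 (identify each term).} By permutation-invariance (Remark~\ref{rmk:permutation-invariance}), each pair $(X_i,\hat X_i)$ has the law of $(X,\hat X)$ in the definition of $F_p$. Indeed, $X_i\sim\Bern(p)$ with $p=n/u$, and conditionally $\hat X_i\sim\mu_K(\calM)$ or $\mu_N(\calM)$. Thus $H(X_i\mid\hat X_i)=H(\Bern(p))-pF_p(\mu_K,\mu_N)$, and summing over $i$,
\[ I(X;\hat X^u)\ \ge\ \log\binom{u}{n} - u\,H(\Bern(p)) + n F_p(\mu_K(\calM),\mu_N(\calM)). \]
One subtlety: the per-coordinate laws must be exactly the mixtures defining $\mu_K,\mu_N$ after averaging over the random key set. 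If the tester is only invariant in distribution, we instead use Lemma~\ref{lem:concavity-conditional-entropy}. The average of $H(X_i\mid \hat X_i)$ over $i$ is at most the conditional entropy under the averaged joint law, which is exactly the $(\Bern(p),\mu_K,\mu_N)$ law. This concavity step is presumably why the lemma was proved just before.

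\textbf{Step 3 (binomial estimate).} It remains to show $u H(\Bern(p))-\log\binom{u}{n}\le \tfrac12\log(8n)$. The standard Stirling-type bound $\binom{u}{n}\ge \frac{2^{uH(p)}}{\sqrt{8n(1-p)}}\ge \frac{2^{uH(p)}}{\sqrt{8n}}$ gives exactly this. Dividing by $n$ yields the claim.

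The main obstacle is Step 2: justifying rigorously that the coordinatewise conditional entropies match those of the $F_p$ model when $\hat X_i$ is continuous-valued. We handle this via Lemma~\ref{lem:concavity-conditional-entropy} with densities relative to a common reference measure. The binomial constant in Step 3 is routine.
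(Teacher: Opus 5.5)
Your argument is the paper's proof step for step: data processing down to $I(X^u;\hat X^u)$, subadditivity plus conditioning to reach $\sum_i H(X_i\mid\hat X_i)$, identification of the position-averaged pair with the $(\Bern(p),\mu_K(\calM),\mu_N(\calM))$ model (the paper uses your fallback route through the concavity Lemma~\ref{lem:concavity-conditional-entropy}), and the estimate $\binom{u}{n}\ge 2^{uH(p)}/\sqrt{8n(1-p)}$. One caveat, which the paper's proof also skips: if $\Init$ and $\Query$ use common randomness $R$, neither of your fixes gives $I(W;\calK)\ge I(X^u;\hat X^u)$ --- substituting fresh query randomness changes the tester, and conditioning on $R$ gives $I(W;\calK\mid R)\ge I(X^u;\hat X^u\mid R)\ge I(X^u;\hat X^u)$, which lower-bounds $I(W;\calK\mid R)$ (hence $H(W)$) but not $I(W;\calK)$; the latter can vanish, e.g.\ $W=\sigma(\calK)$ for a shared uniform permutation $\sigma$ is independent of $\calK$ while $\Query(i,W)=\indicator{\sigma(i)\in W}$ is exact, so the inequality as stated needs $B(\calM)$ read as $I(W;\calK\mid R)$, or the randomness of $\Init$ and $\Query$ kept independent.
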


\begin{proof}
We will apply the data processing inequality to lower bound the entropy of $W$ as a random variable, hence lower-bounding the memory usage of $\calM$. Let $\calK$ be uniformly randomly sampled from all subsets of $\calU = [u]$ of size $n$. Define random variables $X_i = \indicator{i\in \calK}$ to be the membership status for $i\in \calU$, and define
\[ \hat X_i = \Query^\calM(i, \Init^\calM(\calK)) \]
to be the answer to the membership query, treated as the ``reconstructed'' membership information. Then, by the data processing inequality we have
\[ B(\calM) = I(\calK;W) \geq I(X^u;\hat X^u), \]
where $X^u$ and $\hat X^u$ denote the length-$u$ vector of $X_i$'s and $\hat X_i$'s, respectively. Because $X^u$ is discrete, we can write the mutual information as $I(X^u; \hat X^u) = H(X^u) - H(X^u | \hat X^u)$.

Let $I \sim \Unif([u])$ be a uniformly random element, then by definition we have
\[ I(X_I; \hat X_I) = p\cdot F_p \big(\mu_K(\calM), \mu_N(\calM) \big). \]

We now relate the block mutual information $I(X^u; \hat{X}^u)$ to the single-letter mutual information $I(X_I; \hat{X}_I)$. First, by the chain rule for entropy and the fact that conditioning reduces entropy:
\begin{align*}
    H(X^u | \hat{X}^u) & = \sum_{i=1}^{u} H(X_i|\hat{X}^u, X_1, \dots, X_{i-1}) \\
    & \leq \sum_{i=1}^u H(X_i | \hat X_i).
\end{align*}

To lower bound the average conditional entropy $\frac{1}{u} \sum_{i=1}^{u} H(X_i|\hat{X}_i)$, we use the fact that conditional entropy $H(X|Y)$ is a concave function in the joint distribution of $(X,Y)$. Since the distribution of $(X_I, \hat{X}_I)$ is the average of the distributions of $(X_i, \hat{X}_i)$, Jensen's inequality gives:
\[ H(X_I|\hat{X}_I) \geq \frac{1}{u} \sum_{i=1}^{u} H(X_i|\hat{X}_i). \]

Combining the inequalities, we obtain the following upper bound:
\[ H(X^u | \hat{X}^u) \leq \sum_{i=1}^{u} H(X_i|\hat{X}_i) \leq u H(X_I|\hat{X}_I). \]

Now, we lower bound the entropy of the source $H(X^u) = H(S) = \log \binom{u}{n}$. By standard approximations (see e.g. section 14.2 of \cite{CoverT-2001-Information}), we have $\binom{u}{n} \geq \sqrt{\frac{u}{8n(u-n)}}2^{uH(p)}$, so
\begin{align*}
    H(X^u) &= \log \binom{u}{n} \\
    &\geq u H(p) + \frac{1}{2}\log \frac{u}{8n(u-n)} \\
    & \geq u H(p) - \frac{1}{2} \log (8n).
\end{align*}

Combining these results:
\begin{align*}
    B(\calM) \geq I(X^u ; \hat{X}^u) &= H(X^u) - H(X^u|\hat{X}^u) \\
    &\geq (u H(p) - \frac{1}{2}\log(8n)) - u H(X_I|\hat{X}_I) \\
    &= u (H(p) - H(X_I|\hat{X}_I)) - \frac{1}{2}\log(8n) \\
    &= u I(X_I; \hat{X}_I) - \frac{1}{2}\log(8n).
\end{align*}

Since $uI(X_I; \hat X_I) = n\cdot F_p\big(\mu_K(\calM), \mu_N(\calM)\big)$, it follows that:
\[ \frac{B(\calM)}{n} \geq F_p\big(\mu_K(\calM), \mu_N(\calM)\big) - \frac{\log(8n)}{2n}, \]
as desired.
\end{proof}

\subsection{Proof of Lemma \ref{lem:properties-of-Fp}}\label{appdx-subsec:properties-of-Fp}

\begin{lemma}[Same as Lemma \ref{lem:properties-of-Fp}]
    For $p>0$, let $F_p(\mu_K,\mu_N) = I(X;\hat X)/p$, where $X\sim \Bern(p)$, $\hat X\mid(X=1)\sim \mu_K$, and $\hat X\mid (X=0)\sim \mu_N$.
    We also define $F_0(\mu_K, \mu_N) = \KL(\mu_K \| \mu_N)$.
    Then, for $p\in [0,1)$ and $(\mu_K, \mu_N) \in \calP([0,1])^2$:
    \begin{enumerate}
        \item The function $(p,\mu_K, \mu_N) \mapsto F_p(\mu_K, \mu_N)$ is jointly lower semi-continuous.
        \item $F_p(\mu_K, \mu_N)$ is continuous in $p$.
        \item $F_p$ is differentiable in $p$ with $\frac{\partial}{\partial p} F_p(\mu_K,\mu_N) = -\frac{\KL(\mu_N \| p\mu_K + (1-p)\mu_N)}{p^2}$ whenever $p>0$.
    \end{enumerate}
\end{lemma}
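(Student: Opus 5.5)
The plan is to rewrite $F_p$ as a single KL divergence to a mixture, and read all three properties off that formula. For $p\in(0,1)$, let $\nu_p = p\mu_K + (1-p)\mu_N$ be the law of $\hat X$. Then
\[ I(X;\hat X) = p\,\KL(\mu_K\|\nu_p) + (1-p)\,\KL(\mu_N\|\nu_p), \]
so
\[ F_p(\mu_K,\mu_N) = \KL(\mu_K\|\nu_p) + \frac{1-p}{p}\KL(\mu_N\|\nu_p). \]
Both terms are finite, since $d\mu_K/d\nu_p\le 1/p$ and $d\mu_N/d\nu_p\le 1/(1-p)$. As $p\to0$ we have $\nu_p\to\mu_N$, so the first term should tend to $\KL(\mu_K\|\mu_N)=F_0$. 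The second term should vanish, because $\KL(\mu_N\|\nu_p)=O(p^2)$ when $\chi^2(\mu_K\|\mu_N)<\infty$ and is $o(p)$ in general.

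\textbf{Differentiability (item 3).} Fix $p\in(0,1)$ and let $r = d\mu_K/d\nu_p$ on the common support. Write $I(X;\hat X) = p\int f_K\log f_K\,d\nu + (1-p)\int f_N\log f_N\,d\nu - \int g_p\log g_p\,d\nu$, where $\nu=\mu_K+\mu_N$, $f_K,f_N$ are the densities of $\mu_K,\mu_N$ with respect to $\nu$, and $g_p=pf_K+(1-p)f_N$. Differentiating under the integral gives
\[ \tfrac{d}{dp}I = \KL(\mu_K\|\nu_p)-\KL(\mu_N\|\nu_p). \]
Dominated convergence justifies this: on any compact subinterval of $(0,1)$ the integrands are bounded by $C(f_K+f_N)(1+|\log f_K|+|\log f_N|)$. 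For the possibly infinite pieces I will instead work directly with the ratios $f_K/g_p\le1/p$, which are bounded.

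Then
\[ \partial_p F_p = \frac{1}{p}\Big(\KL(\mu_K\|\nu_p)-\KL(\mu_N\|\nu_p)\Big) - \frac{1}{p^2}\Big(p\KL(\mu_K\|\nu_p)+(1-p)\KL(\mu_N\|\nu_p)\Big) = -\frac{\KL(\mu_N\|\nu_p)}{p^2}, \]
which is the stated formula. So $F_p$ is nonincreasing on $(0,1)$.

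\textbf{Continuity (item 2).} Continuity on $(0,1)$ follows from item 3. At $p=0$, monotonicity means the limit $\lim_{p\downarrow0}F_p$ exists in $[0,\infty]$. I will show it equals $\KL(\mu_K\|\mu_N)$ by two bounds.
\begin{itemize}
\item Upper bound: $\KL(\mu_K\|\nu_p)\le \KL(\mu_K\|\mu_N)$ by convexity of KL in its second argument. For the second term, $\frac{1-p}{p}\KL(\mu_N\|\nu_p)\le \frac{1-p}{p}\cdot\frac{1}{\ln 2}\,\chi^2(\mu_N\|\nu_p)$, and $\chi^2(\mu_N\|\nu_p)=p^2\int (f_N-f_K)^2/g_p\,d\nu$. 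To control this when $\mu_K\not\ll\mu_N$ I need care; in that case $F_0=\infty$ and the upper bound is trivial.
\item Lower bound: $\liminf_{p\to0}\KL(\mu_K\|\nu_p)\ge\KL(\mu_K\|\mu_N)$ by lower semicontinuity of KL, since $\nu_p\to\mu_N$ in total variation.
\end{itemize}
For the second term in the upper bound, a cleaner route is the pointwise bound $\frac{1}{p}\,(f_N/g_p)\log(f_N/g_p)\,g_p \le$ something integrable that tends to $0$ pointwise. I expect this to be the main technical obstacle: showing $\frac1p\KL(\mu_N\|\nu_p)\to0$ without a $\chi^2$ assumption. I would try writing $\KL(\mu_N\|\nu_p)=-\int f_N\log\big(1+p(r'-1)\big)\,d\nu$ with $r'=f_K/f_N$. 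On $\{f_N>0\}$, the pointwise quotient $\frac1p\big(-\log(1+p(r'-1))\big)\to (1-r')/\ln2$, whose $\mu_N$-integral is $(1-\mu_K(f_N>0))/\ln 2\ge0$. I then need this limit to be $\le 0$, which holds when $\mu_K\ll\mu_N$. Domination comes from $-\log(1+p(r'-1))\le -\log(1-p)$.

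\textbf{Joint lower semicontinuity (item 1).} For $p>0$, use the Donsker–Varadhan-type variational form of mutual information,
\[ I(X;\hat X)=\sup_{\phi}\Big[\textstyle p\int\phi_1\,d\mu_K+(1-p)\int\phi_0\,d\mu_N-\log\big(p\int 2^{\phi_1}d\nu_p\cdots\big)\Big], \]
or simply $I = \sup$ over finite Borel partitions of $[0,1]$, restricted to continuity sets. Each expression in such a family is continuous in $(p,\mu_K,\mu_N)$ under the weak-$*$ topology, so $I$ is lsc. Alternatively, use $\KL$'s joint lsc on $\nu_p$, which is jointly continuous in $(p,\mu_K,\mu_N)$. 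Writing $F_p=\KL(\mu_K\|\nu_p)+\frac{1-p}{p}\KL(\mu_N\|\nu_p)$ as a sum of lsc nonnegative terms, the coefficient $(1-p)/p$ is continuous and positive for $p>0$, giving lsc at all points with $p>0$.

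At points with $p=0$ and $p_j\to0$, drop the nonnegative second term. Then
\[ \liminf F_{p_j}(\mu_{K,j},\mu_{N,j})\ge\liminf\KL(\mu_{K,j}\|\nu_{p_j})\ge \KL(\mu_K\|\mu_N)=F_0(\mu_K,\mu_N), \]
by joint lsc of KL, since $\nu_{p_j}\to\mu_N$ weakly.
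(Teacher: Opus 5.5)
Your proposal is correct and follows essentially the same route as the paper: the identity $F_p=\KL(\mu_K\|\nu_p)+\frac{1-p}{p}\KL(\mu_N\|\nu_p)$; joint lower semicontinuity of $\KL$, dropping the nonnegative second term at $p=0$; the lower-semicontinuity/convexity sandwich for continuity at $p=0$; and differentiation under the integral followed by the quotient rule. The differences are minor: where the paper cites Polyanskiy--Wu and L'H\^opital, you show directly that $\frac1p\KL(\mu_N\|\nu_p)\to 0$ when $\mu_K\ll\mu_N$, and your one-sided bound $-\frac1p\log(1+p(r'-1))\le-\frac1p\log(1-p)$ already suffices there via reverse Fatou, since only the $\limsup$ matters; separately, in item~3 your dominating function $C(f_K+f_N)(1+|\log f_K|+|\log f_N|)$ is infinite wherever $f_K$ or $f_N$ vanishes, so replace it by a constant using $f_K+f_N=1$ $\nu$-a.e., which gives $\min(p,1-p)\le g_p\le 1$ and hence a uniformly bounded $p$-derivative of $g_p\log g_p$ on compact subintervals of $(0,1)$.
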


\begin{proof}
    We prove the three properties using the following identity:
    \begin{align}
        I(X; \hat X) & = \KL(P_{X,\hat X} \| P_{X}\otimes P_{\hat X}) \nonumber               \\
                     & = \EE_{X\sim \Bern(p)} \KL(P_{\hat X|X} \| P_{\hat X}) \nonumber       \\
                     & = p\KL(\mu_K \| \mu_p) + (1-p)\KL(\mu_N \| \mu_p), \label{eq:MI-as-KL}
    \end{align}
    where $\mu_p = p\mu_K + (1-p)\mu_N$ is the marginal distribution of $\hat X$.
    Dividing by $p$, we obtain the expression for $F_p$:
    \begin{equation}\label{eq:Fp-identity}
        F_p(\mu_K, \mu_N) = \KL(\mu_K \| \mu_p) + \frac{1-p}{p} \KL(\mu_N \| \mu_p).
    \end{equation}

    \paragraph{1. Joint Lower Semi-Continuity.}
    For any sequence $(p_n, \mu_{K,n}, \mu_{N,n}) \to (p, \mu_K, \mu_N)$, we show that $\liminf F_{p_n} \ge F_p$.
    First, observe that the map $(p, \mu_K, \mu_N) \mapsto \mu_p = p\mu_K + (1-p)\mu_N$ is continuous from the product topology to the weak-* topology.
    Since KL divergence is jointly lower semi-continuous (LSC) with respect to the weak-* topology, it is immediate that $F_p(\mu_K, \mu_N)$ is LSC whenever $p > 0$.

    Consider the case where $p=0$. As $p_n \to 0$, $\mu_{p_n} \to \mu_N$, the first term converges like $\liminf \KL(\mu_{K,n} \| \mu_{p_n}) \ge \KL(\mu_K \| \mu_N)$. The second term $\frac{1-p_n}{p_n} \KL(\mu_{N,n} \| \mu_{p_n})$ is always non-negative.
    Therefore,
    \[ \liminf F_{p_n} \ge \KL(\mu_K \| \mu_N) + 0 = F_0(\mu_K, \mu_N), \]
    proving its joint LSC everywhere.

    \paragraph{2. Continuity in $p$.}
    For fixed $\mu_K, \mu_N$, and $p > 0$, $F_p$ is clearly continuous as a composition of continuous functions. We only need to verify continuity at $p=0$.
    Standard results in information theory have shown that the KL divergence vanishes sublinearly when the two distributions are close, see e.g. Proposition 2.20 in \cite{polyanskiy2025information}:
    \[ \frac{\partial}{\partial p}\KL(\mu_N \| \mu_p) \bigg|_{p=0} = 0\text{ iff }\mu_K \ll \mu_N. \]

    Hence, when $\KL(\mu_K \| \mu_N) < \infty$ (and hence $\mu_K \ll \mu_N$), we have
    \begin{align*}
        \lim_{p\to 0} F_p (\mu_K, \mu_N) & = \lim_{p\to 0}\left(\KL(\mu_K \| \mu_p) + \frac{1-p}{p}\KL(\mu_N \| \mu_p)\right) \\
                                         & = \KL(\mu_K \| \mu_N) + \lim_{p\to 0}\frac{1-p}{p}\KL(\mu_N \| \mu_p)              \\
                                         & = \KL(\mu_K \| \mu_N),
    \end{align*}
    where the second equality is because the limit of first term is at least $\KL(\mu_K \| \mu_N)$  by LSC, and at most $\lim_{p\to 0} (1-p)\KL(\mu_K \| \mu_N)$ by convexity. The last equality is because the second term vanishes by L'Hopital's rule.

    When $\KL(\mu_K \| \mu_N) = \infty$, the limit also goes to infinity because $\KL(\mu_K \| \mu_p) \to \infty$.

    \paragraph{3. Differentiability in $p$.}
    Fix $\mu_K,\mu_N$. For $p\in(0,1)$, let $\nu := \mu_K+\mu_N$ and write
    $f_K := \frac{d\mu_K}{d\nu}$, $f_N := \frac{d\mu_N}{d\nu}$, so that the mixture
    $\mu_p = p\mu_K + (1-p)\mu_N$ has density
    \[
        m_p := \frac{d\mu_p}{d\nu} = p f_K + (1-p) f_N.
    \]
    Using \eqref{eq:MI-as-KL}, we can write
    \[
        I(p):=I(X;\hat X)
        = p\int f_K\log\frac{f_K}{m_p}\,d\nu + (1-p)\int f_N\log\frac{f_N}{m_p}\,d\nu .
    \]

    We first show that $I'(p) = \KL(\mu_K\|\mu_p)-\KL(\mu_N\|\mu_p)$. For convenience, we assume that the KL divergence is base-$e$ in this part and drop the $\ln 2$ factor.
    All results hold for base-2 as well.

    For $p\in(0,1)$ we have
    \[
        \frac{\partial}{\partial p}\log m_p = \frac{f_K-f_N}{m_p},
        \qquad\text{and}\qquad
        \Bigl|\frac{f_K-f_N}{m_p}\Bigr|
        \le \frac{f_K+f_N}{p f_K+(1-p) f_N}
        \le \frac{1}{p} + \frac{1}{1-p}.
    \]
    Therefore the functions $f_K\cdot \bigl|\frac{\partial}{\partial p}\log m_p\bigr|$
    and $f_N\cdot \bigl|\frac{\partial}{\partial p}\log m_p\bigr|$ are $\nu$-integrable
    (bounded by constants times $f_K$ and $f_N$, which integrate to $1$), and by dominated
    convergence we may differentiate $I(p)$ by moving $\frac{\partial}{\partial p}$ inside
    the integrals.

    Differentiating $I(p)$ yields
    \begin{align*}
        I'(p)
         & = \int f_K\log\frac{f_K}{m_p}\,d\nu - \int f_N\log\frac{f_N}{m_p}\,d\nu   \\
         & \quad\; + p\int f_K\Bigl(-\frac{\partial}{\partial p}\log m_p\Bigr)\,d\nu
        + (1-p)\int f_N\Bigl(-\frac{\partial}{\partial p}\log m_p\Bigr)\,d\nu .
    \end{align*}
    The last two terms cancel:
    \begin{align*}
        p\int f_K\Bigl(-\frac{f_K-f_N}{m_p}\Bigr)\,d\nu + (1-p)\int f_N\Bigl(-\frac{f_K-f_N}{m_p}\Bigr)\,d\nu
         & = -\int (p f_K+(1-p) f_N)\frac{f_K-f_N}{m_p}\,d\nu \\
         & = -\int (f_K-f_N)\,d\nu = 0,
    \end{align*}
    since $\int f_K\,d\nu=\int f_N\,d\nu=1$. Hence,
    \[
        I'(p)=\KL(\mu_K\|\mu_p)-\KL(\mu_N\|\mu_p).
    \]

    Finally, $F_p = I(p)/p$ for $p>0$, so by the quotient rule
    \begin{align*}
        \frac{\partial}{\partial p}F_p
         & = \frac{p I'(p)-I(p)}{p^2}                                                                                             \\
         & = \frac{p\bigl(\KL(\mu_K\|\mu_p)-\KL(\mu_N\|\mu_p)\bigr) - \bigl(p\KL(\mu_K\|\mu_p)+(1-p)\KL(\mu_N\|\mu_p)\bigr)}{p^2} \\
         & = -\frac{\KL(\mu_N\|\mu_p)}{p^2}.
    \end{align*}
\end{proof}

\subsection{Proof for achievability of rate-distortion lower bound}\label{appendix:proof-lem-achievability}
Before proving Lemma \ref{lem:achievability}, we need the following reduction, which states that for each $p$, it suffices to consider $\hat x$ as having finite support.
The exact support set depends on $p$ and the error constraints, yet the size of this set is uniformly upper bounded by $5$, by an application of Caratheodory's theorem.
\begin{lemma}[Finite Support Reduction]
    Fix $d^K, d^N, \eps_K, \eps_N > 0$ and $p \in (0,1)$. Then, there exists $(\mu_K^*, \mu_N^*) \in \calC_K(\eps_K) \times \calC_N(\eps_N)$ which attains the minimum in the definition of $R_p$,
    \[ R_p(\eps_K, \eps_N) = F_p(\mu_K^*, \mu_N^*), \]
    and that $\mu_K^*, \mu_N^*$ are discrete distributions with $|\supp(\mu_K^*) \cup \supp(\mu_N^*)| \leq 5$.
\end{lemma}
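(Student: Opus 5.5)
We start from an arbitrary minimizer and reduce its support; existence of a minimizer is already established in the paper. Let $(\mu_K,\mu_N)\in\calC_K(\eps_K)\times\calC_N(\eps_N)$ attain $R_p(\eps_K,\eps_N)$; by weak-$*$ compactness of the feasible set and lower semi-continuity of $F_p$ (Lemma~\ref{lem:properties-of-Fp}), such a pair exists. Write $\mu_p=p\mu_K+(1-p)\mu_N$, and let $a=d\mu_K/d\mu_p$ and $b=d\mu_N/d\mu_p$, which are bounded by $1/p$ and $1/(1-p)$. Since $pa+(1-p)b=1$ $\mu_p$-a.e., we may set $a(x)=t(x)/p$ and $b(x)=(1-t(x))/(1-p)$ for a single function $t:[0,1]\to[0,1]$, the posterior $\PP[X=1\mid\hat X=x]$.

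The key observation is that every quantity of interest is linear in $\mu_p$ once $t$ is fixed. Up to the constant $H(p)$, the objective is $p\,F_p=I(X;\hat X)=H(p)-\int h(t(x))\,d\mu_p(x)$, where $h$ is the binary entropy. The constraints become
\[
\int \tfrac{t(x)}{p}\,d^K(x)\,d\mu_p(x)\le\eps_K,\qquad \int \tfrac{1-t(x)}{1-p}\,d^N(x)\,d\mu_p(x)\le\eps_N,
\]
and the requirements that $\mu_K,\mu_N$ be probability measures are $\int t\,d\mu_p=p$ together with $\int d\mu_p=1$.

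With $t$ frozen, consider the map $x\mapsto g(x)=\bigl(h(t(x)),\ t(x)d^K(x),\ (1-t(x))d^N(x),\ t(x)\bigr)\in\RR^4$. The vector $\int g\,d\mu_p$ lies in the closed convex hull of $g([0,1])$, where the hull is taken over the set on which $\mu_p$ gives finite values. We will invoke the Carath\'eodory/Richter--Rogosinski theorem for moment problems: for a probability measure and finitely many integrable functions, there exists a discrete probability measure $\nu$ supported on at most $4+1=5$ points of $\supp\mu_p$ with $\int g\,d\nu=\int g\,d\mu_p$. The relevant functions are integrable here: $h\circ t$ and $t$ are bounded, while $t\,d^K$ and $(1-t)\,d^N$ have finite integrals because the constraints hold.

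Now define $\mu_K^*=(t/p)\,\nu$ and $\mu_N^*=\bigl((1-t)/(1-p)\bigr)\nu$. Matching the last coordinate gives $\int t\,d\nu=p$, so both are probability measures. Matching the second and third coordinates shows that both constraints still hold. Because $\nu=p\mu_K^*+(1-p)\mu_N^*$ and the posterior of the new pair is exactly $t$, we get $I(X;\hat X)=H(p)-\int h(t)\,d\nu$, which is unchanged. Hence $F_p(\mu_K^*,\mu_N^*)=R_p$, and the union of supports is $\supp\nu$, which has at most $5$ points.

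The main obstacle is the measure-theoretic step. The density $t$ is defined only $\mu_p$-a.e. and may be non-measurable in a naive version, and $d^K$ or $d^N$ may be infinite at some points. Both are handled by choosing a Borel version of $t$ and restricting to the Borel set where $t\,d^K$ and $(1-t)\,d^N$ are finite, which has full $\mu_p$-measure. On that set the Richter--Rogosinski theorem applies directly to Borel $g$ with finite moments. As a fallback, the reduction can be done in two stages: first approximate by finitely supported measures, then apply finite-dimensional Carath\'eodory. This fallback needs lower semi-continuity to pass to the limit, which makes the direct moment theorem preferable.
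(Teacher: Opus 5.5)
Your proposal is correct and follows essentially the same route as the paper: freeze the posterior $t$ (the paper's $q_{\hat x}$), apply Carath\'eodory to the four moments $\bigl(t,\ t\,d^K,\ (1-t)\,d^N,\ h(t)\bigr)$ to obtain a mixing measure $\nu$ with at most $5$ atoms, and rebuild the pair as $\mu_K^*=(t/p)\,\nu$ and $\mu_N^*=\bigl((1-t)/(1-p)\bigr)\,\nu$. Two of your choices are slightly more careful than the paper, which applies finite-dimensional Carath\'eodory directly and assumes tight constraints ``for convenience'': you invoke Richter--Rogosinski to justify that the mean lies in the convex hull, and you keep the error constraints as inequalities.
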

\begin{proof}
    We will start with any pair $(\mu_K^*, \mu_N^*)$ with errors exactly equal to $\eps_K$ and $\eps_N$ and attains the minimum in the definition of $R_p$. Then, we show that there exists some other $(\mu_K', \mu_N')$ which takes value on at most 5 points, achieves the same error levels, and attains the same rate:
    \[ R_p(\eps_K, \eps_N) = F_p(\mu_K^*, \mu_N^*) = F_p(\mu_K', \mu_N'). \]

    Consider the joint distribution $P_{X,\hat X}$ of $(X, \hat X)$ that $(\mu_K^*, \mu_N^*)$ induces, and let $P_{\hat X} = p\mu_K^* + (1-p)\mu_N^*$ be the marginal distribution of $\hat X$. Then, for $P_{\hat X}$-almost every $\hat x \in [0,1]$, we have a unique posterior distribution for $X$, denoted $P_{X|\hat X}(\cdot|\hat x) = \Bern(q_{\hat x})$. The value $q_{\hat x}$ is characterized by the Radon-Nikodym derivative
    \[ q_{\hat x} = p\cdot  \frac{d\mu_K^*}{dP_{\hat X}} (\hat x) \in [0,1]. \]

    For each $q \in [0,1]$, we use $h(q)$ to denote the binary entropy function $h(q) = -q\log_2 q - (1-q)\log_2 (1-q)$, where $h(0) = h(1) = 0$. We now consider the $P_{\hat X}$-everywhere defined vector-valued function $f: E \to \RR^4$, which maps each $\hat x$ to the tuple
    \[ f(\hat x) = \big(q_{\hat x},\, q_{\hat x}\cdot d^K(\hat x),\, (1-q_{\hat x})\cdot d^N(\hat x),\, h(q_{\hat x})\big). \]

    Let $E$ be defined as the points over which $f$ is finite, and for convenience we will assume $(\mu_K^*, \mu_N^*)$ are tight for both error constraints. Although $d^K, d^N$ can take infinite values, $E$ has full $P_{\hat X}$-measure, as the error constraints imply that the following integrals are finite:
    \[ \EE_{P_{X, \hat X}}[d^K(\hat X) | X = 1] = \frac{1}{p}\EE_{P_{\hat X}} [q_{\hat X} \cdot d^K(\hat X)] = \eps_K
    \quad \text{and}\quad
    \EE_{P_{X, \hat X}}[d^N(\hat X) | X = 0] = \frac{1}{1-p}\EE_{P_{\hat X}} [(1-q_{\hat X}) \cdot d^N(\hat X)] = \eps_N. \]

    Moreover, we also note that by construction,
    \[ \EE_{\hat X \sim P_{\hat X}} [q_{\hat X}] = p \quad \text{and}\quad \EE_{\hat X \sim P_{\hat X}} [h(q_{\hat X})] = H(X|\hat X) = h(p) - p\cdot F_p(\mu_K^*, \mu_N^*). \]

    Note that $\EE_{\hat X \sim P_{\hat X}} [f(\hat X)] = \big(p, p\eps_K, (1-p)\eps_N, h(p) - p\cdot F_p(\mu_K^*, \mu_N^*)\big)$ is in the convex hull of the vectors
    \[ \big\{(q_{\hat x},\; q_{\hat x}\cdot d^K(\hat x),\; (1-q_{\hat x})\cdot d^N(\hat x),\; h(q_{\hat x})) : \hat x \in E\big\}. \]
    By Carathéodory's Theorem, there exist $\alpha_1, \ldots, \alpha_5 \geq 0$ and $\hat x_1, \ldots, \hat x_5 \in [0,1]$ such that $\sum_{i=1}^5 \alpha_i = 1$ and
    \[ \EE_{\hat X \sim P_{\hat X}} [f(\hat X)] = \sum_{i=1}^5 \alpha_i f(\hat x_i). \]

    Now, consider the random variable $X'\sim \sum_{i=1}^5 \alpha_i \delta_{\hat x_i}$, where $\delta_{\hat x_i}$ is the Dirac delta distribution at $\hat x_i$. Let $P_{X|X'}$ be defined as $P_{X|X'}(x|x') = \Bern(q_{x'})$. We can check that $\PP[X=1] = p$ under this $P_{X|X'}$ since $\sum_{i=1}^5 \alpha_i q_{\hat x_i} = p$. We can now construct $\mu_K'$ and $\mu_N'$ as
    \[ \mu_K' = \frac{1}{p}\sum_{i=1}^5 \alpha_i q_{\hat x_i} \delta_{\hat x_i}\; \text{ and } \; \mu_N' = \frac{1}{1-p}\sum_{i=1}^5 \alpha_i (1-q_{\hat x_i}) \delta_{\hat x_i}. \]

    We have $\mu_K' \in \calC_K(\eps_K)$ and $\mu_N' \in \calC_N(\eps_N)$ since
    \[ \EE_{X'\sim \mu_K'} [d^K(X')] = \frac{1}{p}\sum_{i=1}^5 \alpha_i q_{\hat x_i} \cdot d^K(\hat x_i) = \eps_K \]
    and
    \[ \EE_{X' \sim \mu_N'} [d^N(X')] = \frac{1}{1-p}\sum_{i=1}^5 \alpha_i (1-q_{\hat x_i}) \cdot d^N(\hat x_i) = \eps_N. \]

    Finally, $(\mu_K', \mu_N')$ also achieve the same rate as $(\mu_K^*, \mu_N^*)$ since
    \[ p\cdot F_p(\mu_K', \mu_N') = h(p) - \EE_{X' \sim P_{X'}} [h(q_{X'})] = h(p) - \EE_{\hat X \sim P_{\hat X}} [h(q_{\hat X})] = p \cdot F_p(\mu_K^*, \mu_N^*). \]

    It follows that $(\mu_K', \mu_N')$ attains the minimum in the definition of $R_p$ and has support on at most 5 points.
\end{proof}

Now that we reduced the reconstructed $\hat X$ to a random variable with finite support, we can proceed with the proof of achievability.
\begin{lemma}[same as Lemma \ref{lem:achievability}]
    Fix any $p=\frac{n}{u}\in (0,1)$ and $\eps_K, \eps_N > 0$. Then, for all $\delta>0$, there is a sufficiently large $n$ and $u$ such that there exists a membership tester $\calM$ for universe $[u]$ and key sizes $n$ which achieves error rates $(\eps_K + \delta, \eps_N + \delta)$, and:
    \[ \frac{B(\calM)}{n} \leq R_p(\eps_K, \eps_N) + \delta. \]
\end{lemma}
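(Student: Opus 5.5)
We follow the standard random-coding argument of rate-distortion theory, adapted to a fixed-composition source. First invoke the Finite Support Reduction lemma: there is a minimizer $(\mu_K^*,\mu_N^*)$ of $F_p$ over $\calC_K(\eps_K)\times\calC_N(\eps_N)$ with $|\supp(\mu_K^*)\cup\supp(\mu_N^*)|\le 5$. Call this support $\calX$; the errors $d^K,d^N$ are finite at every atom carrying positive mass. The reconstruction alphabet is then finite, and the source $X^u=(X_1,\dots,X_u)$ is a uniformly random binary string of weight exactly $n=pu$. Consider the joint type $P^*$ on $\{0,1\}\times\calX$ given by $P^*(1,x)=p\,\mu_K^*(x)$ and $P^*(0,x)=(1-p)\,\mu_N^*(x)$. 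Its $\calX$-marginal is $\mu_p$, and $I_{P^*}(X;\hat X)=p\,F_p(\mu_K^*,\mu_N^*)=p\,R_p(\eps_K,\eps_N)$.

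\textbf{Codebook and encoder.} With shared randomness, draw $M=2^{u(pR_p+\delta')}$ codewords $\hat x^u(1),\dots,\hat x^u(M)$ i.i.d.\ uniformly from the type class $T^u_{\mu_p}$, rounding the type to denominator $u$. $\Init$ stores the index $W$ of the first codeword that is jointly typical with the key indicator vector $x^u$, meaning its joint type is within $\eta$ of $P^*$. If no such codeword exists, $\Init$ stores a default codeword that outputs the fixed point $c$ of Assumption~\ref{assump:error_metrics} on every query. $\Query(i,W)$ returns the $i$-th coordinate $\hat x_i(W)$. Then $B(\calM)=I(W;\calK)\le H(W)\le\log M$, so $B(\calM)/n\le R_p+\delta'/p$, which is at most $R_p+\delta$ for small $\delta'$.

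\textbf{Error analysis.} By the method of types, since $x^u$ has exact type $\Bern(p)$ and each codeword is uniform on $T^u_{\mu_p}$, the probability that a single codeword is jointly typical with $x^u$ is $2^{-u(I_{P^*}+O(\eta))}$ up to polynomial factors. With $M$ codewords, failure therefore has probability at most $\exp(-2^{u(\delta'-O(\eta))})\to 0$. On success, the empirical distribution of $\hat x_i$ over keys is within $O(\eta/p)$ of $\mu_K^*$, and over non-keys within $O(\eta)$ of $\mu_N^*$. Since $\calX$ is finite and the errors are finite on $\calX$, the key and non-key errors, averaged over the positions within the realized $\calK$, are at most $\eps_K+O(\eta)$ and $\eps_N+O(\eta)$. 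On failure the error is bounded by $\max(d^K(c),d^N(c))<\infty$, multiplied by a vanishing probability. Finally, symmetrize as in Remark~\ref{rmk:permutation-invariance}. The joint-typicality rule is already permutation-equivariant in distribution, and the averaged error rates then equal the expected per-key and per-non-key errors. Choosing $\eta,\delta'$ small and $u$ large gives error rates $(\eps_K+\delta,\eps_N+\delta)$.

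\textbf{Main obstacle.} The key step is the covering bound for a fixed-composition source against a constant-composition codebook. We need the exponent of $\PP[(x^u,\hat X^u)\text{ jointly typical}]$ to be exactly $I_{P^*}(X;\hat X)$, not the divergence from an i.i.d.\ product. This holds because both marginals are fixed types, so the count is $|T_{P^*}|/|T_{\mu_p}|$ up to $\mathrm{poly}(u)$ factors, which equals $2^{-u I}$ up to $\mathrm{poly}(u)$. Two smaller points also need care. The rounding of $P^*$ to a $u$-type must keep errors finite, which is why we restrict to atoms of positive mass. And because $pu=n$ must be an integer, we take $u$ along multiples of the denominator of $p$; if $p$ is irrational, we approximate $p$ and use continuity of $F_p$ in $p$ from Lemma~\ref{lem:properties-of-Fp}.
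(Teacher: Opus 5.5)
Your route is the paper's: a finite-support reduction, then a type-covering codebook of about $2^{u(pR_p+\delta')}$ codewords drawn from the $\hat X$-marginal type, with $\Query$ returning the $i$-th coordinate of the selected codeword. The paper derandomizes with a union bound over all $\binom{u}{n}$ key sets, which yields one codebook covering every $\calK$, so no fallback is needed. Your shared-randomness codebook with a fallback to $c$ is equally legitimate.

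The genuine gap is in your typicality test. You accept any codeword whose joint type with $x^u$ is within $\eta$ of $P^*$, and you then use that ``the errors are finite on $\calX$.'' That is false in general. $d^K$ is only guaranteed finite on $\supp(\mu_K^*)$ and $d^N$ on $\supp(\mu_N^*)$, while $\calX$ is their union. For example, in the log-loss problem of \cref{thm:prob-est-LLM}, non-keys use an atom at $0$, but $d^K(0)=-\ln 0=\infty$.

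For large $u$, your $\eta$-ball contains joint types that place a few keys on such a point. Each codeword is uniform on the type class, so with positive probability the selected codeword does this. Because the error rate is an expectation, the key error is then infinite. In fact such codewords dominate the set passing your test, since shifting mass onto an empty cell $(1,x)$ raises $H(\hat X\mid X)$ with infinite slope. Your remark about restricting to atoms of positive mass only governs how $P^*$ is rounded, not which codewords the encoder accepts.

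The fix is what the paper gets from Definition~2.9 of Csisz\'ar--K\"orner: also require the joint type to vanish wherever $P^*$ does. The covering exponent survives if you exhibit an exact $u$-type $P^*_u$ close to $P^*$ with $\supp(P^*_u)\subseteq\supp(P^*)$ and the right marginals. One way:
\begin{itemize}
\item round $\mu_K^*$ to denominator $n$ and $\mu_N^*$ to denominator $u-n$, each within its own support;
\item set $P^*_u(1,\cdot)=p\,\hat\mu_K$ and $P^*_u(0,\cdot)=(1-p)\,\hat\mu_N$;
\item draw codewords from the type class $T_{\hat X}$ of the $\hat X$-marginal of $P^*_u$.
\end{itemize}
This also makes your two roundings compatible.

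A single codeword then has joint type exactly $P^*_u$ with probability $|T_{P^*_u}|/(|T_{\Bern(p)}|\,|T_{\hat X}|)\ge 2^{-u(I_{P^*_u}(X;\hat X)+o(1))}$, and $I_{P^*_u}\to I_{P^*}$. Your ratio $|T_{P^*}|/|T_{\mu_p}|$, as written, is about $2^{uH(X\mid\hat X)}$, not $2^{-uI}$. The numerator should be the conditional type class: sequences with joint type $P^*$ relative to the fixed $x^u$, of size about $2^{uH(\hat X\mid X)}$. With these changes, the rest of your argument goes through.
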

\begin{proof}

    \textbf{Step 1: Finite Support and Continuity.}
    Let $(\mu_K^*, \mu_N^*)$ be the pair of distributions attaining the minimum in the definition of $R_p(\eps_K, \eps_N)$.
    By the finite support reduction (proven in the previous lemma), we assume without loss of generality that these distributions are supported on a finite set $\mathcal{Y}^* \subset [0,1]$ with $|\mathcal{Y}^*| \le 5$.
    In the rest of the proof, we will rename the recovered $\hat X$ as $Y$ for notational convenience.

    Let $P^*_{X,Y}$ be the joint distribution of $X$ and $Y$ under the optimal $(\mu_K^*, \mu_N^*)$, where $P^*_X = \Bern(p)$ is the source distribution, and $P^*_{Y|X}$ is the transition kernel defined by $P^*_{Y|X}(y|1) = \mu_K^*(y)$ and $P^*_{Y|X}(y|0) = \mu_N^*(y)$.

    We will use $\calD$ to denote the set of joint distributions $P_{X,Y}$ with $\Bern(p)$ marginal on $X$ and which places zero mass on points with infinite errors: for all $y\in \mathcal{Y}^*$, $P_{X,Y}(1, y) = 0$ whenever $d^K(y) = \infty$ and $P_{X,Y}(0, y) = 0$ whenever $d^N(y) = \infty$.
    Since both $X$ and $Y$ have finite support, $\calD$ is compact.
    Consequently, the following functionals are uniformly continuous over $\calD$:

    \begin{enumerate}
        \item $D_K(P_{X,Y}) = \EE_{y \sim P_{Y|X}(\cdot | 1)}[d^K(y)]$;
        \item $D_N(P_{X,Y}) = \EE_{y \sim P_{Y|X}(\cdot | 0)}[d^N(y)]$.
    \end{enumerate}

    Therefore, there exists a $\gamma > 0$ such that for any $P'_{X,Y} \in \calD$ with $\|P'_{X,Y} - P^*_{X,Y}\|_1 < \gamma$, we have:
    \begin{align}
        |D_K(P'_{X,Y}) - D_K(P^*_{X,Y})|                         & < \delta, \label{eq:cont_errK}  \\
        |D_N(P'_{X,Y}) - D_N(P^*_{X,Y})|                         & < \delta, \label{eq:cont_errN}
    \end{align}



    \textbf{Step 2: Typical Sequences.}
    For each $P_{X,Y}\in \calD$, and for each source sequence $x^u\in \{0,1\}^u$, let $T^u_{[P_{Y|X}]\gamma}(x^u)$ be the set of \textit{$P_{Y|X}$-typical sequences under condition $x^u$ with constant $\gamma$} as defined in Definition 2.9 of \citet{csiszar2011information}. These sequences are $y^u \in \mathcal{Y}^{*u}$ such that:
    \begin{enumerate}
        \item Let $\hat P_{X,Y}(x',y' | x^u, y^u)$ be the joint empirical distribution of $x_i$ and $y_i$ for $i=1,\dots,u$ in the sequences $x^u$ and $y^u$. Then,
        \[\left| \hat P_{X,Y}(x',y' | x^u, y^u) - P_X(x')P_{Y|X}(y'|x') \right| \leq \gamma. \]
        \item Whenever $P_{Y|X}(y'|x') = 0$, $\hat P_{X,Y}(x',y' | x^u, y^u) = 0$.
    \end{enumerate}
    It follows from uniform continuity that, for sufficiently small $\gamma$, all $P^*_{Y|X}$-typical sequences $y^u$ under condition $x^u$ will have the empirical distribution of $(x_i, y_i)$ close to $P^*_{X,Y}$, and thus satisfy \eqref{eq:cont_errK} and \eqref{eq:cont_errN} for the desired choice of $\delta$.

    By Lemma 2.13 in \citet{csiszar2011information}, there exist sequences $\delta_u \to 0$ and $\gamma_u \to 0$ such that for all $x^u \in \{0,1\}^u$ with $n=pu$ entries of 1,
    \[ \left| \frac{1}{u}\log |T^u_{[P_{Y|X}]\gamma_u}(x^u)| - H(Y|X) \right| \leq \delta_u, \]
    where $H(Y|X)$ is based on the selected $P_{X,Y}$.

    Similarly, we can define \textit{$P_Y$-typical sequences with constant $\gamma$} as the sequences $y^u \in \mathcal{Y}^u$ such that
    \[ \left| \hat P_Y(y'|y^u) - P_Y(y') \right| \leq \gamma, \]
    where $\hat P_Y(y'|y^u)$ is the empirical distribution of $y'$ in $y^u$. There exist sequences $\delta_u \to 0$ and $\gamma_u \to 0$ such that
    \[ \left| \frac{1}{u}\log |T^u_{[P_Y]\gamma_u}| - H(Y) \right| \leq \delta_u, \]
    where $H(Y)$ is based on the selected $P_Y$.

    \textbf{Step 3: Type Covering.}
    By the previous steps, for sufficiently large $u$, there is some $\gamma$ such that for each source sequence $x^u$ with $n$ entries of 1, each $y\in T^u_{[P_{Y|X}]\gamma}(x^u)$ satisfies \eqref{eq:cont_errK} and \eqref{eq:cont_errN} for the desired choice of $\delta$.
    Now, we apply the same proof as the type covering lemma (Lemma 9.1 in \citet{csiszar2011information}) to show that all valid source $x^u$ can be ``covered'' by some $y^u$ in a fixed codebook $M$. We will demonstrate the existence of such a codebook and bound its size.

    Let $P^*_Y$ be the marginal distribution of $Y$ under $P^*_{X,Y}$.
    Consider a codebook $M$ of $m$ sequences sampled uniformly from $T^u_{[P^*_Y]\gamma}$, and let $Y^u$ be such a random sequence.
    For any source sequence $x^u$ with $n$ entries of 1, the probability that $Y^u$ is $P^*_{X|Y}$-typical under condition $x^u$ with constant $\gamma$ is given by
    \begin{align*}
        \PP[Y^u \in T^u_{[P^*_{Y|X}]\gamma}(x^u)] = \frac{|T^u_{[P^*_{Y|X}]\gamma}(x^u)|}{|T^u_{[P^*_Y]\gamma}|} \ge \frac{2^{-u(H(Y|X) + p\delta/4)}}{2^{u(H(Y) + p\delta/4)}} = 2^{-u(I(X;Y) + p\delta/2)},
    \end{align*}
    where $X,Y$ are distributed according to $P^*_{X,Y}$. The inequality in the middle is due to the fact that $\delta_u$ is eventually smaller than $p\delta/4$, for our choice of $\delta$.

    With $m$ uniformly selected $Y^u$, the expected number of $x^u$ that are \textit{not} covered by any $Y^u$ in the codebook is at most:
    \begin{align*}
        \EE[\text{number of } x^u \text{ not covered by } M] &= \sum_{x^u} \PP[Y^u \notin T^u_{[P^*_{Y|X}]\gamma}(x^u)] \\
        & \leq \sum_{x^u} \left(1 - 2^{-u(I(X;Y) + p\delta/2)})\right)^m \\
        & \leq \exp(-m 2^{-u(I(X;Y) + p\delta/2)})\cdot \binom{u}{n}.
    \end{align*}
    For sufficiently large $u$, this value is strictly less than 1 if we choose $m = 2^{u(I(X;Y) + p\delta)}$.
    As the expectation is smaller than $1$, this means that there exists a choice of codebook that simultaneously covers all source sequences.

    We can now define a membership tester, which initializes by encoding the nearest codeword in $M$ to the $x^u$ corresponding to key set $\calK$, and on query $i$, outputs the $i$th entry of that codeword.
    And the proof is finished since the memory cost is at most $\log|M| \leq u(I(X;Y)+p\delta) = n\cdot \big( R_p(\eps_K, \eps_N) + \delta\big)$.
\end{proof}

\subsection{Proof of Theorem \ref{thm:rate-distortion-main-1}}\label{appdx:proof-thm-rate-distortion-main-1}
\begin{theorem}[same as Theorem \ref{thm:rate-distortion-main-1}]
    Fix error metrics $d^K,d^N$ and error rates $\eps_K,\eps_N \geq 0$. Let $\{n_j\},\{u_j\}$ be sequences of natural numbers such that $n_j \to \infty$ and $n_j/u_j \to 0$. For each $j$, let $\calM_j$ be a membership tester for universe $[u_j]$ and key size $n_j$ that achieves error rates $(\eps_{K, j},\eps_{N, j})$ under error metrics $d^K, d^N$. Suppose the error rates satisfy:
    \[ \limsup_{j\to\infty} \eps_{K, j} \leq \eps_K, \quad \limsup_{j\to\infty} \eps_{N, j} \leq \eps_N. \]

    Then, the asymptotic per-key memory budget of $\calM_j$ is at least:
    \[ \liminf_{j\to\infty} \frac{B(\calM_j)}{n_j} \geq \min_{\mu_K\in \calC_K(\eps_K), \mu_N \in \calC_N(\eps_N)} \KL(\mu_K \| \mu_N).  \]
    Moreover, there exist sequences $\{u_j\}$, $\{n_j\}$, and $\{\calM_j\}$ as described above that achieve the memory lower bound.
\end{theorem}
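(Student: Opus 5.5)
The proof has two halves: the lower bound, which combines Corollary~\ref{cor:space-error-lower-bound} with compactness and lower semi-continuity, and achievability, which combines Lemma~\ref{lem:achievability} with continuity of $F_p$ at $p=0$.

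\textbf{Lower bound.} Write $p_j = n_j/u_j \to 0$ and $\mu_{K,j}=\mu_K(\calM_j)$, $\mu_{N,j}=\mu_N(\calM_j)$.
\begin{enumerate}
\item By Lemma~\ref{lem:output-distribution-lower-bound},
\[ \frac{B(\calM_j)}{n_j}\ge F_{p_j}(\mu_{K,j},\mu_{N,j}) - \frac{\log(8n_j)}{2n_j}, \]
and the correction term vanishes as $n_j\to\infty$.
\item Pass to a subsequence realizing the $\liminf$ of $B(\calM_j)/n_j$. Since $\calP([0,1])$ is weak-* compact (Prokhorov), a further subsequence has $\mu_{K,j}\to\mu_K$ and $\mu_{N,j}\to\mu_N$.
\item By joint lower semi-continuity of $(p,\mu_K,\mu_N)\mapsto F_p(\mu_K,\mu_N)$ (Lemma~\ref{lem:properties-of-Fp}, item 1) at $p=0$,
\[ \liminf_j F_{p_j}(\mu_{K,j},\mu_{N,j}) \ge F_0(\mu_K,\mu_N)=\KL(\mu_K\|\mu_N). \]
\item Feasibility of the limit: since $d^K$ is nonnegative and lower semi-continuous, $\mu\mapsto\EE_\mu[d^K]$ is weak-* lower semi-continuous (approximate $d^K$ from below by bounded continuous functions). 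Hence $\EE_{\mu_K}[d^K]\le\liminf_j \eps_{K,j}\le\eps_K$, so $\mu_K\in\calC_K(\eps_K)$. The same argument gives $\mu_N\in\calC_N(\eps_N)$.
\item The minimum of $\KL(\mu_K\|\mu_N)$ over $\calC_K(\eps_K)\times\calC_N(\eps_N)$ is attained by the same closedness and lower semi-continuity argument. This gives the stated lower bound.
\end{enumerate}

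\textbf{Achievability.} Let $R^*$ denote the minimum KL value; assume $R^*<\infty$, otherwise there is nothing to prove.

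First handle $\eps_K,\eps_N>0$.
\begin{enumerate}
\item Take a minimizer $(\mu_K^*,\mu_N^*)$. By Lemma~\ref{lem:properties-of-Fp}, item 2, $F_p(\mu_K^*,\mu_N^*)\to R^*$ as $p\to0$, and therefore $\limsup_{p\to0}R_p(\eps_K,\eps_N)\le R^*$.
\item Choose $p_j=1/j$ and $\delta_j=1/j$. For each $j$, Lemma~\ref{lem:achievability} yields $n_j,u_j$ with $n_j/u_j=p_j$ and a tester $\calM_j$ with errors at most $(\eps_K+\delta_j,\eps_N+\delta_j)$ and
\[ \frac{B(\calM_j)}{n_j}\le R_{p_j}(\eps_K,\eps_N)+\delta_j. \]
Taking $n_j$ increasing makes $n_j\to\infty$.
\item Then $\limsup_j \eps_{K,j}\le\eps_K$, $\limsup_j\eps_{N,j}\le\eps_N$, and $\limsup_j B(\calM_j)/n_j\le R^*$. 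The lower bound forces equality.
\end{enumerate}

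\textbf{Main obstacle: zero error rates.} Lemma~\ref{lem:achievability} needs $\eps_K,\eps_N>0$. When an error rate is $0$, I would apply the positive case at $(\eps_K+1/j,\eps_N+1/j)$ and diagonalize. It remains to show
\[ \lim_{\eta\downarrow0}\ \min_{\calC_K(\eps_K+\eta)\times\calC_N(\eps_N+\eta)}\KL = R^*. \]
\begin{enumerate}
\item The values are monotone in $\eta$, so the limit exists and is at most $R^*$.
\item For the reverse inequality, take minimizers at each $\eta$ and extract a weak-* convergent subsequence by compactness. The limit lies in $\calC_K(\eps_K)\times\calC_N(\eps_N)$ by the semi-continuity of expected error from step 4 of the lower bound. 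Lower semi-continuity of KL then shows the limit of the minima is at least $R^*$.
\end{enumerate}
This recovers the exact error rates in the $\limsup$ sense the theorem requires.

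A secondary check is that the constructed testers can be made permutation-invariant without changing their cost. This follows from Remark~\ref{rmk:permutation-invariance}.
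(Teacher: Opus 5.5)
Your proposal is correct and follows the paper's proof essentially step for step. The lower bound uses \cref{lem:output-distribution-lower-bound}, weak-* compactness, joint lower semi-continuity of $F_p$ at $p=0$, and Portmanteau for feasibility of the limit; achievability uses continuity of $F_p$ at $p=0$ plus \cref{lem:achievability} with $p_j\to 0$.

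Your perturbation treatment of $\eps_K=0$ or $\eps_N=0$ covers a case the paper's proof passes over, since \cref{lem:achievability} requires strictly positive error rates. Only the monotonicity $R_p(\eps_K+\eta,\eps_N+\eta)\le R_p(\eps_K,\eps_N)\le F_p(\mu_K^*,\mu_N^*)$ is needed there. The reverse inequality in your compactness argument is already supplied by the lower-bound half.
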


Now, \cref{thm:rate-distortion-main-1} follows from the fact that $R_p$ converges to the minimum KL divergence as $p\to 0$.
\begin{proof}[Proof of Theorem \ref{thm:rate-distortion-main-1}]
    First we show the lower bound. By Lemma \ref{lem:output-distribution-lower-bound}, for each $j$, the memory cost of $\calM_j$ satisfies:
    \[ \frac{B(\calM_j)}{n_j} \geq F_{p_j}\big(\mu_K(\calM_j), \mu_N(\calM_j)\big) - \frac{\log(8n_j)}{2n_j}. \]
    Let $\ell = \liminf_{j\to\infty} \frac{B(\calM_j)}{n_j}$. By passing to a subsequence (still denoted by $j$), we can assume $B(\calM_j)/n_j \to \ell$.

    Let $\mu_{K,j} = \mu_K(\calM_j)$ and $\mu_{N,j} = \mu_N(\calM_j)$.
    Since the space of probability measures $\calP([0,1])$ is compact under the weak-* topology, there exists a further subsequence (again, denoted by $j$) such that $(\mu_{K,j}, \mu_{N,j})$ converges to some limit $(\mu_K^*, \mu_N^*) \in \calP([0,1])^2$.
    By the lower semi-continuity of $F_p(\mu_K, \mu_N)$ in $(p, \mu_K, \mu_N)$ (Lemma \ref{lem:properties-of-Fp}), and since $p_j \to 0$, we have:
    \begin{align*}
        \ell = \lim_{j\to\infty} \frac{B(\calM_j)}{n_j} &\geq \liminf_{j\to\infty} F_{p_j}(\mu_{K,j}, \mu_{N,j}) \\
        &\geq F_0(\mu_K^*, \mu_N^*) = \KL(\mu_K^* \| \mu_N^*).
    \end{align*}

    It remains to show that $(\mu_K^*, \mu_N^*)$ is in $\calC_K(\eps_K) \times \calC_N(\eps_N)$.
    By assumption, we have $\limsup \eps_{K,j} \leq \eps_K$ and $\limsup \eps_{N,j} \leq \eps_N$.
    Since the functions $d^K, d^N$ are LSC, the expected error is lower semi-continuous with respect to the measure by Portmanteau theorem. Thus:
    \[
        \EE_{\hat x\sim \mu_K^*}[d^K(\hat x)] \leq \liminf_{j\to\infty} \EE_{\hat x\sim \mu_{K,j}}[d^K(\hat x)]\leq \eps_K,
    \]
    which implies $\mu_K^* \in \calC_K(\eps_K)$, and similarly $\mu_N^* \in \calC_N(\eps_N)$.
    This completes the lower bound:
    \begin{align*}
        \liminf_{j\to\infty} \frac{B(\calM_j)}{n_j} &\geq \KL(\mu_K^* \| \mu_N^*) \\
        &\geq \min_{\mu_K\in \calC_K(\eps_K), \mu_N \in \calC_N(\eps_N)} \KL(\mu_K \| \mu_N).
    \end{align*}

    For achievability, let $(\bar\mu_K,\bar\mu_N)$ be any minimizer of the KL objective over $\calC_K(\eps_K)\times\calC_N(\eps_N)$.
    Since $F_p(\bar\mu_K,\bar\mu_N) \to \KL(\bar\mu_K \| \bar\mu_N)$ as $p\to0$ by Lemma \ref{lem:properties-of-Fp}, we also have
    \[
        R_p(\eps_K,\eps_N) \leq F_p(\bar\mu_K,\bar\mu_N)
        \to \min_{\mu_K\in \calC_K(\eps_K), \mu_N \in \calC_N(\eps_N)} \KL(\mu_K \| \mu_N).
    \]
    By Lemma \ref{lem:achievability}, for each $p$, for sufficiently large $n$ we can find a membership tester with error rates arbitrarily close to $(\eps_K, \eps_N)$ and memory usage arbitrarily close to $R_p(\eps_K,\eps_N)$ per key.
    The desired claim follows by taking $p_j\to0$ and taking a sufficiently large $n_j$ for each $p_j$.
\end{proof}

\subsection{Proof of Theorem \ref{thm:rate-distortion-main-2}}\label{appdx:proof-thm-rate-distortion-main-2}
\begin{theorem}[same as Theorem \ref{thm:rate-distortion-main-2}]
    Suppose $(\mu_K^*,\mu_N^*)\in \calC_K(\eps_K)\times\calC_N(\eps_N)$ is the unique minimizer of $\KL(\mu_K\|\mu_N)$. In the setting of \cref{thm:rate-distortion-main-1}, if $\{\calM_j\}$ is asymptotically optimal, in the sense that
    \[ \limsup_{j\to\infty} \frac{B(\calM_j)}{n_j} = \KL(\mu_K^* \| \mu_N^*), \]
    then we must have $\mu_{K}(\calM_j) \to \mu_K^*$ and $\mu_{N}(\calM_j) \to \mu_N^*$ in Wasserstein-1 distance.
\end{theorem}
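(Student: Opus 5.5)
We argue by contradiction, combining compactness with the lower-bound argument from the proof of \cref{thm:rate-distortion-main-1}. Since $[0,1]$ is compact, Wasserstein-1 distance metrizes weak-* convergence on $\calP([0,1])$. It therefore suffices to show that every subsequence of $(\mu_K(\calM_j),\mu_N(\calM_j))$ has a further subsequence converging weakly-* to $(\mu_K^*,\mu_N^*)$. Suppose not. Then there are $\eta>0$ and a subsequence along which
\[
W_1\big(\mu_K(\calM_j),\mu_K^*\big)+W_1\big(\mu_N(\calM_j),\mu_N^*\big)\ge\eta .
\]
By compactness of $\calP([0,1])^2$, we pass to a further subsequence on which $(\mu_{K,j},\mu_{N,j})\to(\nu_K,\nu_N)$ weakly-*. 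By continuity of $W_1$, this limit satisfies $(\nu_K,\nu_N)\neq(\mu_K^*,\mu_N^*)$.

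Next we show that $(\nu_K,\nu_N)$ is feasible and optimal. Feasibility follows exactly as in the proof of \cref{thm:rate-distortion-main-1}. The metrics $d^K,d^N$ are nonnegative and lower semi-continuous, so by the Portmanteau theorem the expected errors are lower semi-continuous in the measure. Combined with $\limsup\eps_{K,j}\le\eps_K$ and $\limsup\eps_{N,j}\le\eps_N$, this gives $\nu_K\in\calC_K(\eps_K)$ and $\nu_N\in\calC_N(\eps_N)$.

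For the rate, we apply Lemma~\ref{lem:output-distribution-lower-bound} with $p_j=n_j/u_j\to0$, together with joint lower semi-continuity of $(p,\mu_K,\mu_N)\mapsto F_p(\mu_K,\mu_N)$ at $p=0$ (Lemma~\ref{lem:properties-of-Fp}, item 1). The correction term $\log(8n_j)/(2n_j)$ vanishes because $n_j\to\infty$. This yields
\[
\KL(\nu_K\|\nu_N)=F_0(\nu_K,\nu_N)\le\liminf_j F_{p_j}(\mu_{K,j},\mu_{N,j})\le\liminf_j \frac{B(\calM_j)}{n_j}\le\limsup_j\frac{B(\calM_j)}{n_j}=\KL(\mu_K^*\|\mu_N^*).
\]
So $(\nu_K,\nu_N)$ is a feasible pair whose KL value is at most the minimum, hence it is itself a minimizer. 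By uniqueness, $(\nu_K,\nu_N)=(\mu_K^*,\mu_N^*)$, which contradicts the choice of subsequence.

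The main subtlety is the ordering of subsequence extractions. The $\liminf$ of $B(\calM_j)/n_j$ along the extracted subsequence is at most the $\limsup$ over the full sequence, and the hypothesis bounds that $\limsup$; this gives the needed inequality. A second check is that $W_1$ indeed metrizes weak-* convergence on the compact space $[0,1]$, which is standard since all measures there have bounded first moments. If $\KL(\mu_K^*\|\mu_N^*)=\infty$, the argument is unchanged: every feasible pair is then a minimizer, and uniqueness still forces the limit point to equal $(\mu_K^*,\mu_N^*)$.
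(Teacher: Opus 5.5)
Your proof is correct and follows essentially the same route as the paper. Both use compactness of $\calP([0,1])^2$, feasibility of any limit point via lower semi-continuity of the error functionals, the chain $\KL(\nu_K\|\nu_N)\le\liminf_j F_{p_j}(\mu_{K,j},\mu_{N,j})\le\limsup_j B(\calM_j)/n_j$ from Lemma~\ref{lem:output-distribution-lower-bound} and joint lower semi-continuity of $F_p$, and then uniqueness of the minimizer; the only difference is presentational, since you run the subsequence argument by contradiction (and are more explicit about taking the $\liminf$ along the extracted subsequence) while the paper shows every convergent subsequence has limit $(\mu_K^*,\mu_N^*)$ and then invokes compactness.
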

\begin{proof}[Proof of Theorem \ref{thm:rate-distortion-main-2}]
    Consider the output distributions $(\mu_K(\calM_j), \mu_N(\calM_j))$ for $j=1,2,\ldots$. By compactness of $\calP([0,1])^2$, there exists a subsequence of $(\mu_K(\calM_j), \mu_N(\calM_j))$ converging to some $(\mu_K'^*, \mu_N'^*) \in \calP([0,1])^2$.

    As shown in \cref{appdx:proof-thm-rate-distortion-main-1}, the limit point $(\mu_K'^*, \mu_N'^*)$ must satisfy the error constraints, i.e., $(\mu_K'^*, \mu_N'^*) \in \calC_K(\eps_K) \times \calC_N(\eps_N)$. By the memory-optimality of $\{\calM_j\}$, if $(\mu_K^*, \mu_N^*)$ are the unique minimizers of KL divergence, then we have:
    \begin{align*}
        \KL(\mu_K^* \| \mu_N^*) &= \limsup_{j\to \infty} \frac{B(\calM_j)}{n_j} \\
        &\geq \limsup_{j\to \infty} \bigg( F_{p_j}(\mu_K(\calM_j), \mu_N(\calM_j)) + O\Big(\frac{\log n_j}{n_j}\Big) \bigg) \\
        &\geq \KL(\mu_K'^* \| \mu_N'^*).
    \end{align*}

    Hence $(\mu_K^*, \mu_N^*) = (\mu_K'^*, \mu_N'^*)$. Because any convergent subsequence of $(\mu_K(\calM_j), \mu_N(\calM_j))$ converges to $(\mu_K^*, \mu_N^*)$, and the space $\calP([0,1])^2$ is compact, it follows that the whole sequence $(\mu_K(\calM_j), \mu_N(\calM_j))$ converges to $(\mu_K^*, \mu_N^*)$.
The desired claim then follows from the fact that convergence in Wasserstein-1 is equivalent to weak-* convergence on $[0,1]$.
\end{proof}

\subsection{Proof of Theorem \ref{thm:rate-distortion-main-4}}\label{appdx:proof-thm-rate-distortion-main-4}
\begin{theorem}[same as \cref{thm:rate-distortion-main-4}]
    Let $\chi^2$ be the chi-squared divergence. In the setting of \cref{thm:rate-distortion-main-2}, suppose $\mu_N^*$ is supported on a finite set $\calX$ and $\chi^2(\mu_K^* \| \mu_N^*) < \infty$. For any membership tester $\calM$ for key size $n$ and universe $[u]$ with query outputs restricted to $\calX$, if we fix $p=\frac{n}{u}$ and let $n,u\to \infty$, then
    \[ \frac{B(\calM)}{n} \geq \KL(\mu_K^*\|\mu_N^*) - \frac{\chi^2(\mu_K^* \| \mu_N^*)}{2\ln 2}\cdot p + o(p), \]
    and this bound is achievable.
\end{theorem}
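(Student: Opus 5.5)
The plan is to reduce everything to a first-order expansion of $R_p$ in $p$ near $0$ over measures supported on $\calX$. By \cref{cor:space-error-lower-bound}, any tester with outputs in $\calX$ satisfies $B(\calM)/n \geq R_p^{\calX}(\eps_K,\eps_N) - \frac{\log(8n)}{2n}$, where $R_p^{\calX}$ is the restricted minimum of $F_p$ over $\calC_K(\eps_K)\times\calC_N(\eps_N)$ with supports in $\calX$. Since $p$ is fixed and $n\to\infty$, the $O(\log n/n)$ term is $o(1)$ and can be absorbed. Achievability follows by running Lemma~\ref{lem:achievability} with the restricted alphabet $\calX$. The typical-sequence and type-covering argument there already works over any finite alphabet, so it yields $R_p^{\calX}+\delta$. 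It therefore suffices to show
\[ R_p^{\calX}(\eps_K,\eps_N) = \KL(\mu_K^*\|\mu_N^*) - \frac{\chi^2(\mu_K^*\|\mu_N^*)}{2\ln 2}\,p + o(p). \]

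\textbf{Upper bound on $R_p^{\calX}$.} Plug $(\mu_K^*,\mu_N^*)$ into $F_p$. Since $\chi^2<\infty$, $\mu_K^*\ll\mu_N^*$ and $\mu_K^*$ is also supported on $\calX$. Write $r=d\mu_K^*/d\mu_N^*$. By Lemma~\ref{lem:properties-of-Fp}(3),
\[ F_p = F_0 - \int_0^p \frac{\KL(\mu_N^*\|s\mu_K^*+(1-s)\mu_N^*)}{s^2}\,ds. \]
On a finite alphabet, $\KL(\mu_N^*\|\mu_s) = \sum_x \mu_N^*(x)\big(-\log(1+s(r(x)-1))\big) = \frac{s^2}{2\ln 2}\chi^2 + O(s^3)$. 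Integrating gives $F_p(\mu_K^*,\mu_N^*) = \KL - \frac{\chi^2}{2\ln 2}p + O(p^2)$.

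\textbf{Lower bound on $R_p^{\calX}$ (the main obstacle).} Here the minimizer $(\mu_{K,p},\mu_{N,p})$ of the restricted problem may differ from $(\mu_K^*,\mu_N^*)$, so we need uniform control.

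1. Apply Theorem~\ref{thm:rate-distortion-main-2}'s argument: by compactness, the LSC of $F_p$ (Lemma~\ref{lem:properties-of-Fp}(1)), and uniqueness of the minimizer, we get $(\mu_{K,p},\mu_{N,p})\to(\mu_K^*,\mu_N^*)$ as $p\to0$.
2. Use the exact identity $F_p(\mu_K,\mu_N) \geq \KL(\mu_K\|\mu_N) - \int_0^p \KL(\mu_N\|\mu_s)/s^2\,ds$. We also have $\KL(\mu_{K,p}\|\mu_{N,p}) \geq \KL(\mu_K^*\|\mu_N^*)$, since the restricted pair is feasible for the unrestricted problem.
3. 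It remains to show $\sup_{s\le p}\KL(\mu_{N,p}\|\mu_{s,p})/s^2 \to \chi^2(\mu_K^*\|\mu_N^*)/(2\ln 2)$. The integrand is bounded by $\chi^2(\mu_{K,p}\|\mu_{N,p})/(2\ln 2)$ up to a factor $1+O(s\max r)$, using $-\log(1+t)\le \frac{1}{\ln 2}(-t+t^2/2\cdot\frac{1}{1+\min(t,0)})$.

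The delicate point is $\chi^2$ continuity. On finite $\calX$, $\chi^2(\mu_{K,p}\|\mu_{N,p})\to\chi^2(\mu_K^*\|\mu_N^*)$ provided the ratios $\mu_{K,p}(x)/\mu_{N,p}(x)$ stay bounded. Points $x$ with $\mu_N^*(x)=0$ carry vanishing $\mu_{K,p}$ mass, but their ratio could blow up. I would handle this first by noting that such atoms make $F_p$ expensive. Concretely, I would show $\mu_{K,p}(x)\lesssim \mu_{N,p}(x)$ on those atoms via a first-order optimality argument: moving key mass off $x$ onto $\supp\mu_N^*$ reduces $F_p$. If that fails, I would instead use the cruder bound $\KL(\mu_N\|\mu_s)\le \log(1/(1-s))$ on those atoms to keep their contribution $o(p)$.

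Combining steps 1–3 gives $R_p^{\calX}\ge \KL-\frac{\chi^2}{2\ln2}p-o(p)$, which together with the upper bound proves the expansion.
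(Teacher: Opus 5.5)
Your proposal is correct for the case the paper actually treats, and it reaches the key step by a more elementary route than the paper does.

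\textbf{Shared first step.} Both arguments begin the same way. The restricted minimizers $(\mu_{K,p},\mu_{N,p})$ converge to the unique $(\mu_K^*,\mu_N^*)$. Hence, for small $p$, they lie in a compact set where $\mu_N(x)\ge\delta/2$ on $\calX$.

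\textbf{Where the routes diverge.} The paper then applies the Milgrom--Segal envelope theorem on that compact set. This requires showing that $f_p=\KL(\mu_N\|\mu_p)/p^2$ extends jointly continuously to $p=0$, via a uniform Taylor expansion. The paper then reads off $V'_+(0)$ from uniqueness of the $p=0$ optimizer.

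You prove the needed one-sided derivative by hand instead:
\begin{itemize}
\item The upper bound comes from evaluating $F_p$ at the fixed pair $(\mu_K^*,\mu_N^*)$.
\item The lower bound comes from the exact identity $F_p=\KL(\mu_K\|\mu_N)-\int_0^p \KL(\mu_N\|\mu_s)s^{-2}\,ds$ applied at the $p$-minimizer. You combine it with $\KL(\mu_{K,p}\|\mu_{N,p})\ge\KL(\mu_K^*\|\mu_N^*)$ and continuity of $\chi^2$.
\end{itemize}
This avoids the external theorem and makes the error terms explicit, e.g.\ $O(p^2)$ on the upper side.

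\textbf{A simplification.} The inequality you quote already gives a cleaner bound than you state. Since $h\ge -1$ forces $sh\ge -s$, you get $\KL(\mu_N\|\mu_s)\le \frac{s^2}{2(1-s)\ln 2}\chi^2(\mu_K\|\mu_N)$, with no dependence on $\max r$. So all you need is $\chi^2(\mu_{K,p}\|\mu_{N,p})\to\chi^2(\mu_K^*\|\mu_N^*)$.

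\textbf{Your ``delicate point.''} The paper's proof sets $\delta=\min_{x\in\calX}\mu_N^*(x)>0$, so it reads the hypothesis as $\calX=\supp(\mu_N^*)$. Under that reading:
\begin{itemize}
\item there are no atoms with $\mu_N^*(x)=0$;
\item your Step 1 already gives $\mu_{N,p}\ge\delta/2$ on all of $\calX$;
\item $\chi^2$ continuity is then immediate.
\end{itemize}
So your hedge is unnecessary for the statement as proved.

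If you intend the broader case $\calX\supsetneq\supp(\mu_N^*)$, neither the paper nor your proposal covers it.
\begin{itemize}
\item Your fallback does not work: the global bound $\KL(\mu_N\|\mu_s)\le\log\frac{1}{1-s}$ gives $\int_0^p s^{-2}\log\frac{1}{1-s}\,ds=\infty$, not $o(p)$.
\item Your first-order-optimality idea would need an actual proof that $\mu_{K,p}(x)/\mu_{N,p}(x)$ stays bounded on the extra atoms. Uniqueness of the $p=0$ minimizer alone does not obviously give this.
\end{itemize}
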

\begin{proof}
    Let $\delta := \min_{x\in\mathcal X}\mu_N^*(x) >0$.
    For $p\in(0,1)$, recall
    \[
        R_p(\eps_K,\eps_N)=\min_{(\mu_K,\mu_N)\in\mathcal C}F_p(\mu_K,\mu_N),
        \qquad
        F_0(\mu_K,\mu_N)=\KL(\mu_K\|\mu_N),
    \]
    where $\mathcal C=\mathcal C_K(\eps_K)\times\mathcal C_N(\eps_N)\cap\mathcal P(\mathcal X)$
    and $\calP(\mathcal X)\subseteq \RR^{\mathcal X}$ is the set of all probability distributions on $\mathcal X$.

    \paragraph{Step 1: a fixed compact choice set for small $p$.}
    Let $(\mu_K(p),\mu_N(p))\in\mathcal C$ be any mapping from $p$ to minimizers of $F_p$.
    As shown in the proof of Theorem~3.2, $\mu_N(p)\Rightarrow \mu_N^*$ as $p\to0$, hence for all
    sufficiently small $p$ we have $\mu_N(p)(x)\ge\delta/2$ for every $x\in\mathcal X$. Therefore,
    for some $p_0>0$ we may restrict attention to the compact set
    \[
        \mathcal C' := \mathcal C\cap\Bigl\{(\mu_K,\mu_N): \mu_N(x)\ge \delta/2\ \forall x\in\mathcal X\Bigr\},
    \]
    and for all $p\in(0,p_0]$,
    \[
        R_p(\eps_K,\eps_N)=\min_{(\mu_K,\mu_N)\in\mathcal C'}F_p(\mu_K,\mu_N),
    \]
    since there is a minimizer $(\mu_K(p),\mu_N(p))$ that lies in $\mathcal C'$ for $p$ sufficiently small.

    \paragraph{Step 2: verify assumptions for envelope theorem.}
    Define a maximization problem with value function
    \[
        V(p):=\max_{(\mu_K,\mu_N)\in\mathcal C'} f((\mu_K,\mu_N),p),
        \qquad
        f((\mu_K,\mu_N),p):=-F_p(\mu_K,\mu_N),
    \]
    so that $V(p)=-R_p(\eps_K,\eps_N)$ for all $p\in[0,p_0]$.
    Now, we verify the assumptions of Corollary 4 of Milgrom and Segal~\cite{MilgromS-2002-EnvelopeTheorems}. We will use $f_p$ to denote $\frac{\partial f}{\partial p}$ in the rest of the proof.
    \begin{itemize}
        \item \emph{Compactness.} $\mathcal C'\subset\mathbb R^{2|\mathcal X|}$ is closed and contained in a product of simplices, hence compact.

        \item \emph{Continuity of $f$.} Since $\mu_N(x)\ge\delta/2$ on $\mathcal C'$ and
              $\mu_p:=p\mu_K+(1-p)\mu_N$ satisfies $\mu_p(x)\ge(1-p_0)\delta/2>0$ for $p\in[0,p_0]$,
              all KL terms appearing in the expression of $F_p$ are finite and continuous; thus
              $(\mu_K,\mu_N,p)\mapsto f((\mu_K,\mu_N),p)$ is continuous on $\mathcal C'\times[0,p_0]$.

        \item \emph{Continuity of $f_p$.} For $p>0$, Lemma \ref{lem:properties-of-Fp} gives
              \[
                  \frac{\partial}{\partial p}F_p(\mu_K,\mu_N)
                  = -\frac{\KL(\mu_N\|p\mu_K+(1-p)\mu_N)}{p^2},
                  \qquad(p>0),
              \]
              hence
              \[
                  f_p((\mu_K,\mu_N),p)=\frac{\KL(\mu_N\|\mu_p)}{p^2},\qquad(p>0).
              \]
              Fix $(\mu_K,\mu_N)\in\mathcal C'$ and write $h(x):=\frac{\mu_K(x)}{\mu_N(x)}-1$. Then $h(x)\ge-1$ and,
              since $\mu_N(x)\ge \delta/2$ on $\mathcal C'$, we have a uniform bound
              \[
                  M:=\sup_{(\mu_K,\mu_N)\in\mathcal C'}\|h\|_\infty <\infty.
              \]
              Pick any constant $c\in(0,1)$ and let $p_1:=\min\{p_0,\,c/M\}$. Then for all $p\in[0,p_1]$,
              all $(\mu_K,\mu_N)\in\mathcal C'$, and all $x\in\mathcal X$, we have $|ph(x)|\le c$.
              By Taylor's theorem for $\log(1+u)$ around $u=0$ with a uniform remainder bound on the compact interval
              $u\in[-c,c]$, we obtain
              \[
                  -\log(1+u)= -u+\frac{u^2}{2}+O(u^3)\qquad\text{uniformly for }|u|\le c.
              \]
              Therefore,
              \[
                  \KL_e(\mu_N\|\mu_p)
                  =\sum_{x\in\mathcal X}\mu_N(x)\Bigl[-\log(1+ph(x))\Bigr]
                  = \frac{p^2}{2}\sum_{x\in\mathcal X}\mu_N(x)h(x)^2 + O(p^3),
              \]
              where the $O(p^3)$ term is uniform over $(\mu_K,\mu_N)\in\mathcal C'$.
              Consequently,
              \[
                  \KL(\mu_N\|\mu_p)
                  =\frac{1}{\ln 2}\KL_e(\mu_N\|\mu_p)
                  = \frac{p^2}{2\ln 2}\chi^2(\mu_K\|\mu_N)+O(p^3),
              \]
              uniformly over $(\mu_K,\mu_N)\in\mathcal C'$. Hence $f_p$ extends continuously to $p=0$ by
              \[
                  f_p((\mu_K,\mu_N),0):=\frac{1}{2\ln 2}\chi^2(\mu_K\|\mu_N),
              \]
              and $f_p$ is continuous on $\mathcal C'\times[0,p_0]$.
    \end{itemize}

    \paragraph{Step 3: apply envelope theorem and conclude the first-order expansion.}
    By Milgrom and Segal's Corollary 4(ii)~\cite{MilgromS-2002-EnvelopeTheorems}, $V$ is absolutely continuous and its right derivative satisfies
    \[
        V'_+(p)=\max_{(\mu_K,\mu_N)\in\arg\max_{\mathcal C'} f(\cdot,p)} f_p((\mu_K,\mu_N),p)
        \quad\text{for }p\in[0,p_0).
    \]

    Since the optimizer at $p=0$ is unique, $\arg\max_{\mathcal C'} f(\cdot,0)=\{(\mu_K^*,\mu_N^*)\}$ and thus
    \[
        V'_+(0)= f_p((\mu_K^*,\mu_N^*),0)=\frac{1}{2\ln2}\chi^2(\mu_K^*\|\mu_N^*).
    \]
    Since $R_p=-V(p)$, it follows that as $p\to 0$,
    \[
        R_p(\eps_K,\eps_N)
        = \KL(\mu_K^*\|\mu_N^*) - \frac{\chi^2(\mu_K^*\|\mu_N^*)}{2\ln2}\,p + o(p).
    \]
\end{proof}

%% file: src/appendix_LLM_soln.tex
\section{Proofs for Section \ref{subsec:prob-est-LLM}}\label{appdx:LLM-soln}

Our proof is an application of \cref{thm:rate-distortion-main-1}. We solve the following convex optimization problem and show that $\mu_K^*$ and $\mu_N^*$ are the unique optimal solutions.
We note that, since the base-$e$ KL divergence is a constant multiple of the base-2 KL divergence, we assume KL divergence is base-$e$ for convenience of analysis.
The shape of optimal solutions is unaffected by this choice of base.
In this appendix, we use $X$ to denote the score random variable (i.e., $\hat X$ in the main text).

\begin{align}\label{opt:primal-problem-hallucination}
    \min_{\mu_K, \mu_N \in \calP([0,1])} \KL(\mu_K \| \mu_N), \quad \text{subject to} \quad & \begin{cases}
    &\EE_{X\sim \mu_K}[-\ln X] \leq \eps_K, \\
    &\EE_{X\sim \mu_N}[-\ln (1-X)] \leq \eps_N.
    \end{cases}
\end{align}

The Lagrangian is:
\begin{align*}
    \calL(\mu_K, \mu_N, \lambda_K, \lambda_N) &  = \KL(\mu_K \| \mu_N) + \lambda_K \big(\EE_{\mu_K}[-\ln X] - \eps_K\big) + \lambda_N \big(\EE_{\mu_N}[-\ln (1-X)] - \eps_N\big).
\end{align*}

Before we proceed, we introduce the following technical lemmas for optimization over measure spaces.

\subsection{Technical preliminaries}
\label{technical-preliminaries-LLM}
To analyze the optimality condition for convex functionals over the space of measures, we use Gâteaux derivatives.

\begin{definition}[Gâteaux Derivative]\label{def:Gateaux-derivative}
Let $J: \calP([0,1]) \to \RR$ be a linear functional. The Gâteaux derivative of $J$ at $Q \in \calP([0,1])$ in the direction $R-Q$, where $R \in \calP([0,1])$, is defined as:
\[ \delta J(Q; R-Q) = \lim_{\eta \to 0^+} \frac{J((1-\eta)Q + \eta R) - J(Q)}{\eta}, \]
provided the limit exists.
\end{definition}

If $J$ is a convex functional, the Gâteaux derivative allows us to state the necessary and sufficient conditions for global optimality.

\begin{theorem}[First-Order Optimality Condition; see e.g. \cite{Luenberger-1997-Optimization}]\label{thm:gateaux-optimality}
If $J: \calP([0,1]) \to \RR$ is convex and Gâteaux differentiable, then $Q^* \in \calP([0,1])$ minimizes $J$ if and only if $\delta J(Q^*; R-Q^*) \geq 0$ for all $R \in \calP([0,1])$.
\end{theorem}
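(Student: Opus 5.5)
The plan is to reduce everything to a one-dimensional convex function along line segments in $\calP([0,1])$. For fixed $Q^*, R \in \calP([0,1])$, define $\varphi(\eta) = J\big((1-\eta)Q^* + \eta R\big)$ for $\eta \in [0,1]$. This is well defined because $\calP([0,1])$ is convex, so every such mixture is again a probability measure. Convexity of $J$ makes $\varphi$ a real-valued convex function on $[0,1]$. By definition, $\delta J(Q^*; R - Q^*)$ is exactly the right derivative $\varphi'_+(0)$, and its existence is given by the Gâteaux differentiability hypothesis.

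\textbf{Necessity.} Suppose $Q^*$ minimizes $J$. Then $\varphi(\eta) \ge \varphi(0)$ for every $\eta \in (0,1]$, since each mixture is feasible. Hence every difference quotient $(\varphi(\eta) - \varphi(0))/\eta$ is nonnegative. Passing to the limit $\eta \to 0^+$ gives $\delta J(Q^*; R - Q^*) \ge 0$ for all $R$.

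\textbf{Sufficiency.} This is where convexity is used. The key step is the standard monotonicity of chord slopes: for a convex $\varphi$ on $[0,1]$, the map $\eta \mapsto (\varphi(\eta) - \varphi(0))/\eta$ is nondecreasing on $(0,1]$. To see this, take $0 < \eta < \eta' \le 1$ and write $\eta = (\eta/\eta')\,\eta' + (1 - \eta/\eta')\cdot 0$. Convexity then gives
\[
\varphi(\eta) \le \frac{\eta}{\eta'}\,\varphi(\eta') + \Big(1 - \frac{\eta}{\eta'}\Big)\varphi(0),
\]
and rearranging yields the monotonicity. Consequently,
\[
J(R) - J(Q^*) = \frac{\varphi(1) - \varphi(0)}{1} \;\ge\; \lim_{\eta\to 0^+} \frac{\varphi(\eta) - \varphi(0)}{\eta} = \delta J(Q^*; R - Q^*) \;\ge\; 0.
\]
Since $R \in \calP([0,1])$ was arbitrary, $Q^*$ is a global minimizer.

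The only delicate point is bookkeeping rather than substance. The slope-monotonicity argument needs $\varphi$ to be finite on the whole segment $[0,1]$; this holds because $J$ is assumed real-valued. When the result is later applied to KL-type functionals, which can be $+\infty$, one would restrict to directions $R$ with $J(R) < \infty$. Directions with $J(R) = \infty$ are trivially no better than $Q^*$, so the same argument goes through. I expect no genuine obstacle beyond stating this restriction cleanly.
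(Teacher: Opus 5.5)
Your proof is correct. The paper does not prove this statement at all; it only cites it as a standard fact from Luenberger. Your argument (restricting $J$ to the segment $\eta\mapsto(1-\eta)Q^*+\eta R$, using nonnegativity of difference quotients for necessity and monotonicity of chord slopes of a convex function for sufficiency) is the standard textbook proof of that fact.

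Your closing remark about extended values is well placed. The functional the paper actually applies this to, $J(\mu_N)=-\ln\EE_{\mu_N}[X^{\lambda_K}]+\lambda_N\EE_{\mu_N}[-\ln(1-X)]+\text{const}$, equals $+\infty$ at measures such as $\delta_0$ and $\delta_1$. So the real-valuedness hypothesis is not literally met there, and your restriction to directions with finite value is what makes the application go through.
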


When the Gâteaux derivative can be represented in an integral form (and in particular linear in $R-Q$), this optimality condition translates into a structural property regarding the support of the optimal measure.

\begin{lemma}[KKT Support Condition]\label{lem:kkt-support-condition}
    Suppose the Gâteaux derivative of a convex functional $J$ at $Q$ can be represented as
    \[ \delta J(Q; R-Q) = \int_0^1 g_Q(x) dR(x) - \int_0^1 g_Q(x) dQ(x) = \EE_R[g_Q(X)] - \EE_Q[g_Q(X)], \]
    for some measurable function $g_Q(x)$ (which may depend on $Q$). Then $Q^*$ minimizes $J$ if and only if $Q^*$ is supported on the set of global minima of the function $g_{Q^*}(x)$.

    That is, $x \in \text{supp}(Q^*) \implies g_{Q^*}(x) = \inf_{y\in [0,1]} g_{Q^*}(y)$.
\end{lemma}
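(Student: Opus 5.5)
The statement to prove is the KKT support condition (Lemma~\ref{lem:kkt-support-condition}). The plan is to reduce it to Theorem~\ref{thm:gateaux-optimality}: by that theorem, $Q^*$ minimizes $J$ iff $\EE_R[g_{Q^*}] \ge \EE_{Q^*}[g_{Q^*}]$ for every $R\in\calP([0,1])$. Write $m := \inf_{y\in[0,1]} g_{Q^*}(y)$. The lemma then says this family of inequalities holds exactly when $g_{Q^*}=m$ on the support of $Q^*$. We prove the two directions separately.

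\emph{Sufficiency.} Suppose $g_{Q^*}(x)=m$ for $Q^*$-a.e.\ $x$. (We read ``supported on the set of minima'' as $Q^*(\{g_{Q^*}\ne m\})=0$.) Then $\EE_{Q^*}[g_{Q^*}]=m$. Also $\EE_R[g_{Q^*}]\ge m$ for every $R$, because $g_{Q^*}\ge m$ pointwise. So the Gâteaux derivative is nonnegative in every direction, and Theorem~\ref{thm:gateaux-optimality} gives optimality.

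\emph{Necessity.} Suppose $Q^*$ is a minimizer. Take $R=\delta_y$ for an arbitrary $y\in[0,1]$. The optimality condition gives $g_{Q^*}(y)\ge \EE_{Q^*}[g_{Q^*}]$ for all $y$. Taking the infimum over $y$ yields $m\ge \EE_{Q^*}[g_{Q^*}]$. Combined with the pointwise bound $g_{Q^*}\ge m$, this forces $\EE_{Q^*}[g_{Q^*}-m]=0$. Since the integrand is nonnegative, $g_{Q^*}=m$ holds $Q^*$-a.e., which is the claim in the measure-theoretic sense.

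\emph{Main obstacle: integrability and the pointwise reading.} First, $\EE_{Q^*}[g_{Q^*}]$ must be finite for the subtraction to make sense. This is implicit in the hypothesis that the Gâteaux derivative exists and equals that difference. If needed, I would restrict to directions $R$ with $\EE_R|g_{Q^*}|<\infty$, which includes all Dirac masses at points where $g_{Q^*}$ is finite. Second, the pointwise statement ``$x\in\supp(Q^*)\Rightarrow g_{Q^*}(x)=m$'' does not follow from the a.e.\ statement for an arbitrary measurable $g$. For atoms $x$ of $Q^*$ the a.e.\ statement already gives it, and atoms are the case used in the paper's applications. For general support points I would additionally assume that $g_{Q^*}$ is lower semicontinuous, so that the set $\{g_{Q^*}=m\}$ is closed and therefore contains $\supp(Q^*)$. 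Alternatively, I would state the conclusion as the $Q^*$-a.e.\ version.
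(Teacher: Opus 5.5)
Your proposal is correct and follows the paper's proof: you reduce to the first-order condition of Theorem~\ref{thm:gateaux-optimality}, test with Dirac directions $R=\delta_y$ to get $\EE_{Q^*}[g_{Q^*}] \le \inf_y g_{Q^*}(y)$, and conclude because the nonnegative function $g_{Q^*}-m$ has zero $Q^*$-mean. Your additions are accurate refinements of the paper's loose final step: the explicit sufficiency direction (which the paper does not argue but relies on in Step~4 of its KKT verification), and the observation that the argument only gives $g_{Q^*}=m$ $Q^*$-a.e., so the pointwise support statement needs lower semicontinuity of $g_{Q^*}$ (which the paper's $g$ in \eqref{eq:g_function} has).
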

\begin{proof}
    By \cref{thm:gateaux-optimality}, $Q^*$ is optimal iff $\EE_R[g_{Q^*}(X)] \geq \EE_{Q^*}[g_{Q^*}(X)]$ for all $R$. Let $K^* = \EE_{Q^*}[g_{Q^*}(X)]$. If we take $R=\delta_y$ for any $y\in [0,1]$, we get $g_{Q^*}(y) \geq K^*$. Thus $K^*$ is the global minimum value of $g_{Q^*}(x)$. Since $Q^*$ is a probability measure, the equality $\EE_{Q^*}[g_{Q^*}(X)] = K^*$ can only hold if $Q^*$ is supported entirely on the set where $g_{Q^*}(x)=K^*$.
\end{proof}

We also recall the following standard result regarding the minimization of KL divergence subject to linear constraints.
\begin{lemma}[Donsker--Varadhan variational formula]\label{lem:donsker-varadhan}
    Suppose $Q \in \calP([0,1])$, and let $h$ be a $Q$-measurable real function such that $\EE_Q[e^{-h(X)}] < \infty$. Then,
    \[  - \ln \EE_{X\sim Q}[e^{-h(X)}] = \inf_{P \in \calP([0,1])}\Big\{ \KL(P \| Q) + \EE_{P}[h(X)] \Big\}. \]
    The infimum is attained when $P$ has the Radon-Nikodym derivative $\frac{dP}{dQ}(x) \propto e^{-h(x)}$.
\end{lemma}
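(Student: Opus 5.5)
The plan is the standard ``Gibbs measure'' argument: rewrite the objective as a KL divergence to a tilted measure, and conclude by nonnegativity of KL. First set $Z := \EE_{X\sim Q}[e^{-h(X)}]$. By hypothesis $Z<\infty$, and $Z>0$ because $e^{-h}>0$ pointwise and $Q$ is a probability measure. Define the tilted probability measure $P^*$ by $\frac{dP^*}{dQ}(x) = e^{-h(x)}/Z$. This density is strictly positive, so $P^*$ and $Q$ are mutually absolutely continuous. We work with base-$e$ KL, as in this appendix.

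Next, take any $P\in\calP([0,1])$. If $P\not\ll Q$, then $\KL(P\|Q)=\infty$ and there is nothing to prove, so assume $P\ll Q$. Then also $P\ll P^*$, and the chain rule for Radon--Nikodym derivatives gives, $P$-a.s.,
\[
  \ln\frac{dP}{dQ}(x) + h(x) \;=\; \ln\frac{dP}{dP^*}(x) + \ln\frac{dP^*}{dQ}(x) + h(x) \;=\; \ln\frac{dP}{dP^*}(x) - \ln Z .
\]
Integrating against $P$ yields the key identity
\[
  \KL(P\|Q) + \EE_P[h(X)] \;=\; \KL(P\|P^*) - \ln Z .
\]
Since $\KL(P\|P^*)\ge 0$, with equality iff $P=P^*$, the infimum equals $-\ln Z$. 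It is attained exactly at $P^*$, whose density with respect to $Q$ is proportional to $e^{-h}$, as claimed.

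The main obstacle is measure-theoretic bookkeeping rather than the algebra. The split $\KL(P\|Q)+\EE_P[h]$ could be of the form $\infty-\infty$ when $h$ is unbounded below, or $\EE_P[h]$ could fail to exist. I would handle this as follows. Interpret the objective as the single integral $\EE_P\big[\ln\frac{dP}{dQ}+h\big]$. This integral is always well defined, since by the identity it equals $\EE_P\big[\ln\frac{dP}{dP^*}\big]-\ln Z$, and the negative part of $\ln\frac{dP}{dP^*}$ is $P$-integrable (as in the usual proof that KL is nonnegative and well defined). Whenever both terms are finite separately, this interpretation agrees with the original sum. To confirm that the minimizer yields a genuine finite value, I would evaluate at $P=P^*$: there the integrand $\ln\frac{dP^*}{dQ}+h$ equals the constant $-\ln Z$. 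This gives the value $-\ln Z$ directly, with no integrability assumption on $h$ under $P^*$ beyond $Z<\infty$.
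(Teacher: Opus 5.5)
Your proof is correct. It is the standard Gibbs-measure argument: you rewrite the objective as $\KL(P\|P^*)-\ln Z$ and conclude by nonnegativity of KL. Reading the objective as the single integral $\EE_P\bigl[\ln\frac{dP}{dQ}+h\bigr]$ correctly handles the $\infty-\infty$ cases the statement leaves implicit. This includes the case at $P^*$ itself, where $\EE_{P^*}[h]$ may be $-\infty$. The paper does not prove this lemma; it only recalls it as a standard fact, so there is no competing route to compare against.

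One point matters for how the lemma is actually used, in Step~1 of Appendix~\ref{appdx:LLM-soln}. There $h(x)=-\lambda_K\ln x$ equals $+\infty$ at $x=0$, so it is not real-valued. Your step ``$P^*$ and $Q$ are mutually absolutely continuous'' then fails exactly when $Q=\mu_N$ has an atom at $0$, which is the case of interest.

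Your argument survives with a small adjustment:
\begin{itemize}
\item If $Z>0$, i.e.\ $Q(\{h<\infty\})>0$ (the paper's condition $C^*>0$), then $P^*$ is still a well-defined probability measure with $P^*\ll Q$.
\item Every $P\ll Q$ with $P(\{h=\infty\})=0$ satisfies $P\ll P^*$, so your chain-rule identity holds verbatim.
\item Every $P$ charging $\{h=\infty\}$ has $\KL(P\|P^*)=\infty$. Since $h\ge 0$ here, it also has $\EE_P[h]=\infty$, so both sides of your identity are $+\infty$.
\end{itemize}
The vanishing of $dP^*/dQ$ at $0$ is precisely what yields $\mu_K^*=\delta_{x^*}$ in the paper's solution.
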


\subsection{Solving the optimization problem}
\paragraph{Step 1: stationary condition for $\mu_K$.}
Applying the variational formula \cref{lem:donsker-varadhan} with $h(x) = -\lambda_K \ln x$, the first two terms are minimized when the following inf is attained:
\begin{align*}
    \inf_{\mu_K }\Big\{ \KL(\mu_K \| \mu_N) + \EE_{\mu_K}[\lambda_K(-\ln X)] \Big\}.
\end{align*}

By \cref{lem:donsker-varadhan}, this is when
\begin{equation}\label{eq:kkt-mu1}
    \frac{d\mu_K}{d\mu_N}(x) \propto e^{-h(x)} = x^{\lambda_K}. 
\end{equation}
Let $C(\mu_N) = \EE_{\mu_N}[X^{\lambda_K}]$ be the normalization constant. The value of the infimum is $-\ln C(\mu_N)$.

\paragraph{Step 2: stationary condition for $\mu_N$.}
Plugging the optimized $\mu_K$ back into the Lagrangian, we obtain the dual function which we seek to minimize over $\mu_N$:
\begin{align*}
    J(\mu_N) = -\ln \EE_{\mu_N}[X^{\lambda_K}] + \lambda_N \EE_{\mu_N}[-\ln(1-X)] - \lambda_K\eps_K - \lambda_N\eps_N.
\end{align*}

$J(\mu_N)$ is a convex functional of $\mu_N$. We use Gâteaux derivatives $\delta J(\mu_N^*; R-\mu_N^*)$ to find the optimality condition. 

The derivative of the first term $J_1(\mu_N) = -\ln \EE_{\mu_N}[X^{\lambda_K}]$ is:
\begin{align*}
    \delta J_1(\mu_N; R-\mu_N) &= -\frac{\EE_R[X^{\lambda_K}]-\EE_{\mu_N}[X^{\lambda_K}]}{\EE_{\mu_N}[X^{\lambda_K}]} = -\frac{\EE_R[X^{\lambda_K}]-\EE_{\mu_N}[X^{\lambda_K}]}{C(\mu_N)}.
\end{align*}
The derivative of the second (linear) term $J_2(\mu_N)$ is:
\[ \delta J_2(\mu_N; R-\mu_N) = \lambda_N \big( \EE_R[-\ln(1-X)] - \EE_{\mu_N}[-\ln(1-X)] \big). \]

Combining the terms, $\delta J = \delta J_1 + \delta J_2$ can be represented in the integral form, as in \cref{lem:kkt-support-condition}, by the following function $g_{\mu_N}(x)$:
\begin{equation}\label{eq:g_function}
    g_{\mu_N}(x) = -\frac{x^{\lambda_K}}{C(\mu_N)} - \lambda_N \ln(1-x).
\end{equation}



By the KKT Support Condition (\cref{lem:kkt-support-condition}), the optimal distribution $\mu_N^*$ must be supported on the set of global minima of $g_{\mu_N^*}(x)$.

\paragraph{Step 3: solving for support and probability mass.}
Let $g := g_{\mu_N^*}$ and $C^* := C(\mu_N^*)$. To analyze the minima of $g$, we now establish a crucial property of $\lambda_K$.
\begin{lemma}\label{lem:lambda1_gt_1}
    In the non-trivial regime ($e^{-\eps_K} + e^{-\eps_N} > 1$), the optimal Lagrange multiplier satisfies $\lambda_K > 1$.
\end{lemma}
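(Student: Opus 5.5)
The plan is to argue by contradiction: assume $\lambda_K\le 1$ and use the stationarity structure derived in Steps 1--2 to show that the optimal pair would then have $\KL(\mu_K^*\|\mu_N^*)=0$. That is only feasible outside the non-trivial regime.

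\textbf{Existence of multipliers.} First I would justify that the Lagrange multipliers $\lambda_K,\lambda_N\ge 0$ exist and satisfy the KKT conditions used above. The problem is convex, since KL is jointly convex and the constraints are linear in the measures. A Slater point exists with finite objective when $\eps_K,\eps_N>0$: for instance $\mu_K=\delta_1$ and $\mu_N=(1-t)\delta_0+t\delta_1$ with $t>0$ small. Then $\EE_{\mu_K}[-\ln X]=0<\eps_K$. The non-key constraint is more delicate, because $-\ln(1-1)=\infty$. So I would instead put the upper atom at some $x<1$ close to $1$ in both measures, which keeps all losses finite and strictly below $\eps_K,\eps_N$, and keeps the KL finite (equal to $\ln(1/t)$). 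Strong duality then gives multipliers, and the support characterization of \cref{lem:kkt-support-condition} applies to $g=g_{\mu_N^*}$ from \eqref{eq:g_function}.

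\textbf{Ruling out degenerate multipliers.}
\begin{itemize}
\item If $\lambda_K=0$, then by \eqref{eq:kkt-mu1} we get $\mu_K^*=\mu_N^*$ and $C^*=1$, so $g(x)=-1-\lambda_N\ln(1-x)$.
  \begin{itemize}
  \item If $\lambda_N>0$, $g$ is uniquely minimized at $x=0$. Then $\mu_N^*=\mu_K^*=\delta_0$, giving $\EE_{\mu_K^*}[-\ln X]=\infty>\eps_K$, which is infeasible.
  \item If $\lambda_N=0$, both constraints are slack, so the unconstrained minimum $\KL=0$ would be feasible. That contradicts the regime by the point-mass computation below.
  \end{itemize}
\item If $\lambda_K>0$ but $\lambda_N=0$, then $g(x)=-x^{\lambda_K}/C^*$ is uniquely minimized at $x=1$. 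So $\mu_N^*=\delta_1$, and $\EE_{\mu_N^*}[-\ln(1-X)]=\infty$, which is infeasible.
\end{itemize}
Hence $\lambda_K>0$ and $\lambda_N>0$.

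\textbf{Main step: excluding $0<\lambda_K\le 1$.} Suppose $0<\lambda_K\le1$. Then $x\mapsto -x^{\lambda_K}$ is convex on $[0,1]$, and $x\mapsto-\lambda_N\ln(1-x)$ is strictly convex because $\lambda_N>0$. So $g$ is strictly convex and has a unique global minimizer $a\in[0,1)$. By \cref{lem:kkt-support-condition}, $\mu_N^*=\delta_a$. Since $\mu_K^*\ll\mu_N^*$ by \eqref{eq:kkt-mu1}, also $\mu_K^*=\delta_a$, and the optimal KL is $0$. Feasibility then requires
\[
-\ln a\le\eps_K \quad\text{and}\quad -\ln(1-a)\le\eps_N,
\]
that is, $a\ge e^{-\eps_K}$ and $1-a\ge e^{-\eps_N}$. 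Adding these gives $e^{-\eps_K}+e^{-\eps_N}\le 1$, contradicting the non-trivial regime. Therefore $\lambda_K>1$.

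The step I expect to need the most care is the first one: rigorously justifying strong duality and the multiplier-based support condition in the infinite-dimensional measure space, given that the loss functions are unbounded. The convexity argument in the main step is routine once that is in place.
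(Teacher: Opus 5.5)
Your proposal is correct and is essentially the paper's argument: assuming $\lambda_K\le 1$ makes $g$ strictly convex, so $\mu_N^*$ and then $\mu_K^*$ collapse to a common point mass with zero KL, contradicting the non-trivial regime. Your version is more careful than the paper's on three counts: the paper tacitly assumes $\lambda_K>0$ and $\lambda_N>0$ (the latter is needed for $g''>0$ at $\lambda_K=1$), it never spells out why zero KL is infeasible, and it omits the Slater justification for the multipliers.

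Two small touch-ups are needed. First, record, as the paper does, that $C^*>0$, since you divide by it; this holds because $\mu_N^*=\delta_0$ would force $\mu_K^*=\delta_0$, which is infeasible. Second, in your $\lambda_K=\lambda_N=0$ subcase, zero multipliers do not imply slack constraints; what you actually get from minimizing the Lagrangian is a feasible pair $\mu_K^*=\mu_N^*$. That common measure need not be a point mass, so apply Jensen, $\EE[-\ln X]\ge-\ln\EE[X]$ and $\EE[-\ln(1-X)]\ge-\ln(1-\EE[X])$, to reduce to $\delta_{\EE[X]}$ before invoking your point-mass computation.
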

\begin{proof}
    Consider the derivatives of $g(x)$:
    \[ g'(x) = -\frac{\lambda_K x^{\lambda_K-1}}{C^*} + \frac{\lambda_N}{1-x}, \quad g''(x) = -\frac{\lambda_K(\lambda_K-1) x^{\lambda_K-2}}{C^*} + \frac{\lambda_N}{(1-x)^2}. \]
    
    Note that $C^* > 0$, otherwise we must have $\mu^*_N = \delta_0$. But if $\mu_N^*$ collapses to a point mass, then we either have $\mu_K^* = \mu_N^*$ (infeasible solution) or $\KL(\mu_K^*\|\mu_N^*) = \infty$. We will try to avoid this and solve for a $\mu_N^*$ with non-singleton support. 
    
    Suppose $\lambda_K \leq 1$. If $\lambda_K = 1$, $g''(x) > 0$. If $0 < \lambda_K < 1$, then $\lambda_K-1 < 0$, so the first term is positive, and $g''(x) > 0$. In both cases, $g(x)$ is strictly convex and has a unique global minimum $x^*$. Thus, $\mu_N^* = \delta_{x^*}$, and hence $\mu_K^* = \delta_{x^*}$ by the previous result on $\frac{d\mu_K^*}{d\mu_N^*}(x)$.  This implies the KL divergence objective is zero, contradicting the assumption of the non-trivial regime. 
\end{proof}

Knowing $\lambda_K > 1$, we analyze the shape of $g(x)$. Since $\lambda_K-1 > 0$, $x^{\lambda_K-1} \to 0$ as $x\to 0$, so $g'(0) = \lambda_N > 0$. It follows that $g$ is increasing both as $x\to 0$ and $x\to 1$. We will show that there is at most one local minimum at some $x^*\in (0,1)$, so $0$ and $x^*$ must both be global minima, with $g(x^*) = g(0) = 0$. 

Consider stationary condition $g'(x) = -\frac{\lambda_K x^{\lambda_K-1}}{C^*} + \frac{\lambda_N}{1-x} = 0$. Note that the first term is negative, the second term is positive, and $g'(x) > 0$ iff $\frac{\lambda_N}{1-x} > \frac{\lambda_K x^{\lambda_K - 1}}{C^*}$, which is true iff $C^* \frac{\lambda_N}{\lambda_K} > x^{\lambda_K -1}(1-x) =: h(x)$. Clearly $h(0) = h(1) = 0$, corresponding to $g'(0)$ being positive at these endpoints. Differentiating $h$, we have:

\[ h'(x) = x^{\lambda_K - 2}(\lambda_K - 1 - \lambda_K x). \]

Since $x^{\lambda_K - 2}$ is always positive on $(0,1)$, this derivative only changes sign once from positive to negative, and therefore the equation $C^* \frac{\lambda_N}{\lambda_K} = h(x)$ has at most two solutions in $(0, 1)$, in which case $h(x)$ will start from zero, increase to a local maximum, and then drop back to $0$ at $x=1$. Hence, there are at most two points where $g'(x) = 0$, the first one being a local maximum and the second a local minimum, which we call $x^*$. 

In fact, if the optimal KL divergence were to be finite, the global minimum of $g = 0$ must be attained at both $x=0$ and $x^*$:
\begin{enumerate}
    \item If the minimum is uniquely at $x=0$. Then $\mu_N^*=\delta_0$. This implies $C^* = 0^{\lambda_K} = 0$ (since $\lambda_K>1$). This leads to $J(\mu_N^*) = \infty$, which is not optimal.
    \item If the minimum is uniquely at $x^*$. Then $\mu_N^*=\delta_{x^*}$. This implies $\mu_K^*=\mu_N^*$ and $\KL=0$, a contradiction.
\end{enumerate}

We conclude that $\mu_N^*$ is a two-point distribution: 
$$\mu_N^* = (1-q^*) \delta_0 + q^* \delta_{x^*}. $$

We can now determine $\mu_K^*$ using the relative density in \eqref{eq:kkt-mu1}: because $\frac{d\mu_K^*}{d\mu_N^*}(0) = 0^{\lambda_K} = 0$, it follows that it is a point mass $\mu_K^* = \delta_{x^*}$. 

Now we use the tight constraints to determine $x^*$ and $q^*$ for $\mu_N^*$. 
\begin{enumerate}
    \item Constraint on $\mu_K^*$: $\EE_{\mu_K^*}[-\ln X] = \eps_K \implies -\ln(x^*) = \eps_K \implies x^* = e^{-\eps_K}$.
    \item Constraint on $\mu_N^*$: $\EE_{\mu_N^*}[-\ln (1-X)] = \eps_N$.
    \[ (1-q^*)(-\ln 1) + q^*(-\ln(1-x^*)) = \eps_N \implies q^* = \frac{\eps_N}{-\ln(1-x^*)}. \]
\end{enumerate}
The condition $e^{-\eps_K} + e^{-\eps_N} > 1$ ensures that $\eps_N < -\ln(1-e^{-\eps_K})$, so $0 < q^* < 1$. 

\paragraph{Step 4: verifying KKT conditions.}
    Note that we calculate $C^* = \EE_{\mu_N^*}[X^{\lambda_K}] = q^*(x^*)^{\lambda_K}$ (since $\lambda_K>1$).

Condition 1: $g(x^*)=0$.
\[ g(x^*) = -\frac{(x^*)^{\lambda_K}}{C^*} - \lambda_N \ln(1-x^*) = -\frac{1}{q^*} - \lambda_N \ln(1-x^*) = 0. \]
This yields $\lambda_N = \frac{-1}{q^* \ln(1-x^*)}$. Since $q^*>0$ and $\ln(1-x^*)<0$, we have $\lambda_N>0$.

Condition 2: $g'(x^*)=0$.
\[ g'(x^*) = -\frac{\lambda_K (x^*)^{\lambda_K-1}}{C^*} + \frac{\lambda_N}{1-x^*} = 0. \]
Rearranging and substituting $C^*$:
\[ \lambda_K = \frac{\lambda_N C^*}{(1-x^*)(x^*)^{\lambda_K-1}} = \frac{\lambda_N q^*(x^*)^{\lambda_K}}{(1-x^*)(x^*)^{\lambda_K-1}} = \frac{\lambda_N q^* x^*}{1-x^*}. \]
Substituting the expression for $\lambda_N$:
\[ \lambda_K = \left(\frac{-1}{q^* \ln(1-x^*)}\right) \frac{q^* x^*}{1-x^*} = \frac{-x^*}{(1-x^*) \ln(1-x^*)}. \]

We verify that $\lambda_K > 1$. Let $y = 1-x^* \in (0, 1)$. We need to show $\frac{-(1-y)}{y \ln y} > 1$. Since $y \ln y < 0$, this is equivalent to $-(1-y) < y \ln y$, or $1-1/y < \ln y$.
Consider the function $h(y) = \ln y - (1-1/y)$. Its derivative is $h'(y) = 1/y - 1/y^2 = (y-1)/y^2$. On $(0, 1)$, $h'(y) < 0$. Since $\lim_{y\to 1^-} h(y) = 0$, we have $h(y)>0$ for $y\in (0, 1)$. Thus, $\lambda_K > 1$.

Since $\lambda_K>1$, the shape analysis in Step 3 holds: $g$ has exactly one local maximum and one local minimum in $(0,1)$, confirming that $\{0, x^*\}$ are indeed the global minima of $g(x)$. The KKT conditions are fully satisfied, and therefore $\mu_K^*$ and $\mu_N^*$ are the global optimum.

%% file: src/appendix_filter.tex
\section{Proofs for Section \ref{subsec:binary-decision-LLM}}\label{appdx:filter}

\subsection{Binary-output reduction via data processing}
We now show that, under the linear error metrics $d^K(\hat x)=1-\hat x$ and $d^N(\hat x)=\hat x$, we can restrict to binary outputs without loss of optimality.
\begin{lemma}[Binary-output reduction via data processing]
    \label{lem:binary-reduction}
    Let $(\mu_K,\mu_N)$ be any feasible pair of output distributions supported on $[0,1]$.
    Define the Markov kernel $T$ from $[0,1]$ to $\{0,1\}$ by
    \[
        T(1\mid t)=t, \qquad T(0\mid t)=1-t.
    \]
    Let $\bar\mu_K := \mu_K T$ and $\bar\mu_N := \mu_N T$ be the induced distributions on $\{0,1\}$.
    Then:
    \begin{enumerate}
        \item (Constraints preserved) $\;\mathbb E_{\bar\mu_K}[1-\hat X]=\mathbb E_{\mu_K}[1-\hat X]$ and
              $\mathbb E_{\bar\mu_N}[\hat X]=\mathbb E_{\mu_N}[\hat X]$.
        \item (Objective non-increasing) For every $p\in(0,1)$,
              \[
                  F_p(\bar\mu_K,\bar\mu_N)\ \le\ F_p(\mu_K,\mu_N).
              \]
    \end{enumerate}
    Consequently, in the definition of $R_p(\eps_K,\eps_N)$ we may restrict to
    $\hat x\in\{0,1\}$, and the optimal output distributions are Bernoulli.
\end{lemma}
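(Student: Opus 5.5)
Both claims follow from viewing $T$ as a channel that is applied after the score is produced; the constraints are handled by linearity and the objective by data processing.

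\emph{Constraints.} By definition of the composed kernel, $\bar\mu_K(\{1\}) = \int T(1\mid t)\,d\mu_K(t) = \EE_{\mu_K}[\hat X]$. Hence
\[
\EE_{\bar\mu_K}[1-\hat X] = 1-\bar\mu_K(\{1\}) = \EE_{\mu_K}[1-\hat X].
\]
In the same way, $\EE_{\bar\mu_N}[\hat X] = \bar\mu_N(\{1\}) = \EE_{\mu_N}[\hat X]$. This proves the first claim, and it also shows that feasibility in $\calC_K(\eps_K)\times\calC_N(\eps_N)$ is preserved.

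\emph{Objective.} Let $X\sim\Bern(p)$ and let $\hat X\mid X$ be distributed as $\mu_K$ or $\mu_N$. Draw $Z\in\{0,1\}$ from $T(\cdot\mid \hat X)$ using fresh randomness independent of $X$. Then $X\to\hat X\to Z$ is a Markov chain. The conditional laws of $Z$ given $X=1$ and $X=0$ are exactly $\mu_K T=\bar\mu_K$ and $\mu_N T=\bar\mu_N$. Therefore $pF_p(\bar\mu_K,\bar\mu_N)=I(X;Z)$. The data processing inequality gives $I(X;Z)\le I(X;\hat X)=pF_p(\mu_K,\mu_N)$, and dividing by $p>0$ yields the second claim. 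The same argument works at $p=0$: the data processing inequality for KL divergence under the kernel $T$ gives $\KL(\bar\mu_K\|\bar\mu_N)\le \KL(\mu_K\|\mu_N)$.

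\emph{Consequence.} Take any minimizer of $R_p$. Replacing it by its image under $T$ keeps it feasible and does not increase $F_p$, so the image is also a minimizer, and it is supported on $\{0,1\}$. Hence we may restrict to $\hat x\in\{0,1\}$, where every distribution is Bernoulli.

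The only care point is measurability. The kernel $t\mapsto T(\cdot\mid t)$ is continuous, so the composition is well defined and no technical issue arises. The argument has no real obstacle.
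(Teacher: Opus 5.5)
Your proof is correct and follows the paper's route. Constraints are preserved by linearity of $T$, and the objective by data processing. You apply data processing as the mutual-information inequality on the chain $X\to\hat X\to Z$, while the paper applies KL contraction to each term of $F_p=\KL(\mu_K\|\mu_p)+\frac{1-p}{p}\KL(\mu_N\|\mu_p)$ after noting $\mu_pT=p\bar\mu_K+(1-p)\bar\mu_N$; these are the same inequality, just unpacked differently.

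Keep your extra remark that the argument also gives $\KL(\bar\mu_K\|\bar\mu_N)\le\KL(\mu_K\|\mu_N)$ at $p=0$. The paper uses this lemma to reduce the KL program of Theorem~\ref{thm:binary-decision-LLM} to Bernoulli outputs, and the statement, restricted to $p\in(0,1)$, does not literally cover that case.
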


\begin{proof}[Proof sketch]
    The first part follows from linearity:
    under $T$, the binary output $Y\in\{0,1\}$ satisfies $\mathbb E[Y\mid \hat X=t]=t$, hence
    $\mathbb E[Y]=\mathbb E[\hat X]$ and $\mathbb E[1-Y]=\mathbb E[1-\hat X]$.
    For the second part, note that the kernel commutes with mixtures:
    $\bar\mu_p=(p\mu_K+(1-p)\mu_N)T = p\bar\mu_K+(1-p)\bar\mu_N$.
    By the data-processing (contraction) inequality for KL under a Markov kernel,
    \[
        \KL(\mu_KT\|\mu_pT)\le \KL(\mu_K\|\mu_p),
        \qquad
        \KL(\mu_NT\|\mu_pT)\le \KL(\mu_N\|\mu_p),
    \]
    and plugging into the definition of $F_p$ gives the claim.
\end{proof}

\subsection{Solving the convex optimization}\label{subsec:filter-convex-optimization}

By \cref{lem:binary-reduction}, it suffices to consider binary outputs $\hat X\in\{0,1\}$. 
Again, we assume the KL divergence is base-$e$ for convenience of analysis; the shape of optimal solutions is unaffected by this choice of base. 
Hence $\mu_K$ and $\mu_N$ are Bernoulli distributions, which we parameterize by
\[
    \mu_K=\mathrm{Bern}(a),\qquad \mu_N=\mathrm{Bern}(b),
\]
where $a=\Pr(\hat X=1\mid X\in \calK)$ and $b=\Pr(\hat X=1\mid X\notin \calK)$.
This reduction also allows us to apply \cref{thm:rate-distortion-main-4}, which provides the desired first-order expansion of the memory lower bound w.r.t. $p$, around $p=0$. 
It now suffices to show that the optimal distribution is given by $a=1-\eps_K$ and $b=\eps_N$.

The KL divergence minimization problem for Bernoulli distributions reduces to the two-variable convex program
\begin{equation}
    \label{opt:binary-primal-filters}
    \min_{a,b\in[0,1]} \ \KL\big(\mathrm{Bern}(a)\,\|\,\mathrm{Bern}(b)\big)
    \quad \text{s.t.}\quad a\ge 1-\eps_K,\ \ b\le \eps_N .
\end{equation}

We now show that the KL divergence objective is strictly increasing in $a$ and strictly decreasing in $b$ over the feasible region.
Recall the closed form
\[
    \KL\big(\mathrm{Bern}(a)\,\|\,\mathrm{Bern}(b)\big)
    = a\ln\frac{a}{b} + (1-a)\ln\frac{1-a}{1-b}.
\]
For fixed $b\in(0,1)$, differentiating w.r.t.\ $a$ gives
\[
    \frac{\partial}{\partial a}\KL(\mathrm{Bern}(a)\|\mathrm{Bern}(b))
    = \ln\frac{a(1-b)}{(1-a)b}.
\]
In the non-trivial regime $1-\eps_K>\eps_N$, the feasible set satisfies $a\ge 1-\eps_K > \eps_N \ge b$,
hence $a>b$ throughout the feasible region. Therefore
\[
    \ln\frac{a(1-b)}{(1-a)b} > 0,
\]
so the objective is \emph{strictly increasing} in $a$ over the feasible region. Thus the minimum is attained at
\[
    a^* = 1-\eps_K.
\]

Similarly, for fixed $a\in(0,1)$, differentiating w.r.t.\ $b$ gives
\[
    \frac{\partial}{\partial b}\KL(\mathrm{Bern}(a)\|\mathrm{Bern}(b))
    = \frac{b-a}{b(1-b)}.
\]
Since $a>b$ on the feasible region, we have $\partial/\partial b < 0$, so the objective is \emph{strictly decreasing}
in $b$ over the feasible region. Hence the minimum is attained at the largest feasible $b$, namely
\[
    b^* = \eps_N.
\]

Combining the two monotonicity statements yields the unique optimizer of \eqref{opt:binary-primal-filters}:
\[
    \mu_K^*=\mathrm{Bern}(1-\eps_K),\qquad \mu_N^*=\mathrm{Bern}(\eps_N).
\]

\subsection{A hash-based memory-optimal two-sided filter}\label{subsec:hash-based-filter}
We now show that the filter lower bound can be achieved by a hash-based construction, which, like Bloom filters and others, is universe-agnostic, in the sense that the memory usage does not depend on the universe size $u$.
Our construction is inspired by the optimal one-sided filters introduced in \citet{Porat-2009-Matrix} and \citet{DietzFelbingerP-2008-SuccinctRetreival}, based on solving random linear equations over a finite field.
We also note that this construction is not efficient in practice. 
\begin{theorem}
Assume $\eps_N = 1/q$ for some prime power $q$, and fix any $\eps_K\in[0,1)$.
There exists a hash-based two-sided filter scheme (in the random-oracle model) such that as $n\to\infty$,
when initialized on any key set $\calK\subseteq \calU$ with $|\calK|=n$, it uses
\[
\KL\!\Big(\Bern(1-\eps_K)\ \big\|\ \Bern(\eps_N)\Big) + o(1)
\quad\text{bits of space per key}
\]
and, with probability $1 - o(1)$, the resulting filter satisfies
$\mathrm{FNR}\le \eps_K$ on keys and $\mathrm{FPR}=\eps_N$ on every non-key.
In particular, the guarantee is uniform over all universe sizes $u\ge n$.
\end{theorem}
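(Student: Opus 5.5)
The plan is a syndrome-style linear construction over $\FF_q$, generalizing the matrix filters of \citet{Porat-2009-Matrix} and \citet{DietzFelbingerP-2008-SuccinctRetreival}. The first-moment count shows the space matches $\KL$; the failure probability is controlled by a pairwise-independence (second-moment) argument.

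\textbf{Target rate.} With $a=1-\eps_K$ and $b=1/q$, expanding the closed form gives
\[
\KL(\Bern(1-\eps_K)\,\|\,\Bern(1/q)) \;=\; \log q - h(\eps_K) - \eps_K\log(q-1),
\]
where $h$ is the binary entropy. The subtracted terms are, up to $o(1)$ per key, $\frac1n\log V$, where
\[
V=\sum_{r\le \eps_K n}\binom{n}{r}(q-1)^r
\]
is the volume of a Hamming ball of radius $\eps_K n$ in $\FF_q^n$. So the target is to store $m\log q$ bits with $q^m \approx q^n/V$.

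\textbf{The scheme.} Use a random oracle assigning to each $i\in\calU$ a vector $h(i)\in\FF_q^m$ and a symbol $f(i)\in\FF_q$, all independent and uniform. Let $H\in\FF_q^{n\times m}$ have rows $h(i)$ for $i\in\calK$, and let $f_\calK=(f(i))_{i\in\calK}$.
\begin{itemize}
\item $\Init$ searches for $z\in\FF_q^m$ such that the residual $Hz-f_\calK$ has Hamming weight at most $\eps_K n$. It stores, say, the lexicographically first such $z$, using $m\lceil\log q\rceil$ bits, which is independent of $u$.
\item $\Query(i)$ returns $1$ iff $\langle h(i),z\rangle=f(i)$.
\end{itemize}
If $\Init$ succeeds, every key outside the error support is accepted, so $\mathrm{FNR}\le\eps_K$ deterministically. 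For a non-key $i$, the symbol $f(i)$ is uniform and independent of $(h(i),z)$, because $z$ is a function only of the oracle values on $\calK$. Hence each non-key is accepted with probability exactly $1/q=\eps_N$.

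\textbf{Choice of $m$.} Take
\[
m=\Big\lceil n-\log_q V+\omega(\log n)\Big\rceil,
\]
so that $q^mV/q^n\to\infty$ while $m\log q/n\to\log q - h(\eps_K)-\eps_K\log(q-1)$. This gives the claimed $\KL+o(1)$ bits per key.

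\textbf{Success of $\Init$ (main step).} Let $N$ be the number of $z\in\FF_q^m$ with $\mathrm{wt}(Hz-f_\calK)\le\eps_K n$.
\begin{itemize}
\item For each fixed $z$, the vector $Hz-f_\calK$ is uniform on $\FF_q^n$, because $f_\calK$ is uniform and independent of $H$. Hence $\EE N=q^mV/q^n\to\infty$.
\item For $z\neq z'$, the pair $(Hz-f_\calK,\,Hz'-f_\calK)$ is uniform on $(\FF_q^n)^2$. Indeed, $H(z-z')$ is uniform since $z-z'\ne0$ and the rows of $H$ are uniform. It is also independent of $Hz-f_\calK$, because $f_\calK$ re-randomizes the latter.
\item Pairwise independence of the indicators gives $\mathrm{Var}(N)\le\EE N$. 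Chebyshev then yields $\PP[N=0]\le 1/\EE N=o(1)$.
\end{itemize}
This is the step I expect to need the most care. One must verify this pairwise independence exactly in the random-oracle model, including the degenerate case where two keys collide in hash value (probability $o(1)$, absorbed into the failure probability). One must also make the $\omega(\log n)$ slack compatible with the $o(1)$ per-key overhead.

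\textbf{Conclusion.} With probability $1-o(1)$, the filter satisfies $\mathrm{FNR}\le\eps_K$ and $\mathrm{FPR}=\eps_N$ on every non-key. It uses $\KL(\Bern(1-\eps_K)\|\Bern(\eps_N))+o(1)$ bits per key, uniformly over $u\ge n$. This matches the lower bound of \cref{thm:rate-distortion-main-1} combined with \cref{thm:binary-decision-LLM}.
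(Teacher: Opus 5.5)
Your proof is correct and is a close variant of the paper's. Both store a solution of a random linear system over $\FF_q$ hashed from the keys and prove existence by a second-moment argument. Your $\EE N = q^m V/q^n$ is the same quantity as the paper's $(q^m-1)p_n$, because $V/q^n=\PP[\mathrm{Bin}(n,1/q)\ge(1-\eps_K)n]$ exactly.

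The one real difference is your random right-hand side $f(i)$. The paper uses homogeneous equations $\langle h(i),y\rangle=0$, so it must restrict to $y\neq 0$, since $y=0$ satisfies every equation and has FPR $1$. Then $y$ and every nonzero multiple $cy$ satisfy exactly the same equations. Its second moment therefore picks up these collinear pairs, $\EE[Z^2]\le (q^m-1)\big((q-1)p_n+(q^m-q)p_n^2\big)$, and it concludes via Paley--Zygmund. Your affine system makes all $q^m$ indicators exactly pairwise independent. Your justification is right: $Hz-f_\calK$ is independent of $H$, hence of $H(z'-z)$. So Chebyshev gives $\PP[N=0]\le 1/\EE N$ directly, and the FPR argument no longer needs $z\neq 0$. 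The price is one extra oracle symbol per element, which is free in the random-oracle model.

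Some corrections:

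\begin{itemize}
\item \emph{Encoding.} Storing $z$ coordinatewise in $m\lceil\log q\rceil$ bits inflates the space by the factor $\lceil\log q\rceil/\log q$ whenever $q$ is not a power of two (e.g.\ $q=3$). That breaks the $\KL+o(1)$ claim. Encode $z$ instead as a single integer in $\{0,\dots,q^m-1\}$ using $\lceil m\log q\rceil$ bits, as the paper does.
\item \emph{Hash collisions.} This worry is unnecessary. The analysis only uses that the rows $h(i)$, $i\in\calK$, are i.i.d.\ uniform, which the random oracle guarantees whether or not two rows coincide.
\item \emph{Slack in $m$.} Any $s_n\to\infty$ in $m=\lceil n-\log_q V+s_n\rceil$ already gives $\EE N\ge q^{s_n}\to\infty$, at $o(1)$ cost per key.
\item \emph{Range of $\eps_K$.} Identifying $\frac1n\log V$ with $h(\eps_K)+\eps_K\log(q-1)$ holds only for $\eps_K\le 1-1/q$. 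Beyond that, $V/q^n\to 1$ and $m=o(n)$, so the space bound holds a fortiori.
\end{itemize}
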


\begin{proof}
Let $\FF_q$ be the finite field of size $q=1/\eps_N$. 
WLOG, let $\calU = [u]$.
We assume access to a vector-valued random-oracle hash function
\[
h:\calU\to \FF_q^m,\qquad h(i)=(h(i)_1,\ldots,h(i)_m),
\]
where $m$ is a parameter to be determined at the end, and the coordinates $h(i)_j$ are i.i.d.\ uniform over $\FF_q$ and independent across distinct $i$'s.

For $y\in \FF_q^m$, consider the linear functions defined by $h(i)$'s:
\[
\langle h(i),y\rangle \ :=\ \sum_{j=1}^m h(i)_j\,y_j\ \in\ \FF_q.
\]
Consider the system of equations in the unknown $y\in\FF_q^m$:
\begin{equation}\label{eq:linear-system}
\langle h(i),y\rangle = 0,\qquad i\in\calK,
\end{equation}
and restrict to nonzero solutions $y\neq 0$.

\paragraph{Initialization.}
Given $(\eps_K,\eps_N)$ and $\calK$, the filter calculates the corresponding $m$, sends each $i\in \calK$ through the random oracle to obtain $h(i)$'s, and searches for a nonzero vector $y\in\FF_q^m$ that satisfies
at least $(1-\eps_K)n$ equations in~\eqref{eq:linear-system}.
If such a $y$ exists, the filter stores a binary encoding of $y$.
The construction \emph{succeeds} if a vector $y$ meeting this requirement exists.

The memory usage is $m\log q+O(1)$ bits, where $\log$ is base $2$.

\paragraph{Query.}
On input $i'\in \calU$, the filter gets $h(i')\in\FF_q^m$ from the oracle and outputs $1$ iff
$\langle h(i'),y\rangle=0$ (and outputs $0$ otherwise).

\paragraph{Error guarantees.}
If initialization succeeds, then at least $(1-\eps_K)n$ keys satisfy $\langle h(i),y\rangle=0$.
Thus for a uniform random key $i'\sim\Unif(\calK)$ we have
\[
\PP[\mathsf{Query}(i')=0]\ \le\ \eps_K.
\]
which satisfies the FNR constraint if we convert it into a permutation-invariant membership tester via Remark \ref{rmk:permutation-invariance}.

For false positives, fix any non-key $i'\in \calU\setminus \calK$.
Since $y$ is a function of $\{h(i)\}_{i\in\calK}$ only, and $h(i')$ is independent of $\{h(i)\}_{i\in\calK}$,
we may condition on $y$ and treat it as fixed.
The next lemma implies
\[
\PP[\mathsf{Query}(i')=1 \mid y]\ =\ \frac{1}{q}\ =\ \eps_N,
\]
hence $\mathrm{FPR}=\eps_N$.

\begin{lemma}\label{lem:uniform-fpr-ui}
For any fixed $y\in\FF_q^m$ with $y\neq 0$, if $H_1,\ldots,H_m$ are drawn i.i.d. from $\Unif(\FF_q)$, then
\[
\PP\!\left[\sum_{j=1}^m y_j H_j = 0\right]\ =\ \frac{1}{q}.
\]
\end{lemma}
\begin{proof}
WLOG, suppose $y_1\neq 0$. For any $c\in\FF_q$,
\[
\PP\!\left[\sum_{j=1}^m y_j H_j = c\right]
=\PP\!\left[H_1 = y_1^{-1}\!\left(c-\sum_{j=2}^m y_j H_j\right)\right]
=\frac{1}{q},
\]
since $H_1$ is uniform and independent of $(H_2,\ldots,H_m)$.
\end{proof}

\paragraph{Choice of $m$ and space bound.}
Let
\[
D\ :=\ \KL\!\Big(\Bern(1-\eps_K)\ \big\|\ \Bern(\eps_N)\Big).
\]
Choose any sequence $t_n\to\infty$ with $t_n=o(n)$ (e.g.\ $t_n=n^{2/3}$), and set
\begin{equation}\label{eq:m-choice}
m\ :=\ \left\lceil \frac{nD+t_n}{\log q}\right\rceil.
\end{equation}
Then the memory is
\[
m\log q\ =\ nD + t_n + O(1),
\]
i.e.\ $D + o(1)$ bits per key.

It remains to show that initialization succeeds with probability $1-o(1)$ (hence at least $0.9$ for all large $n$).

\paragraph{Second moment method for success probability.}
Let $Y:=\FF_q^m\setminus\{0\}$, so $|Y|=q^m-1$.
For each $y\in Y$, define
\[
N_y := \indicator{\#\{i \in\calK: 
\langle h(i),y\rangle=0\} 
\ge (1-\eps_K)n},
\qquad
Z\ :=\ \sum_{y\in Y} N_y .
\]
Thus $Z\ge 1$ iff there exists a nonzero $y$ satisfying at least $(1-\eps_K)n$ equations, i.e.\ initialization succeeds. Let $p_n = \PP[N_y=1]$, then $\EE[Z] = (q^m-1)p_n$.

Fix any $y\neq 0$. For each key $i\in\calK$, by Lemma~\ref{lem:uniform-fpr-ui} and independence across $i$,
the indicators $\indicator{\langle h(i),y\rangle=0}$ are i.i.d.\ $\Bern(\eps_N)$.
Therefore, by standard large-deviation bounds, for each $y$,
\begin{equation}\label{eq:pn}
p_n = 2^{-nD+o(n)}.
\end{equation}

Now we bound the second moment $\EE[Z^2]$:
\[
\EE[Z^2]\ =\ \sum_{y\in Y}\EE[N_y]\ +\ \sum_{\substack{y,y'\in Y\\y\neq y'}} \EE[N_yN_{y'}].
\]
Consider the second term for fixed $y, y'\in Y, y\neq y'$.
If $y'$ is a nonzero scalar multiple of $y$, then for each $i\in\calK$, $\langle h(i),y'\rangle=0$ iff $\langle h(i),y\rangle=0$,
so $N_{y'}=N_y$ and $\EE[N_yN_{y'}]=\EE[N_y]=p_n$.
Otherwise $y,y'$ are linearly independent, and the mapping
\[
h(i) \mapsto \big(\langle h(i),y\rangle,\ \langle h(i),y'\rangle\big)\in\FF_q^2
\]
is a linear surjection. Since $h(i)$ is uniform, the events
$A_i:=\{\langle h(i),y\rangle=0\}$ and $B_i:=\{\langle h(i),y'\rangle=0\}$ are independent with
$\Pr[A_i]=\Pr[B_i]=1/q$ and $\Pr[A_i\cap B_i]=1/q^2$.
Independence across $i$ then implies that $\{A_i\}_{i=1}^n$ is independent of $\{B_i\}_{i=1}^n$, and thus $X_y$ and $X_{y'}$
are independent: $\EE[X_yX_{y'}]=p_n^2$.

Now we count the number of pairs $(y,y')$ of each type. For each $y$ there are exactly $(q-2)$ distinct nonzero multiples $y'\in\langle y\rangle\setminus\{y\}$,
and $|Y|-(q-1)=q^m-q$ choices of $y'$ not in the one-dimensional subspace $\langle y\rangle$.
Therefore,
\begin{align*}
\EE[Z^2]
&\le (q^m-1)p_n\ +\ (q^m-1)(q-2)p_n\ +\ (q^m-1)(q^m-q)p_n^2 \\
&= (q^m-1)\Big((q-1)p_n + (q^m-q)p_n^2\Big).
\end{align*}
By Paley-Zygmund,
\[
\Pr[Z\ge 1]\ \ge\ \frac{\EE[Z]^2}{\EE[Z^2]}
\ \ge\ \frac{(q^m-1)p_n}{(q-1)+(q^m-q)p_n}.
\]
This lower bound goes to 1 as $n\to\infty$ since~
\[
q^m p_n\ =\ 2^{m\log q}\cdot 2^{-nD+o(n)}\ =\ 2^{t_n+o(n)}\ \to\ \infty,
\]
and both the numerator and denominator are dominated by this term.
\end{proof}

%% file: src/appendix_training.tex
\section{Experiment Details}\label{subsec:experiments-setup-details}
We present a more detailed setup of the experiments in \cref{subsec:experiments}. 

We optimize the models using AdamW with $\beta_1=0$, $\beta_2=0.999$, and base learning rate $3\times 10^{-3}$ with 1000 warm-up steps and cosine decay. The choice of $\beta_1=0$ (so the optimizer is similar to RMSprop) is for better optimization performance. 
Each batch contains all $n$ positive samples (the full set $\calK$) and $n$ fresh negatives sampled uniformly from $\calU\setminus\calK$.
The positive and negative samples are evaluated with a custom weight defined by $\lambda_F$ as follows:
\begin{align*}
    \mathcal{L} = \frac{\lambda_F}{\lambda_F + 1} \EE_{i\sim \mathrm{Unif}(\calK)}\big[-\ln \hat x(i)\big] \quad + \quad \frac{1}{\lambda_F+1} \EE_{i\sim \mathrm{Unif}(\calU\setminus\calK)}\big[-\ln (1-\hat x(i))\big].
\end{align*}

We set the seed to be $42$ for all our experiments, and all models are trained for $30000$ epochs, at which point all losses converge to stable values.

We consider three different model sizes, corresponding to $0.58\ /\ 1\ /\ 2.18$ parameters per fact. For each model size, we train a model with $\lambda_F\in\{0.25, 1, 4, 8\}$ to observe the trade-off between the two types of errors, given a fixed memory budget. 
In \cref{fig:logloss-empirical}, we report the confidence distributions of training the $1$ parameter-per-fact model. The other two sizes are reported in \cref{fig:half-param} and \cref{fig:double-param}, respectively. 


We note that across different model sizes and $\lambda_F$, the non-fact distribution consistently exhibits the heavy tail of hallucinations, whenever the fact distribution is sufficiently concentrated. 

In \cref{tab:expt-details}, we report the full experimental setup.
Our implementation code will be released publicly.

\begin{table}[t]
\centering
\begin{threeparttable}
\caption{Reproducibility details for the main experiment.}
\label{tab:expt-details}
\setlength{\tabcolsep}{6pt}
\renewcommand{\arraystretch}{1.15}
\begin{tabularx}{\linewidth}{@{} l X @{}}
\toprule
\textbf{Category} & \textbf{Setting} \\
\midrule

\textbf{Data} &
Universe $\mathcal U$: alphabet size $|\Sigma|=26$, length $15$, $|\mathcal U|=26^{15}$. \newline
Facts $\mathcal K$: $|\mathcal K|=15145$, sampled uniformly without replacement with seed 42. \newline
Negatives: per step sample size $15145$, distribution $\mathrm{Unif}(\mathcal U\setminus\mathcal K)$. \\

\textbf{Tokenization} &
Tokenization: char-level, vocab size $V=26$. \newline
Positional encoding: sinusoidal (fixed). \\

\textbf{Model} &
Architecture: Transformer. \newline
2 Layers, $d_{\text{model}}=18/24/36$, heads $h=3/4/6$, FFN dim $d_{\text{ff}}=72/96/144$. \newline
Activation: GeLU; pre-LayerNorm; no dropout. \newline
Output head: Mean Pooling $\to$ Layer norm $\to$ Linear$\to$Sigmoid producing $\hat x(i)\in(0,1)$. \newline
\#Trainable parameters: $8767\ / \ 15145\ /\ 33085$. \newline
Precision: bf16 \\

\textbf{Objective} &
Loss: weighted BCE with weights $\lambda_F/(\lambda_F+1)$, $1/(\lambda_F+1)$. \newline
Batch composition: all positives + fresh negatives. \\

\textbf{Optimization} &
Optimizer: AdamW with $(\beta_1,\beta_2)=(0, 0.999)$ and weight decay $0.01$. \newline
Learning rate: base LR $3\times 10^{-3}$; 1000 warm-up steps; cosine decay. \newline
Training length: 30,000 epochs. \newline
Grad clip: $1$. \newline
Initialization: Xavier uniform for matrix; PyTorch default for one-dimensional. \\

\textbf{Evaluation} &
Facts eval: all $n$ facts. \newline
Non-facts eval: sample size $200k$ from $\mathrm{Unif}(\mathcal U\setminus\mathcal K)$. \newline
Histogram/KL is based on the discretized output distribution with $50$ bins. \\

\bottomrule
\end{tabularx}

\end{threeparttable}
\end{table}



\begin{figure*}[t]
    \centering
    \includegraphics[trim=10 5 10 10, width=0.8\linewidth]{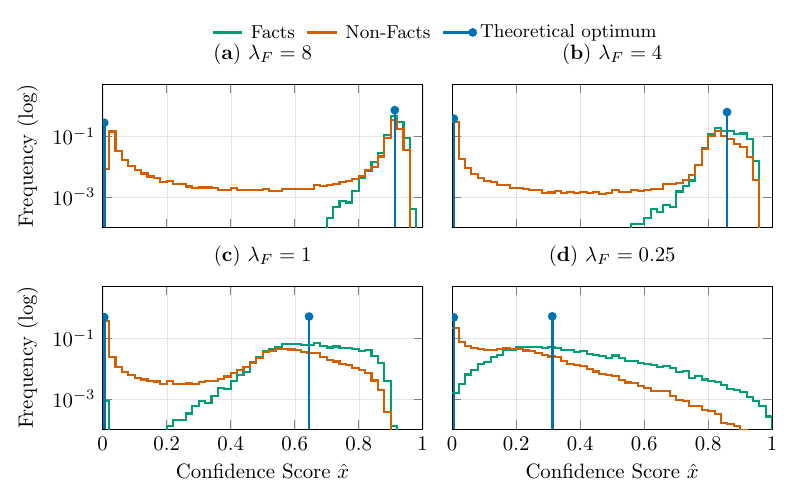}
    \caption{Output distributions with 15145 facts and 8767 parameters. }
    \label{fig:half-param}
\end{figure*}

\begin{figure*}[t]
    \centering
    \includegraphics[trim=10 5 10 10, width=0.8\linewidth]{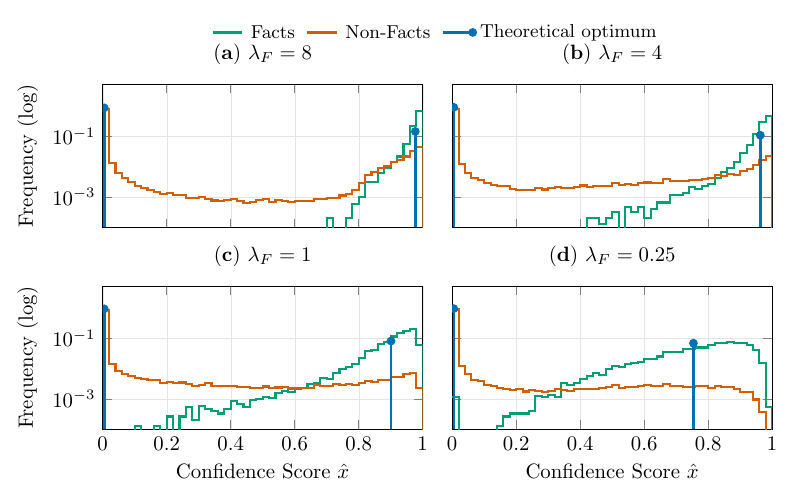}
    \caption{Output distributions with 15145 facts and 33085 parameters. }
    \label{fig:double-param}
\end{figure*}

\subsection{Fine-tuning experiments on synthetic IDs and real ISBNs}\label{subsec:lora-experiments}

To probe whether the rate--distortion trade-off in \cref{thm:prob-est-LLM} continues to hold beyond the from-scratch setting of \cref{subsec:experiments}, we run a second family of experiments that LoRA fine-tune~\citep{HuSWALWWC22-LoRA} a pretrained large language model on two membership-testing tasks of contrasting structure: a structure-free random-ID baseline and a real-world ISBN-13 dataset.

\paragraph{Datasets.}
Both tasks use $|\calK|=10000$ facts and a much larger non-fact pool, so the small-$p$ regime of our theory remains in effect.
\emph{Synthetic IDs} mirror the regime of \cref{tab:expt-details} but use the pretrained model's native tokenizer: each fact is a length-$13$ string over the decimal digits $\{0,\ldots,9\}$, so $|\calU|=10^{13}$, and non-facts are sampled uniformly from $\calU\setminus\calK$.
This setting retains the randomness assumption of the theorem and serves as a structure-free baseline.
\emph{ISBN} draws $|\calK|=10000$ valid ISBN-13 strings from the Open Library editions dump \texttt{ol\_dump\_editions\_2026-02-28.txt}.
The extraction script reservoir-samples ISBN entries after removing hyphens and keeping only $13$-digit strings that start with \texttt{978} or \texttt{979}.
Non-facts are constructed to match surface statistics while remaining outside $\calK$: we sample the first four digits from the observed ISBN prefixes, randomize the next eight digits, recompute the ISBN-13 check digit, and filter out any string in $\calK$.
This design preserves the structural regularity of real ISBNs (in particular the publisher-prefix substructure a pretrained model may already encode) while still admitting a large, sampleable set of plausible-looking non-facts.

\paragraph{Base model and LoRA configuration.}
The base model is \texttt{Qwen/Qwen3.5-2B}~\citep{qwen35blog}, a 2B-parameter pretrained LM with hidden size $2048$ and $24$ layers arranged as $6$ blocks of $(3 \times \text{Gated DeltaNet} + 1 \times \text{Gated Attention})$.
We freeze all base weights and inject LoRA adapters on modules matched by \texttt{q\_proj} and \texttt{v\_proj}, using rank $r\in\{2,4\}$, scaling $\alpha=2r$, and dropout $0.05$.
This adds $104448$ trainable parameters at $r=2$ and $208896$ at $r=4$, i.e., on the order of $10^{-4}$ of the base model's parameter count.
We use the same weighted binary cross-entropy objective as the from-scratch experiments and sweep $\lambda_F\in\{0.5,1,2,4\}$ to trace the empirical rate--distortion frontier under each rank.
Full hyperparameters are listed in \cref{tab:lora-expt-details}.

\begin{table}[t]
\centering
\begin{threeparttable}
\caption{Reproducibility details for the LoRA fine-tuning experiments (\cref{subsec:lora-experiments}).}
\label{tab:lora-expt-details}
\setlength{\tabcolsep}{6pt}
\renewcommand{\arraystretch}{1.15}
\begin{tabularx}{\linewidth}{@{} l X @{}}
\toprule
\textbf{Category} & \textbf{Setting} \\
\midrule

\textbf{Data (synthetic)} &
Each fact is a length-$13$ string over decimal digits $\{0,\ldots,9\}$; $|\calU|=10^{13}$. \newline
$|\calK|=10000$, sampled uniformly without replacement, seed $42$. \newline
Train non-facts: $10000$ fresh samples per epoch from $\mathrm{Unif}(\calU\setminus\calK)$. Eval non-facts: a fixed set of $20000$ samples from $\calU\setminus\calK$. \\

\textbf{Data (ISBN)} &
Facts: $|\calK|=10000$ valid ISBN-13 strings in the fixed file \texttt{positive\_10k\_isbns.txt}, generated by reservoir sampling from \texttt{ol\_dump\_editions\_2026-02-28.txt} after filtering to $13$ digits and prefixes \texttt{978}/\texttt{979}. \newline
Train non-facts: $10000$ fresh samples per epoch; eval non-facts: a fixed set of $20000$ samples. Both are generated by sampling a real four-digit prefix, randomizing the next eight digits, recomputing the ISBN-13 check digit, and filtering to exclude $\calK$. \\

\textbf{Tokenization} &
Native Qwen3.5 BPE tokenizer (vocab size $248{,}320$); IDs are tokenized as plain text with no special preprocessing. \\

\textbf{Base model} &
\texttt{Qwen/Qwen3.5-2B}: $24$ layers, hidden size $2048$, hybrid layout of $6$ blocks each containing $3$ Gated DeltaNet layers and $1$ Gated Attention layer. Pretrained weights frozen throughout. \newline
Precision: bf16. \\

\textbf{LoRA adapters} &
Rank $r\in\{2,4\}$; scaling $\alpha=2r$ (so $\alpha=4$ at $r=2$ and $\alpha=8$ at $r=4$); dropout $0.05$. \newline
Target modules: \texttt{q\_proj}, \texttt{v\_proj}. \newline
Trainable parameters: $104448$ at $r=2$, $208896$ at $r=4$. \\

\textbf{Objective} &
Weighted BCE:
$\mathcal{L} = \frac{\lambda_F}{\lambda_F+1}\EE_{i\sim\Unif(\calK)}\big[-\ln \hat x(i)\big]
            + \frac{1}{\lambda_F+1}\EE_{i\sim\Unif(\calU\setminus\calK)}\big[-\ln(1-\hat x(i))\big]$. \newline
$\lambda_F$ settings: $\{0.5, 1, 2, 4\}$. \\

\textbf{Optimization} &
Optimizer: AdamW, base LR $3\times 10^{-4}$, weight decay $0.01$, gradient-norm clip $1.0$. \newline
LR schedule: cosine with warm-up ratio $0.05$. \newline
Batch size $64$, gradient accumulation $1$, $80$ epochs. Seed $42$. \\

\textbf{Evaluation} &
Facts: full training key set $|\calK|=10000$. Non-facts: $20000$ disjoint samples. \newline
Empirical KL computed on $50$-bin histograms of $\hat x$ over facts vs.\ non-facts. \\

\bottomrule
\end{tabularx}
\end{threeparttable}
\end{table}

\textbf{Empirical results.}
\cref{fig:synthetic-r4-logloss,fig:isbn-r4-logloss,fig:isbn-r2-logloss} plot representative empirical confidence distributions on facts (green) and non-facts (red) under $\lambda_F \in \{1, 4\}$ for all three configurations.
Across all six panels the qualitative pattern predicted by \cref{thm:prob-est-LLM} recurs: facts concentrate near a single high-confidence atom close to $x^\star = e^{-\eps_K}$, and a non-negligible mass of non-facts piles up at the \emph{same} high-confidence region rather than spreading toward zero.
The shape is less well-aligned with theoretical optimum compared to the from-scratch synthetic training setup, mainly due to the sub-optimality of LoRA fine-tuning for this task.

Quantitatively, the empirical binned KL falls within $14$--$22\%$ of the information-theoretic lower bound at the observed $(\eps_K, \eps_N)$, with all twelve runs clustering around a slope-$\approx\!1.18$ line (\cref{fig:lora-min-vs-kl}).
The prior-bearing ISBN runs and the prior-free Synthetic runs occupy the same overhead band, suggesting that the pretraining prior does not measurably shift the rate--distortion frontier on this task.
Training-loss curves for one representative shared setting, $\lambda_F=1$ and rank $4$ (\cref{fig:training-loss-curves}), show the main effect of pretraining: the pretrained model moves rapidly into the low-loss regime, reinforcing that pretraining mainly improves optimization rather than the final rate--distortion frontier.

\begin{figure*}[p]
    \centering
    \begin{subfigure}[b]{0.85\linewidth}
        \centering
        \includegraphics[trim=10 5 10 10, width=\linewidth]{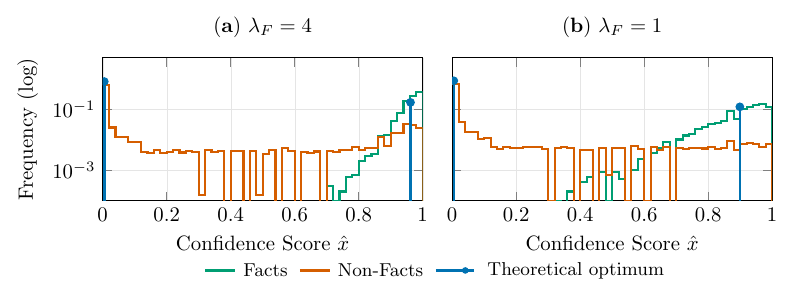}
        \caption{Synthetic dataset fine-tuning with LoRA Rank 4.}
        \label{fig:synthetic-r4-logloss}
    \end{subfigure}

    \vspace{0.3cm}

    \begin{subfigure}[b]{0.85\linewidth}
        \centering
        \includegraphics[trim=10 5 10 10, width=\linewidth]{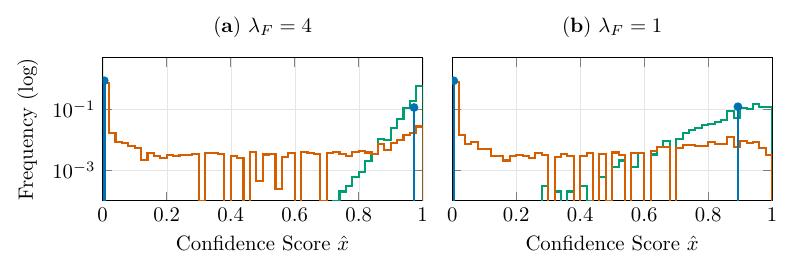}
        \caption{ISBN dataset fine-tuning with LoRA Rank 4.}
        \label{fig:isbn-r4-logloss}
    \end{subfigure}

    \vspace{0.3cm}

    \begin{subfigure}[b]{0.85\linewidth}
        \centering
        \includegraphics[trim=10 5 10 10, width=\linewidth]{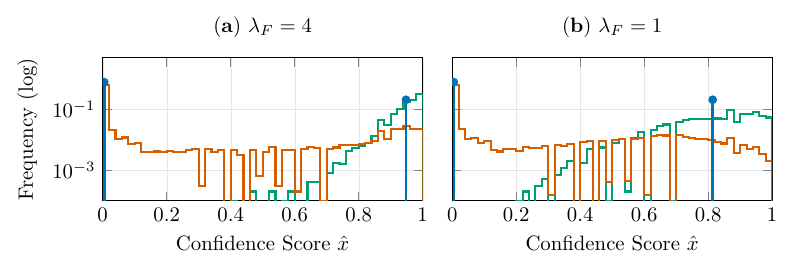}
        \caption{ISBN dataset fine-tuning with LoRA Rank 2.}
        \label{fig:isbn-r2-logloss}
    \end{subfigure}
    \caption{Representative output distributions on facts vs.\ non-facts across $\lambda_F\in\{1,4\}$ for the three LoRA fine-tuning configurations.}
    \label{fig:all-fine-tune-plots}
\end{figure*}

\begin{figure*}[t]
    \centering
    \includegraphics[trim=5 5 5 5, width=0.8\linewidth]{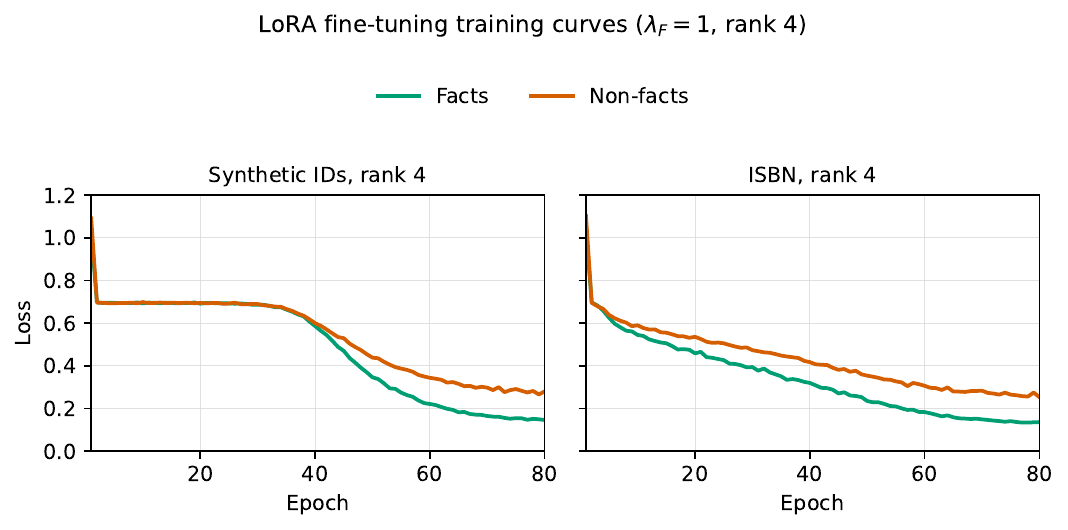}
    \caption{Training loss curves (facts loss vs.\ non-facts loss) under the shared setting $\lambda_F = 1$ and LoRA rank $4$ for Synthetic IDs and ISBN.
    Both fine-tuning runs enter the low-loss regime quickly, consistent with pretraining mainly improving optimization speed rather than shifting the final rate--distortion frontier.}
    \label{fig:training-loss-curves}
\end{figure*}


\textbf{Parameter efficiency and connection to the effective memory budget.}
Reading \cref{fig:lora-min-vs-kl} in absolute terms, the empirical KL stored per trainable LoRA parameter averages $\approx 0.17$ bits/param at rank $4$ and $\approx 0.27$ bits/param at rank $2$.\footnote{For example, at $r = 4$ on ISBN with $\lambda_F = 1$: empirical $\mathrm{KL} \approx 3.44$ bits per fact $\times\, 10000$ facts $\div\, 208896$ LoRA params $\approx 0.16$ bits/param.}
Both are an order of magnitude below the $\approx 2$ bits/param attained by the from-scratch models in \cref{subsec:experiments} (a value itself consistent with~\citet{AllenZhuL25-Physics33} on random data).
The drop is expected---LoRA expresses only a low-rank update on top of frozen base weights, so it mostly makes local adjustments around the pretrained predictor rather than globally reshaping the model's output distribution for a new random fact table---but it also gives a direct empirical handle on the distinction emphasized in \cref{appendix:random-facts}: the operative quantity in \cref{thm:rate-distortion-main-1} is $I(W; \calK)$, the information the model can actually allocate to $\calK$, not $|W|$ or the nominal trainable count.
The factor $\sim\!10\times$ gap between LoRA and from-scratch makes this distinction quantitative on the same task family.

\subsection{Effect of Key-to-Universe Ratio $p$ on Synthetic Experiments}\label{subsec:p-sweep}
\cref{thm:rate-distortion-main-1} is stated in the sparse limit $p \to 0$, while \cref{thm:rate-distortion-main-4} gives the first-order correction away from this limit; our main experiments fix $n = 15145$ with $p \approx 0.003$.
To check robustness at larger densities, we vary $p = n/u \in \{0.003, 0.03, 0.3\}$ by shrinking $u$ while keeping $n$ fixed, under the same 2-layer transformer and weighting scheme as \cref{tab:expt-details}, with $\lambda_F \in \{1, 4\}$.

The qualitative picture is unchanged across all six runs (\cref{fig:p-sweep-logloss}): facts concentrate near the predicted high-confidence atom and non-facts develop the predicted hallucination tail, with empirical and theoretical optima aligning closely.
\cref{tab:p-sweep} reports the empirical and lower-bound bits per key.
The empirical/lower-bound ratio stays close to one for small $p$ and widens modestly at $p = 0.3$; we attribute this widening to changed optimization geometry away from the sparse regime, not to the first-order $O(p)$ correction alone.
Separately, the total loss is lower at larger $p$ because the memory task becomes easier when true facts occupy a larger fraction of the universe.
This shows that our theoretical predictions remain accurate until true facts become a substantial fraction of potential facts. 

\begin{table}[h]
\centering
\caption{Empirical vs.\ information-theoretic lower bound on bits per key, across key-to-universe ratios $p$ and fact weights $\lambda_F$.
The last column reports the empirical-to-lower-bound ratio; its modest widening at $p=0.3$ reflects changed optimization geometry away from the sparse regime.}
\label{tab:p-sweep}
\small
\begin{tabular}{ccccc}
\toprule
$p$ & $\lambda_F$ & Empirical (bits) & Lower bound (bits) & Ratio \\
\midrule
$0.003$ & $1.0$ & $1.810$ & $1.612$ & $1.12$ \\
$0.003$ & $4.0$ & $1.575$ & $1.313$ & $1.20$ \\
$0.03$  & $1.0$ & $1.948$ & $1.735$ & $1.12$ \\
$0.03$  & $4.0$ & $1.655$ & $1.413$ & $1.17$ \\
$0.3$   & $1.0$ & $3.857$ & $2.754$ & $1.40$ \\
$0.3$   & $4.0$ & $2.949$ & $2.040$ & $1.45$ \\
\bottomrule
\end{tabular}
\end{table}

\begin{figure*}[p]
    \centering
    \begin{subfigure}[b]{0.78\linewidth}
        \centering
        \includegraphics[trim=10 5 10 10, width=\linewidth]{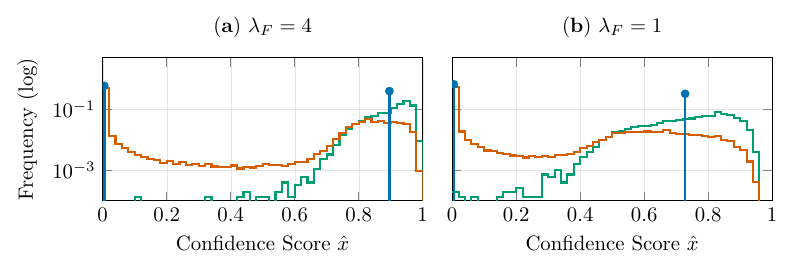}
        \caption{Strict memorization output distributions under key-to-universe ratio $p = 0.003$.}
        \label{fig:p-0.003-logloss}
    \end{subfigure}
    
    \vspace{0.2cm}
    
    \begin{subfigure}[b]{0.78\linewidth}
        \centering
        \includegraphics[trim=10 5 10 10, width=\linewidth]{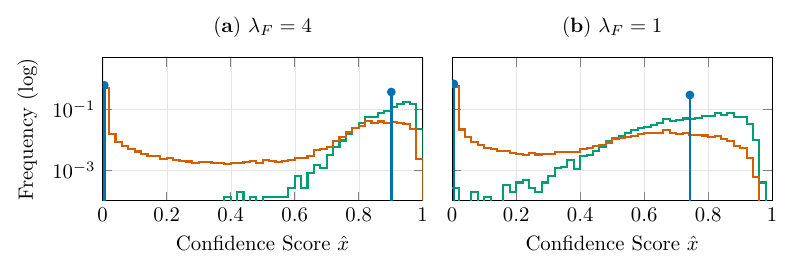}
        \caption{Strict memorization output distributions under key-to-universe ratio $p = 0.03$.}
        \label{fig:p-0.03-logloss}
    \end{subfigure}
    
    \vspace{0.2cm}
    
    \begin{subfigure}[b]{0.78\linewidth}
        \centering
        \includegraphics[trim=10 5 10 10, width=\linewidth]{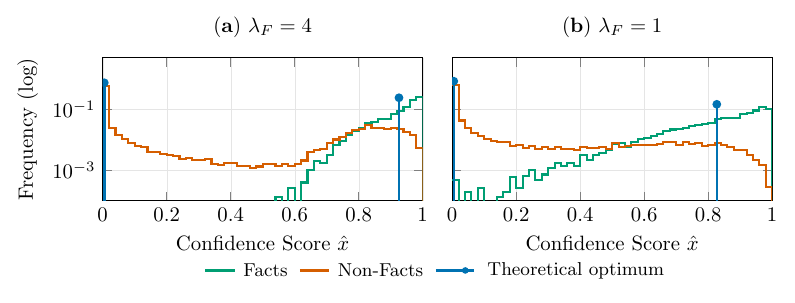}
        \caption{Strict memorization output distributions under key-to-universe ratio $p = 0.3$.}
        \label{fig:p-0.3-logloss}
    \end{subfigure}
    \caption{Output distributions on facts vs.\ non-facts across different choice of weight $\lambda_F$ ($\lambda_F = 4.0$ and $\lambda_F = 1.0$) under different key-to-universe ratios $p \in \{0.003, 0.03, 0.3\}$ for the synthetic strict memorization task.}
    \label{fig:p-sweep-logloss}
\end{figure*}



%% file: src/ref.bib
@article{Bloom-1970-Space,
  author    = {Burton H. Bloom},
  title     = {Space/Time Trade-offs in Hash Coding with Allowable Errors},
  journal   = {Commun. {ACM}},
  volume    = {13},
  number    = {7},
  pages     = {422--426},
  year      = {1970},
  url       = {https://doi.org/10.1145/362686.362692},
  doi       = {10.1145/362686.362692},
  bibsource = {dblp computer science bibliography, https://dblp.org}
}

@inproceedings{CarterFGMW-1978-Membership,
  author    = {Larry Carter and
               Robert W. Floyd and
               John Gill and
               George Markowsky and
               Mark N. Wegman},
  editor    = {Richard J. Lipton and
               Walter A. Burkhard and
               Walter J. Savitch and
               Emily P. Friedman and
               Alfred V. Aho},
  title     = {Exact and Approximate Membership Testers},
  booktitle = {Proceedings of the 10th Annual {ACM} Symposium on Theory of Computing,
               May 1-3, 1978, San Diego, California, {USA}},
  pages     = {59--65},
  publisher = {{ACM}},
  year      = {1978},
  url       = {https://doi.org/10.1145/800133.804332},
  doi       = {10.1145/800133.804332},
  bibsource = {dblp computer science bibliography, https://dblp.org}
}

@book{CoverT-2001-Information,
  author    = {Thomas M. Cover and
               Joy A. Thomas},
  title     = {Elements of Information Theory},
  publisher = {Wiley},
  year      = {2001},
  url       = {https://doi.org/10.1002/0471200611},
  doi       = {10.1002/0471200611},
  isbn      = {9780471062592},
  bibsource = {dblp computer science bibliography, https://dblp.org}
}

@inproceedings{Porat-2009-Matrix,
  author    = {Ely Porat},
  editor    = {Anna E. Frid and
               Andrey Morozov and
               Andrey Rybalchenko and
               Klaus W. Wagner},
  title     = {An Optimal Bloom Filter Replacement Based on Matrix Solving},
  booktitle = {Computer Science - Theory and Applications, Fourth International Computer
               Science Symposium in Russia, {CSR} 2009, Novosibirsk, Russia, August
               18-23, 2009. Proceedings},
  series    = {Lecture Notes in Computer Science},
  volume    = {5675},
  pages     = {263--273},
  publisher = {Springer},
  year      = {2009},
  url       = {https://doi.org/10.1007/978-3-642-03351-3\_25},
  doi       = {10.1007/978-3-642-03351-3\_25},
  bibsource = {dblp computer science bibliography, https://dblp.org}
}

@inproceedings{HurleyW-2007-OneSizeFitAll,
  author    = {Paul Hurley and
               Marcel Waldvogel},
  title     = {Bloom Filters: One Size Fits All?},
  booktitle = {32nd Annual {IEEE} Conference on Local Computer Networks {(LCN} 2007),
               15-18 October 2007, Clontarf Castle, Dublin, Ireland, Proceedings},
  pages     = {183--190},
  publisher = {{IEEE} Computer Society},
  year      = {2007},
  url       = {https://doi.org/10.1109/LCN.2007.17},
  doi       = {10.1109/LCN.2007.17},
  bibsource = {dblp computer science bibliography, https://dblp.org}
}

@inproceedings{DietzFelbingerP-2008-SuccinctRetreival,
  author    = {Martin Dietzfelbinger and
               Rasmus Pagh},
  editor    = {Luca Aceto and
               Ivan Damg{\aa}rd and
               Leslie Ann Goldberg and
               Magn{\'{u}}s M. Halld{\'{o}}rsson and
               Anna Ing{\'{o}}lfsd{\'{o}}ttir and
               Igor Walukiewicz},
  title     = {Succinct Data Structures for Retrieval and Approximate Membership
               (Extended Abstract)},
  booktitle = {Automata, Languages and Programming, 35th International Colloquium,
               {ICALP} 2008, Reykjavik, Iceland, July 7-11, 2008, Proceedings, Part
               {I:} Tack {A:} Algorithms, Automata, Complexity, and Games},
  series    = {Lecture Notes in Computer Science},
  volume    = {5125},
  pages     = {385--396},
  publisher = {Springer},
  year      = {2008},
  url       = {https://doi.org/10.1007/978-3-540-70575-8\_32},
  doi       = {10.1007/978-3-540-70575-8\_32},
  bibsource = {dblp computer science bibliography, https://dblp.org}
}

@inproceedings{LovettP-2010-LowerBoundDynamic,
  author    = {Shachar Lovett and
               Ely Porat},
  title     = {A Lower Bound for Dynamic Approximate Membership Data Structures},
  booktitle = {51th Annual {IEEE} Symposium on Foundations of Computer Science, {FOCS}
               2010, October 23-26, 2010, Las Vegas, Nevada, {USA}},
  pages     = {797--804},
  publisher = {{IEEE} Computer Society},
  year      = {2010},
  url       = {https://doi.org/10.1109/FOCS.2010.81},
  doi       = {10.1109/FOCS.2010.81},
  bibsource = {dblp computer science bibliography, https://dblp.org}
}

@inproceedings{PaghR-2001-LossyDict,
  author    = {Rasmus Pagh and
               Flemming Friche Rodler},
  editor    = {Friedhelm Meyer auf der Heide},
  title     = {Lossy Dictionaries},
  booktitle = {Algorithms - {ESA} 2001, 9th Annual European Symposium, Aarhus, Denmark,
               August 28-31, 2001, Proceedings},
  series    = {Lecture Notes in Computer Science},
  volume    = {2161},
  pages     = {300--311},
  publisher = {Springer},
  year      = {2001},
  url       = {https://doi.org/10.1007/3-540-44676-1\_25},
  doi       = {10.1007/3-540-44676-1\_25},
  bibsource = {dblp computer science bibliography, https://dblp.org}
}

@article{MilgromS-2002-EnvelopeTheorems,
  title     = {Envelope theorems for arbitrary choice sets},
  author    = {Milgrom, Paul and Segal, Ilya},
  journal   = {Econometrica},
  volume    = {70},
  number    = {2},
  pages     = {583--601},
  year      = {2002},
  publisher = {Wiley Online Library}
}

@article{KalaiNNSV-2025-WhyLLMsHallucinate,
  author     = {Adam Tauman Kalai and
                Ofir Nachum and
                Santosh S. Vempala and
                Edwin Zhang},
  title      = {Why Language Models Hallucinate},
  journal    = {CoRR},
  volume     = {abs/2509.04664},
  year       = {2025},
  url        = {https://doi.org/10.48550/arXiv.2509.04664},
  doi        = {10.48550/ARXIV.2509.04664},
  eprinttype = {arXiv},
  eprint     = {2509.04664},
  bibsource  = {dblp computer science bibliography, https://dblp.org}
}

@inproceedings{KalaiV-2024-CalibratedLanguageModelsMustHallucinate,
  author    = {Adam Tauman Kalai and
               Santosh S. Vempala},
  editor    = {Bojan Mohar and
               Igor Shinkar and
               Ryan O'Donnell},
  title     = {Calibrated Language Models Must Hallucinate},
  booktitle = {Proceedings of the 56th Annual {ACM} Symposium on Theory of Computing,
               {STOC} 2024, Vancouver, BC, Canada, June 24-28, 2024},
  pages     = {160--171},
  publisher = {{ACM}},
  year      = {2024},
  url       = {https://doi.org/10.1145/3618260.3649777},
  doi       = {10.1145/3618260.3649777},
  bibsource = {dblp computer science bibliography, https://dblp.org}
}

@book{Luenberger-1997-Optimization,
  title     = {Optimization by vector space methods},
  author    = {Luenberger, David G},
  year      = {1997},
  publisher = {John Wiley \& Sons}
}

@inproceedings{LewisPP+2020-RAG,
  author    = {Patrick Lewis and
               Ethan Perez and
               Aleksandra Piktus and
               Fabio Petroni and
               Vladimir Karpukhin and
               Naman Goyal and
               Heinrich K{\"{u}}ttler and
               Mike Lewis and
               Wen{-}tau Yih and
               Tim Rockt{\"{a}}schel and
               Sebastian Riedel and
               Douwe Kiela},
  editor    = {Hugo Larochelle and
               Marc'Aurelio Ranzato and
               Raia Hadsell and
               Maria{-}Florina Balcan and
               Hsuan{-}Tien Lin},
  title     = {Retrieval-Augmented Generation for Knowledge-Intensive {NLP} Tasks},
  booktitle = {Advances in Neural Information Processing Systems 33: Annual Conference
               on Neural Information Processing Systems 2020, NeurIPS 2020, December
               6-12, 2020, virtual},
  year      = {2020},
  url       = {https://proceedings.neurips.cc/paper/2020/hash/6b493230205f780e1bc26945df7481e5-Abstract.html},
  bibsource = {dblp computer science bibliography, https://dblp.org}
}

@article{HuangYM+25-Survey-Hallucination,
  author    = {Lei Huang and
               Weijiang Yu and
               Weitao Ma and
               Weihong Zhong and
               Zhangyin Feng and
               Haotian Wang and
               Qianglong Chen and
               Weihua Peng and
               Xiaocheng Feng and
               Bing Qin and
               Ting Liu},
  title     = {A Survey on Hallucination in Large Language Models: Principles, Taxonomy,
               Challenges, and Open Questions},
  journal   = {{ACM} Trans. Inf. Syst.},
  volume    = {43},
  number    = {2},
  pages     = {42:1--42:55},
  year      = {2025},
  url       = {https://doi.org/10.1145/3703155},
  doi       = {10.1145/3703155},
  bibsource = {dblp computer science bibliography, https://dblp.org}
}

@misc{AlansariL25-LLM-hallucination-comprehensive-survey,
      title={Large Language Models Hallucination: A Comprehensive Survey}, 
      author={Aisha Alansari and Hamzah Luqman},
      year={2026},
      eprint={2510.06265},
      archivePrefix={arXiv},
      primaryClass={cs.CL},
      url={https://arxiv.org/abs/2510.06265}, 
}

@article{XUJK24-Hallucination-inevitable,
  author     = {Ziwei Xu and
                Sanjay Jain and
                Mohan S. Kankanhalli},
  title      = {Hallucination is Inevitable: An Innate Limitation of Large Language
                Models},
  journal    = {CoRR},
  volume     = {abs/2401.11817},
  year       = {2024},
  url        = {https://doi.org/10.48550/arXiv.2401.11817},
  doi        = {10.48550/ARXIV.2401.11817},
  eprinttype = {arXiv},
  eprint     = {2401.11817},
  bibsource  = {dblp computer science bibliography, https://dblp.org}
}

@article{JILF+23-Survey-Hallucination-NLP,
  author    = {Ziwei Ji and
               Nayeon Lee and
               Rita Frieske and
               Tiezheng Yu and
               Dan Su and
               Yan Xu and
               Etsuko Ishii and
               Yejin Bang and
               Andrea Madotto and
               Pascale Fung},
  title     = {Survey of Hallucination in Natural Language Generation},
  journal   = {{ACM} Comput. Surv.},
  volume    = {55},
  number    = {12},
  pages     = {248:1--248:38},
  year      = {2023},
  url       = {https://doi.org/10.1145/3571730},
  doi       = {10.1145/3571730},
  bibsource = {dblp computer science bibliography, https://dblp.org}
}

@inproceedings{HoltzmanBDFC20-Curious-neural-text-degeneration,
  author    = {Ari Holtzman and
               Jan Buys and
               Li Du and
               Maxwell Forbes and
               Yejin Choi},
  title     = {The Curious Case of Neural Text Degeneration},
  booktitle = {8th International Conference on Learning Representations, {ICLR} 2020,
               Addis Ababa, Ethiopia, April 26-30, 2020},
  publisher = {OpenReview.net},
  year      = {2020},
  url       = {https://openreview.net/forum?id=rygGQyrFvH},
  bibsource = {dblp computer science bibliography, https://dblp.org}
}

@inproceedings{AMetal-WhenNotToTrust,
  author    = {Alex Mallen and
               Akari Asai and
               Victor Zhong and
               Rajarshi Das and
               Daniel Khashabi and
               Hannaneh Hajishirzi},
  editor    = {Anna Rogers and
               Jordan L. Boyd{-}Graber and
               Naoaki Okazaki},
  title     = {When Not to Trust Language Models: Investigating Effectiveness of
               Parametric and Non-Parametric Memories},
  booktitle = {Proceedings of the 61st Annual Meeting of the Association for Computational
               Linguistics (Volume 1: Long Papers), {ACL} 2023, Toronto, Canada,
               July 9-14, 2023},
  pages     = {9802--9822},
  publisher = {Association for Computational Linguistics},
  year      = {2023},
  url       = {https://doi.org/10.18653/v1/2023.acl-long.546},
  doi       = {10.18653/V1/2023.ACL-LONG.546},
  bibsource = {dblp computer science bibliography, https://dblp.org}
}

@inproceedings{Jessedodgeetal-CaseStudyonCCCC,
  author    = {Jesse Dodge and
               Maarten Sap and
               Ana Marasovic and
               William Agnew and
               Gabriel Ilharco and
               Dirk Groeneveld and
               Margaret Mitchell and
               Matt Gardner},
  editor    = {Marie{-}Francine Moens and
               Xuanjing Huang and
               Lucia Specia and
               Scott Wen{-}tau Yih},
  title     = {Documenting Large Webtext Corpora: {A} Case Study on the Colossal
               Clean Crawled Corpus},
  booktitle = {Proceedings of the 2021 Conference on Empirical Methods in Natural
               Language Processing, {EMNLP} 2021, Virtual Event / Punta Cana, Dominican
               Republic, 7-11 November, 2021},
  pages     = {1286--1305},
  publisher = {Association for Computational Linguistics},
  year      = {2021},
  url       = {https://doi.org/10.18653/v1/2021.emnlp-main.98},
  doi       = {10.18653/V1/2021.EMNLP-MAIN.98},
  bibsource = {dblp computer science bibliography, https://dblp.org}
}

@inproceedings{EBender-dangersofStochasticParrots,
  author    = {Emily M. Bender and
               Timnit Gebru and
               Angelina McMillan{-}Major and
               Shmargaret Shmitchell},
  editor    = {Madeleine Clare Elish and
               William Isaac and
               Richard S. Zemel},
  title     = {On the Dangers of Stochastic Parrots: Can Language Models Be Too Big?},
  booktitle = {FAccT '21: 2021 {ACM} Conference on Fairness, Accountability, and
               Transparency, Virtual Event / Toronto, Canada, March 3-10, 2021},
  pages     = {610--623},
  publisher = {{ACM}},
  year      = {2021},
  url       = {https://doi.org/10.1145/3442188.3445922},
  doi       = {10.1145/3442188.3445922},
  bibsource = {dblp computer science bibliography, https://dblp.org}
}

@inproceedings{MichalPetal-ReviewofChallengeswithMassive,
  author    = {Michal Perelkiewicz and
               Rafal Poswiata},
  editor    = {Leszek Rutkowski and
               Rafal Scherer and
               Marcin Korytkowski and
               Witold Pedrycz and
               Ryszard Tadeusiewicz and
               Jacek M. Zurada},
  title     = {A Review of the Challenges with Massive Web-Mined Corpora Used in
               Large Language Models Pre-training},
  booktitle = {Artificial Intelligence and Soft Computing - 23rd International Conference,
               {ICAISC} 2024, Zakopane, Poland, June 16-20, 2024, Proceedings, Part
               {III}},
  series    = {Lecture Notes in Computer Science},
  volume    = {15166},
  pages     = {153--163},
  publisher = {Springer},
  year      = {2024},
  url       = {https://doi.org/10.1007/978-3-031-81596-6\_14},
  doi       = {10.1007/978-3-031-81596-6\_14},
  bibsource = {dblp computer science bibliography, https://dblp.org}
}

@misc{bachmann2025pitfallsnexttokenprediction,
  title         = {The pitfalls of next-token prediction},
  author        = {Gregor Bachmann and Vaishnavh Nagarajan},
  year          = {2025},
  eprint        = {2403.06963},
  archiveprefix = {arXiv},
  primaryclass  = {cs.CL},
  url           = {https://arxiv.org/abs/2403.06963}
}

@inproceedings{HuangFMFFGYZZWW25,
  author    = {Lei Huang and
               Xiaocheng Feng and
               Weitao Ma and
               Yuchun Fan and
               Xiachong Feng and
               Yuxuan Gu and
               Yangfan Ye and
               Liang Zhao and
               Weihong Zhong and
               Baoxin Wang and
               Dayong Wu and
               Guoping Hu and
               Lingpeng Kong and
               Tong Xiao and
               Ting Liu and
               Bing Qin},
  editor    = {Wanxiang Che and
               Joyce Nabende and
               Ekaterina Shutova and
               Mohammad Taher Pilehvar},
  title     = {Alleviating Hallucinations from Knowledge Misalignment in Large Language
               Models via Selective Abstention Learning},
  booktitle = {Proceedings of the 63rd Annual Meeting of the Association for Computational
               Linguistics (Volume 1: Long Papers), {ACL} 2025, Vienna, Austria,
               July 27 - August 1, 2025},
  pages     = {24564--24579},
  publisher = {Association for Computational Linguistics},
  year      = {2025},
  url       = {https://aclanthology.org/2025.acl-long.1199/},
  bibsource = {dblp computer science bibliography, https://dblp.org}
}

@inproceedings{WelleckKRDCW20,
  author    = {Sean Welleck and
               Ilia Kulikov and
               Stephen Roller and
               Emily Dinan and
               Kyunghyun Cho and
               Jason Weston},
  title     = {Neural Text Generation With Unlikelihood Training},
  booktitle = {8th International Conference on Learning Representations, {ICLR} 2020,
               Addis Ababa, Ethiopia, April 26-30, 2020},
  publisher = {OpenReview.net},
  year      = {2020},
  url       = {https://openreview.net/forum?id=SJeYe0NtvH},
  bibsource = {dblp computer science bibliography, https://dblp.org}
}

@misc{lee2023factualityenhancedlanguagemodels,
  title         = {Factuality Enhanced Language Models for Open-Ended Text Generation},
  author        = {Nayeon Lee and Wei Ping and Peng Xu and Mostofa Patwary and Pascale Fung and Mohammad Shoeybi and Bryan Catanzaro},
  year          = {2023},
  eprint        = {2206.04624},
  archiveprefix = {arXiv},
  primaryclass  = {cs.CL},
  url           = {https://arxiv.org/abs/2206.04624}
}

@inproceedings{LiHFLEHZL23,
  author    = {Xiang Lisa Li and
               Ari Holtzman and
               Daniel Fried and
               Percy Liang and
               Jason Eisner and
               Tatsunori Hashimoto and
               Luke Zettlemoyer and
               Mike Lewis},
  editor    = {Anna Rogers and
               Jordan L. Boyd{-}Graber and
               Naoaki Okazaki},
  title     = {Contrastive Decoding: Open-ended Text Generation as Optimization},
  booktitle = {Proceedings of the 61st Annual Meeting of the Association for Computational
               Linguistics (Volume 1: Long Papers), {ACL} 2023, Toronto, Canada,
               July 9-14, 2023},
  pages     = {12286--12312},
  publisher = {Association for Computational Linguistics},
  year      = {2023},
  url       = {https://doi.org/10.18653/v1/2023.acl-long.687},
  doi       = {10.18653/V1/2023.ACL-LONG.687},
  bibsource = {dblp computer science bibliography, https://dblp.org}
}

@book{polyanskiy2025information,
  title     = {Information theory: From coding to learning},
  author    = {Polyanskiy, Yury and Wu, Yihong},
  year      = {2025},
  publisher = {Cambridge university press}
}

@article{SuzukiHTW-2025-HallucinationsAreInevitableButStatisticallyNegligible,
  author     = {Atsushi Suzuki and
                Yulan He and
                Feng Tian and
                Zhongyuan Wang},
  title      = {Hallucinations are inevitable but statistically negligible},
  journal    = {CoRR},
  volume     = {abs/2502.12187},
  year       = {2025},
  url        = {https://doi.org/10.48550/arXiv.2502.12187},
  doi        = {10.48550/ARXIV.2502.12187},
  eprinttype = {arXiv},
  eprint     = {2502.12187},
  bibsource  = {dblp computer science bibliography, https://dblp.org}
}

@inproceedings{KalavasisMV25-On-the-limits-of-language-generation,
  author    = {Alkis Kalavasis and
               Anay Mehrotra and
               Grigoris Velegkas},
  editor    = {Michal Kouck{\'{y}} and
               Nikhil Bansal},
  title     = {On the Limits of Language Generation: Trade-Offs between Hallucination
               and Mode-Collapse},
  booktitle = {Proceedings of the 57th Annual {ACM} Symposium on Theory of Computing,
               {STOC} 2025, Prague, Czechia, June 23-27, 2025},
  pages     = {1732--1743},
  publisher = {{ACM}},
  year      = {2025},
  url       = {https://doi.org/10.1145/3717823.3718108},
  doi       = {10.1145/3717823.3718108},
  bibsource = {dblp computer science bibliography, https://dblp.org}
}

@inproceedings{CharikarP25-Exploring-facets-of-language-generation-in-the-limit,
  author    = {Moses Charikar and
               Chirag Pabbaraju},
  editor    = {Nika Haghtalab and
               Ankur Moitra},
  title     = {Exploring Facets of Language Generation in the Limit},
  booktitle = {The Thirty Eighth Annual Conference on Learning Theory, 30-4 July
               2025, Lyon, France},
  series    = {Proceedings of Machine Learning Research},
  volume    = {291},
  pages     = {854--887},
  publisher = {{PMLR}},
  year      = {2025},
  url       = {https://proceedings.mlr.press/v291/charikar25a.html},
  bibsource = {dblp computer science bibliography, https://dblp.org}
}

@article{Mohsin25-On-the-Fundamental-Limits-of-LLMs-at-Scale,
  title   = {On the Fundamental Limits of LLMs at Scale},
  author  = {Mohsin, Muhammad Ahmed and Umer, Muhammad and Bilal, Ahsan and Memon, Zeeshan and Qadir, Muhammad Ibtsaam and Bhattacharya, Sagnik and Rizwan, Hassan and Gorle, Abhiram R and Kazmi, Maahe Zehra and Mohsin, Ayesha and others},
  journal = {arXiv preprint arXiv:2511.12869},
  year    = {2025}
}

@article{PanWL25-UnderstandingLLMBehaviorsViaCompression,
  author     = {Zhixuan Pan and
                Shaowen Wang and
                Jian Li},
  title      = {Understanding {LLM} Behaviors via Compression: Data Generation, Knowledge
                Acquisition and Scaling Laws},
  journal    = {CoRR},
  volume     = {abs/2504.09597},
  year       = {2025},
  url        = {https://doi.org/10.48550/arXiv.2504.09597},
  doi        = {10.48550/ARXIV.2504.09597},
  eprinttype = {arXiv},
  eprint     = {2504.09597},
  bibsource  = {dblp computer science bibliography, https://dblp.org}
}

@misc{ChlonKC25-PredictableCompressionFailures,
      title={Predictable Compression Failures: Order Sensitivity and Information Budgeting for Evidence-Grounded Binary Adjudication}, 
      author={Leon Chlon and Ahmed Karim and Maggie Chlon and MarcAntonio Awada},
      year={2026},
      eprint={2509.11208},
      archivePrefix={arXiv},
      primaryClass={stat.ML},
      url={https://arxiv.org/abs/2509.11208}, 
}

@article{Xu25-HallucinationIsInevitableForLLMsWithTheOpenWorldAssumption,
  title   = {Hallucination is Inevitable for LLMs with the Open World Assumption},
  author  = {Xu, Bowen},
  journal = {arXiv preprint arXiv:2510.05116},
  year    = {2025}
}

@article{Kim25-HallucinationAsAnInevitableByproduct,
  title     = {Hallucination as an Inevitable Byproduct of Intelligence in Large Language Models (A Theoretical Examination of the Compression--Generalization--Hallucination Triad)},
  author    = {Kim, Myung},
  publisher = {OSF},
  year      = {2025}
}

@article{Kim25-HallucinationInformedIntelligence,
  title     = {Hallucination-Informed Intelligence: The Limits of Lossless Abstraction in Large Language Models},
  author    = {Kim, Myung Ho},
  publisher = {OSF},
  year      = {2025}
}

@article{BanerjeeAS24-LLMsWillAlwaysHallucinate,
  author     = {Sourav Banerjee and
                Ayushi Agarwal and
                Saloni Singla},
  title      = {LLMs Will Always Hallucinate, and We Need to Live With This},
  journal    = {CoRR},
  volume     = {abs/2409.05746},
  year       = {2024},
  url        = {https://doi.org/10.48550/arXiv.2409.05746},
  doi        = {10.48550/ARXIV.2409.05746},
  eprinttype = {arXiv},
  eprint     = {2409.05746},
  bibsource  = {dblp computer science bibliography, https://dblp.org}
}

@article{ShiWDGY25-HallucinationAsAComputationalBoundary,
  author     = {Quan Shi and
                Wang Xi and
                Zenghui Ding and
                Jianqing Gao and
                Xianjun Yang},
  title      = {Hallucination as a Computational Boundary: {A} Hierarchy of Inevitability
                and the Oracle Escape},
  journal    = {CoRR},
  volume     = {abs/2508.07334},
  year       = {2025},
  url        = {https://doi.org/10.48550/arXiv.2508.07334},
  doi        = {10.48550/ARXIV.2508.07334},
  eprinttype = {arXiv},
  eprint     = {2508.07334},
  bibsource  = {dblp computer science bibliography, https://dblp.org}
}

@misc{Gumaan25TheoreticalFoundationsMitigationHallucination,
  title         = {Theoretical Foundations and Mitigation of Hallucination in Large Language Models},
  author        = {Esmail Gumaan},
  year          = {2025},
  eprint        = {2507.22915},
  archiveprefix = {arXiv},
  primaryclass  = {cs.CL},
  url           = {https://arxiv.org/abs/2507.22915}
}

@misc{Karpowicz25FundamentalImpossibility,
  title         = {On the Fundamental Impossibility of Hallucination Control in Large Language Models},
  author        = {Michał P. Karpowicz},
  year          = {2025},
  eprint        = {2506.06382},
  archiveprefix = {arXiv},
  primaryclass  = {stat.ML},
  url           = {https://arxiv.org/abs/2506.06382}
}

@inproceedings{WuGS25-NoFreeLunch,
  author    = {Changlong Wu and
               Ananth Grama and
               Wojciech Szpankowski},
  title     = {No Free Lunch: Fundamental Limits of Learning Non-Hallucinating Generative
               Models},
  booktitle = {The Thirteenth International Conference on Learning Representations,
               {ICLR} 2025, Singapore, April 24-28, 2025},
  publisher = {OpenReview.net},
  year      = {2025},
  url       = {https://openreview.net/forum?id=OwNoTs2r8e},
  bibsource = {dblp computer science bibliography, https://dblp.org}
}

@article{WenYFXTHW25-KnowYourLimits,
  author    = {Bingbing Wen and
               Jihan Yao and
               Shangbin Feng and
               Chenjun Xu and
               Yulia Tsvetkov and
               Bill Howe and
               Lucy Lu Wang},
  title     = {Know Your Limits: {A} Survey of Abstention in Large Language Models},
  journal   = {Trans. Assoc. Comput. Linguistics},
  volume    = {13},
  pages     = {529--556},
  year      = {2025},
  url       = {https://doi.org/10.1162/tacl\_a\_00754},
  doi       = {10.1162/TACL\_A\_00754},
  bibsource = {dblp computer science bibliography, https://dblp.org}
}

@inproceedings{AllenZhuL25-Physics33,
  author       = {Zeyuan Allen{-}Zhu and
                  Yuanzhi Li},
  title        = {Physics of Language Models: Part 3.3, Knowledge Capacity Scaling Laws},
  booktitle    = {The Thirteenth International Conference on Learning Representations,
                  {ICLR} 2025, Singapore, April 24-28, 2025},
  publisher    = {OpenReview.net},
  year         = {2025},
  url          = {https://openreview.net/forum?id=FxNNiUgtfa},
  bibsource    = {dblp computer science bibliography, https://dblp.org}
}

@inproceedings{Brahman0BDPRWDC24-TheArtOfSayingNo,
  author    = {Faeze Brahman and
               Sachin Kumar and
               Vidhisha Balachandran and
               Pradeep Dasigi and
               Valentina Pyatkin and
               Abhilasha Ravichander and
               Sarah Wiegreffe and
               Nouha Dziri and
               Khyathi Raghavi Chandu and
               Jack Hessel and
               Yulia Tsvetkov and
               Noah A. Smith and
               Yejin Choi and
               Hanna Hajishirzi},
  editor    = {Amir Globersons and
               Lester Mackey and
               Danielle Belgrave and
               Angela Fan and
               Ulrich Paquet and
               Jakub M. Tomczak and
               Cheng Zhang},
  title     = {The Art of Saying No: Contextual Noncompliance in Language Models},
  booktitle = {Advances in Neural Information Processing Systems 38: Annual Conference
               on Neural Information Processing Systems 2024, NeurIPS 2024, Vancouver,
               BC, Canada, December 10 - 15, 2024},
  year      = {2024},
  url       = {http://papers.nips.cc/paper\_files/paper/2024/hash/58e79894267cf72c66202228ad9c6057-Abstract-Datasets\_and\_Benchmarks\_Track.html},
  bibsource = {dblp computer science bibliography, https://dblp.org}
}

@article{LiWCJWZYA23-ChainedFilter,
  author    = {Haoyu Li and
               Liuhui Wang and
               Qizhi Chen and
               Jianan Ji and
               Yuhan Wu and
               Yikai Zhao and
               Tong Yang and
               Aditya Akella},
  title     = {ChainedFilter: Combining Membership Filters by Chain Rule},
  journal   = {Proc. {ACM} Manag. Data},
  volume    = {1},
  number    = {4},
  pages     = {234:1--234:27},
  year      = {2023},
  url       = {https://doi.org/10.1145/3626721},
  doi       = {10.1145/3626721},
  bibsource = {dblp computer science bibliography, https://dblp.org}
}

@article{Rissanen-78-MDL1,
  title     = {Modeling by shortest data description},
  author    = {Rissanen, Jorma},
  journal   = {Automatica},
  volume    = {14},
  number    = {5},
  pages     = {465--471},
  year      = {1978},
  publisher = {Elsevier}
}

@book{Grunwald-07-MDL3,
  title     = {The minimum description length principle},
  author    = {Gr{"u}nwald, Peter D},
  year      = {2007},
  publisher = {MIT press}
}

@article{GrunwaldR-19-MDL4,
  title     = {Minimum description length revisited},
  author    = {Gr{"u}nwald, Peter and Roos, Teemu},
  journal   = {International journal of mathematics for industry},
  volume    = {11},
  number    = {01},
  pages     = {1930001},
  year      = {2019},
  publisher = {World Scientific}
}

@inproceedings{McAllester-99-PACBayes1,
  title     = {PAC-Bayesian model averaging},
  author    = {McAllester, David A},
  booktitle = {Proceedings of the twelfth annual conference on Computational learning theory},
  pages     = {164--170},
  year      = {1999}
}

@article{McAllester-03-PACBayes2,
  title     = {PAC-Bayesian stochastic model selection},
  author    = {McAllester, David A},
  journal   = {Machine Learning},
  volume    = {51},
  number    = {1},
  pages     = {5--21},
  year      = {2003},
  publisher = {Springer}
}

@article{Catoni-07-PACBayes3,
  title   = {PAC-Bayesian supervised classification: the thermodynamics of statistical learning},
  author  = {Catoni, Olivier},
  journal = {arXiv preprint arXiv:0712.0248},
  year    = {2007}
}

@article{Alquier-21-PACBayes4,
  title   = {User-friendly introduction to PAC-Bayes bounds},
  author  = {Alquier, Pierre},
  journal = {arXiv preprint arXiv:2110.11216},
  year    = {2021}
}

@article{XuR-17-MIgeneralization1,
  title   = {Information-theoretic analysis of generalization capability of learning algorithms},
  author  = {Xu, Aolin and Raginsky, Maxim},
  journal = {Advances in neural information processing systems},
  volume  = {30},
  year    = {2017}
}

@article{BuZV-20-MIgeneralization3,
  author   = {Bu, Yuheng and Zou, Shaofeng and Veeravalli, Venugopal V.},
  journal  = {IEEE Journal on Selected Areas in Information Theory},
  title    = {Tightening Mutual Information-Based Bounds on Generalization Error},
  year     = {2020},
  volume   = {1},
  number   = {1},
  pages    = {121-130},
  doi      = {10.1109/JSAIT.2020.2991139}
}

@inproceedings{SteinkeZ-20-MIgeneralization4,
  title        = {Reasoning about generalization via conditional mutual information},
  author       = {Steinke, Thomas and Zakynthinou, Lydia},
  booktitle    = {Conference on Learning Theory},
  pages        = {3437--3452},
  year         = {2020},
  organization = {PMLR}
}

@inproceedings{BlumL-03-generalizationConnection1,
  title        = {Pac-mdl bounds},
  author       = {Blum, Avrim and Langford, John},
  booktitle    = {Learning Theory and Kernel Machines: 16th Annual Conference on Learning Theory and 7th Kernel Workshop, COLT/Kernel 2003, Washington, DC, USA, August 24-27, 2003. Proceedings},
  pages        = {344--357},
  year         = {2003},
  organization = {Springer}
}

@article{AchilleS-18-generalizationConnection2,
  title   = {Emergence of invariance and disentanglement in deep representations},
  author  = {Achille, Alessandro and Soatto, Stefano},
  journal = {Journal of Machine Learning Research},
  volume  = {19},
  number  = {50},
  pages   = {1--34},
  year    = {2018}
}

@article{HellstromDGR-25-generalizationConnection3,
  title     = {Generalization bounds: Perspectives from information theory and PAC-Bayes},
  author    = {Hellstr{"o}m, Fredrik and Durisi, Giuseppe and Guedj, Benjamin and Raginsky, Maxim},
  journal   = {Foundations and Trends in Machine Learning},
  volume    = {18},
  number    = {1},
  pages     = {1--223},
  year      = {2025},
  publisher = {Emerald Publishing Limited}
}

@inproceedings{GrunwaldSZ-21-generalizationConnection4,
  title        = {Pac-bayes, mac-bayes and conditional mutual information: Fast rate bounds that handle general vc classes},
  author       = {Grunwald, Peter and Steinke, Thomas and Zakynthinou, Lydia},
  booktitle    = {Conference on Learning Theory},
  pages        = {2217--2247},
  year         = {2021},
  organization = {PMLR}
}

@inproceedings{Feldman-20-ShortTaleLongTail,
  title     = {Does learning require memorization? a short tale about a long tail},
  author    = {Feldman, Vitaly},
  booktitle = {Proceedings of the 52nd annual ACM SIGACT symposium on theory of computing},
  pages     = {954--959},
  year      = {2020}
}

@article{GongLTLT-25-twoStage,
  title   = {Disentangling feature structure: A mathematically provable two-stage training dynamics in transformers},
  author  = {Gong, Zixuan and Li, Shijia and Liu, Yong and Teng, Jiaye},
  journal = {arXiv preprint arXiv:2502.20681},
  year    = {2025}
}

@inproceedings{DyerKBS-16-RNNGrammar,
  title     = {Recurrent Neural Network Grammars},
  author    = {Dyer, Chris  and
               Kuncoro, Adhiguna  and
               Ballesteros, Miguel  and
               Smith, Noah A.},
  editor    = {Knight, Kevin  and
               Nenkova, Ani  and
               Rambow, Owen},
  booktitle = {Proceedings of the 2016 Conference of the North {A}merican Chapter of the Association for Computational Linguistics: Human Language Technologies},
  month     = jun,
  year      = {2016},
  address   = {San Diego, California},
  publisher = {Association for Computational Linguistics},
  url       = {https://aclanthology.org/N16-1024/},
  doi       = {10.18653/v1/N16-1024},
  pages     = {199--209}
}

@inproceedings{KusnerPH-17-GrammarVAE,
  title        = {Grammar variational autoencoder},
  author       = {Kusner, Matt J and Paige, Brooks and Hern{\'a}ndez-Lobato, Jos{\'e} Miguel},
  booktitle    = {International conference on machine learning},
  pages        = {1945--1954},
  year         = {2017},
  organization = {PMLR}
}

@article{WilliamsBeer-10-MultivariateInformation,
  title   = {Nonnegative decomposition of multivariate information},
  author  = {Williams, Paul L and Beer, Randall D},
  journal = {arXiv preprint arXiv:1004.2515},
  year    = {2010}
}

@book{csiszar2011information,
  title     = {Information theory: coding theorems for discrete memoryless systems},
  author    = {Csisz{\'a}r, Imre and K{\"o}rner, J{\'a}nos},
  year      = {2011},
  publisher = {Cambridge University Press}
}

@article{HuangYZCL-25-EntropyMemorizationLaw,
  author     = {Yizhan Huang and
                Zhe Yang and
                Meifang Chen and
                Jianping Zhang and
                Michael R. Lyu},
  title      = {Entropy-Memorization Law: Evaluating Memorization Difficulty of Data
                in LLMs},
  journal    = {CoRR},
  volume     = {abs/2507.06056},
  year       = {2025},
  url        = {https://doi.org/10.48550/arXiv.2507.06056},
  doi        = {10.48550/ARXIV.2507.06056},
  eprinttype = {arXiv},
  eprint     = {2507.06056},
  bibsource  = {dblp computer science bibliography, https://dblp.org}
}

@inproceedings{ZhuJWMSBLWH25-RefusalAwareInstructionTuning,
  author    = {Runchuan Zhu and
               Xinke Jiang and
               Jiang Wu and
               Zhipeng Ma and
               Jiahe Song and
               Fengshuo Bai and
               Dahua Lin and
               Lijun Wu and
               Conghui He},
  editor    = {Luis Chiruzzo and
               Alan Ritter and
               Lu Wang},
  title     = {{GRAIT:} Gradient-Driven Refusal-Aware Instruction Tuning for Effective
               Hallucination Mitigation},
  booktitle = {Findings of the Association for Computational Linguistics: {NAACL}
               2025, Albuquerque, New Mexico, USA, April 29 - May 4, 2025},
  pages     = {4006--4021},
  publisher = {Association for Computational Linguistics},
  year      = {2025},
  url       = {https://doi.org/10.18653/v1/2025.findings-naacl.223},
  doi       = {10.18653/V1/2025.FINDINGS-NAACL.223},
  bibsource = {dblp computer science bibliography, https://dblp.org}
}

@inproceedings{Allen-ZhuL24-KnowledgeStorageAndExtraction,
  author       = {Zeyuan Allen{-}Zhu and
                  Yuanzhi Li},
  title        = {Physics of Language Models: Part 3.1, Knowledge Storage and Extraction},
  booktitle    = {Forty-first International Conference on Machine Learning, {ICML} 2024,
                  Vienna, Austria, July 21-27, 2024},
  publisher    = {OpenReview.net},
  year         = {2024},
  url          = {https://openreview.net/forum?id=5x788rqbcj},
  bibsource    = {dblp computer science bibliography, https://dblp.org}
}

@article{KuszmaulLLZ-25-fingerprint,
  author       = {William Kuszmaul and
                  Jingxun Liang and
                  Renfei Zhou},
  title        = {Fingerprint Filters Are Optimal},
  journal      = {CoRR},
  volume       = {abs/2510.18129},
  year         = {2025},
  url          = {https://doi.org/10.48550/arXiv.2510.18129},
  doi          = {10.48550/ARXIV.2510.18129},
  eprinttype    = {arXiv},
  eprint       = {2510.18129},
  bibsource    = {dblp computer science bibliography, https://dblp.org}
}

@inproceedings{KuszmaulW-24-spaceLowerBoundsDynamic,
  author       = {William Kuszmaul and
                  Stefan Walzer},
  editor       = {Bojan Mohar and
                  Igor Shinkar and
                  Ryan O'Donnell},
  title        = {Space Lower Bounds for Dynamic Filters and Value-Dynamic Retrieval},
  booktitle    = {Proceedings of the 56th Annual {ACM} Symposium on Theory of Computing,
                  {STOC} 2024, Vancouver, BC, Canada, June 24-28, 2024},
  pages        = {1153--1164},
  publisher    = {{ACM}},
  year         = {2024},
  url          = {https://doi.org/10.1145/3618260.3649649},
  doi          = {10.1145/3618260.3649649},
  bibsource    = {dblp computer science bibliography, https://dblp.org}
}

@article{Perez-OrtizRSS21-nonvacuous3,
  author       = {Mar{\'{\i}}a P{\'{e}}rez{-}Ortiz and
                  Omar Rivasplata and
                  John Shawe{-}Taylor and
                  Csaba Szepesv{\'{a}}ri},
  title        = {Tighter Risk Certificates for Neural Networks},
  journal      = {J. Mach. Learn. Res.},
  volume       = {22},
  pages        = {227:1--227:40},
  year         = {2021},
  url          = {https://jmlr.org/papers/v22/20-879.html},
  bibsource    = {dblp computer science bibliography, https://dblp.org}
}

@inproceedings{DziugaiteR17-nonvacuous1,
  author       = {Gintare Karolina Dziugaite and
                  Daniel M. Roy},
  editor       = {Gal Elidan and
                  Kristian Kersting and
                  Alexander Ihler},
  title        = {Computing Nonvacuous Generalization Bounds for Deep (Stochastic) Neural
                  Networks with Many More Parameters than Training Data},
  booktitle    = {Proceedings of the Thirty-Third Conference on Uncertainty in Artificial
                  Intelligence, {UAI} 2017, Sydney, Australia, August 11-15, 2017},
  publisher    = {{AUAI} Press},
  year         = {2017},
  url          = {http://auai.org/uai2017/proceedings/papers/173.pdf},
  bibsource    = {dblp computer science bibliography, https://dblp.org}
}

@inproceedings{DziugaiteR18-nonvacuous2,
  author       = {Gintare Karolina Dziugaite and
                  Daniel M. Roy},
  editor       = {Samy Bengio and
                  Hanna M. Wallach and
                  Hugo Larochelle and
                  Kristen Grauman and
                  Nicol{\`{o}} Cesa{-}Bianchi and
                  Roman Garnett},
  title        = {Data-dependent PAC-Bayes priors via differential privacy},
  booktitle    = {Advances in Neural Information Processing Systems 31: Annual Conference
                  on Neural Information Processing Systems 2018, NeurIPS 2018, December
                  3-8, 2018, Montr{\'{e}}al, Canada},
  pages        = {8440--8450},
  year         = {2018},
  url          = {https://proceedings.neurips.cc/paper/2018/hash/9a0ee0a9e7a42d2d69b8f86b3a0756b1-Abstract.html},
  bibsource    = {dblp computer science bibliography, https://dblp.org}
}

@inproceedings{LotfiFKRGW24-nonvacuous4,
  author       = {Sanae Lotfi and
                  Marc Anton Finzi and
                  Yilun Kuang and
                  Tim G. J. Rudner and
                  Micah Goldblum and
                  Andrew Gordon Wilson},
  title        = {Non-Vacuous Generalization Bounds for Large Language Models},
  booktitle    = {Forty-first International Conference on Machine Learning, {ICML} 2024,
                  Vienna, Austria, July 21-27, 2024},
  publisher    = {OpenReview.net},
  year         = {2024},
  url          = {https://openreview.net/forum?id=6Kg9p8URlj},
  bibsource    = {dblp computer science bibliography, https://dblp.org}
}

@article{Anthropic22-LLM-know-what-they-know,
  author       = {Saurav Kadavath and
                  Tom Conerly and
                  Amanda Askell and
                  Tom Henighan and
                  Dawn Drain and
                  Ethan Perez and
                  Nicholas Schiefer and
                  Zac Hatfield{-}Dodds and
                  Nova DasSarma and
                  Eli Tran{-}Johnson and
                  Scott Johnston and
                  Sheer El Showk and
                  Andy Jones and
                  Nelson Elhage and
                  Tristan Hume and
                  Anna Chen and
                  Yuntao Bai and
                  Sam Bowman and
                  Stanislav Fort and
                  Deep Ganguli and
                  Danny Hernandez and
                  Josh Jacobson and
                  Jackson Kernion and
                  Shauna Kravec and
                  Liane Lovitt and
                  Kamal Ndousse and
                  Catherine Olsson and
                  Sam Ringer and
                  Dario Amodei and
                  Tom Brown and
                  Jack Clark and
                  Nicholas Joseph and
                  Ben Mann and
                  Sam McCandlish and
                  Chris Olah and
                  Jared Kaplan},
  title        = {Language Models (Mostly) Know What They Know},
  journal      = {CoRR},
  volume       = {abs/2207.05221},
  year         = {2022},
  url          = {https://doi.org/10.48550/arXiv.2207.05221},
  doi          = {10.48550/ARXIV.2207.05221},
  eprinttype    = {arXiv},
  eprint       = {2207.05221},
  bibsource    = {dblp computer science bibliography, https://dblp.org}
}

@inproceedings{ChengSL+24-AI-know-what-they-dont-know,
  author       = {Qinyuan Cheng and
                  Tianxiang Sun and
                  Xiangyang Liu and
                  Wenwei Zhang and
                  Zhangyue Yin and
                  Shimin Li and
                  Linyang Li and
                  Zhengfu He and
                  Kai Chen and
                  Xipeng Qiu},
  title        = {Can {AI} Assistants Know What They Don't Know?},
  booktitle    = {Forty-first International Conference on Machine Learning, {ICML} 2024,
                  Vienna, Austria, July 21-27, 2024},
  publisher    = {OpenReview.net},
  year         = {2024},
  url          = {https://openreview.net/forum?id=girxGkdECL},
  bibsource    = {dblp computer science bibliography, https://dblp.org}
}

@inproceedings{FeldmanZ20-LongTailviaInfluence,
  author       = {Vitaly Feldman and
                  Chiyuan Zhang},
  editor       = {Hugo Larochelle and
                  Marc'Aurelio Ranzato and
                  Raia Hadsell and
                  Maria{-}Florina Balcan and
                  Hsuan{-}Tien Lin},
  title        = {What Neural Networks Memorize and Why: Discovering the Long Tail via
                  Influence Estimation},
  booktitle    = {Advances in Neural Information Processing Systems 33: Annual Conference
                  on Neural Information Processing Systems 2020, NeurIPS 2020, December
                  6-12, 2020, virtual},
  year         = {2020},
  url          = {https://proceedings.neurips.cc/paper/2020/hash/1e14bfe2714193e7af5abc64ecbd6b46-Abstract.html},
  bibsource    = {dblp computer science bibliography, https://dblp.org}
}

@inproceedings{BrownBFST21-memorizationOfIrrelevant,
  author       = {Gavin Brown and
                  Mark Bun and
                  Vitaly Feldman and
                  Adam D. Smith and
                  Kunal Talwar},
  editor       = {Samir Khuller and
                  Virginia Vassilevska Williams},
  title        = {When is memorization of irrelevant training data necessary for high-accuracy
                  learning?},
  booktitle    = {{STOC} '21: 53rd Annual {ACM} {SIGACT} Symposium on Theory of Computing,
                  Virtual Event, Italy, June 21-25, 2021},
  pages        = {123--132},
  publisher    = {{ACM}},
  year         = {2021},
  url          = {https://doi.org/10.1145/3406325.3451131},
  doi          = {10.1145/3406325.3451131},
  bibsource    = {dblp computer science bibliography, https://dblp.org}
}

@inproceedings{FeldmanKL25-Tradeoff-memorization,
  author       = {Vitaly Feldman and
                  Guy Kornowski and
                  Xin Lyu},
  editor       = {Nika Haghtalab and
                  Ankur Moitra},
  title        = {Trade-offs in Data Memorization via Strong Data Processing Inequalities},
  booktitle    = {The Thirty Eighth Annual Conference on Learning Theory, 30-4 July
                  2025, Lyon, France},
  series       = {Proceedings of Machine Learning Research},
  volume       = {291},
  pages        = {1935--1973},
  publisher    = {{PMLR}},
  year         = {2025},
  url          = {https://proceedings.mlr.press/v291/feldman25a.html},
  bibsource    = {dblp computer science bibliography, https://dblp.org}
}

@inproceedings{ArpitJBKBKMFCBL17-Closer-look-memorization,
  author       = {Devansh Arpit and
                  Stanislaw Jastrzebski and
                  Nicolas Ballas and
                  David Krueger and
                  Emmanuel Bengio and
                  Maxinder S. Kanwal and
                  Tegan Maharaj and
                  Asja Fischer and
                  Aaron C. Courville and
                  Yoshua Bengio and
                  Simon Lacoste{-}Julien},
  editor       = {Doina Precup and
                  Yee Whye Teh},
  title        = {A Closer Look at Memorization in Deep Networks},
  booktitle    = {Proceedings of the 34th International Conference on Machine Learning,
                  {ICML} 2017, Sydney, NSW, Australia, 6-11 August 2017},
  series       = {Proceedings of Machine Learning Research},
  volume       = {70},
  pages        = {233--242},
  publisher    = {{PMLR}},
  year         = {2017},
  url          = {http://proceedings.mlr.press/v70/arpit17a.html},
  bibsource    = {dblp computer science bibliography, https://dblp.org}
}

@misc{qwen35blog,
    title = {Qwen3.5: Accelerating Productivity with Native Multimodal Agents},
    url = {https://qwen.ai/blog?id=qwen3.5},
    author = {Qwen Team},
    month = {February},
    year = {2026}
}

@inproceedings{HuSWALWWC22-LoRA,
  author       = {Edward J. Hu and
                  Yelong Shen and
                  Phillip Wallis and
                  Zeyuan Allen{-}Zhu and
                  Yuanzhi Li and
                  Shean Wang and
                  Lu Wang and
                  Weizhu Chen},
  title        = {LoRA: Low-Rank Adaptation of Large Language Models},
  booktitle    = {The Tenth International Conference on Learning Representations, {ICLR}
                  2022, Virtual Event, April 25-29, 2022},
  publisher    = {OpenReview.net},
  year         = {2022},
  url          = {https://openreview.net/forum?id=nZeVKeeFYf9},
  bibsource    = {dblp computer science bibliography, https://dblp.org}
}
